\documentclass[11pt]{article}

\usepackage[backend=biber,
            style=numeric-comp,
            bibstyle=authoryear,
            natbib=true,
            maxbibnames=99,
            minalphanames=3,
            uniquename=false,
            maxcitenames=2,
            sorting=ynt,
            giveninits=true,                        
            labeldateparts=true,
            dashed=false,
            sortcites=true]{biblatex}
\addbibresource{../bibliography/ref.bib}
\addbibresource{../bibliography/ref_gnn.bib}

\makeatletter
\input{numeric.bbx}
\makeatother

\newcommand{\citeay}[1]{\citeauthor{#1}~(\citeyear{#1})}

\usepackage[left=1.in, top=1in, bottom=1in, right=1.in]{geometry}
\usepackage{macro}
\usepackage[us,12hr]{datetime} %
\usepackage{graphicx, subcaption}
\usepackage{multirow, makecell}
\usepackage{algorithm}
\usepackage{algorithmic}
\usepackage[T1]{fontenc}
\usepackage{microtype}
\usepackage{sidecap}

\usepackage{amsmath,amsfonts,amssymb}
\usepackage{graphicx}
\usepackage{booktabs} %
\usepackage{enumitem}
\usepackage{hyperref}
\usepackage{cleveref}
\usepackage{wrapfig}
\usepackage{breakcites}

\usepackage{relsize}
\usepackage{authblk}
\usepackage[symbol]{footmisc}

\makeatletter
\def\@fnsymbol#1{\ensuremath{\ifcase#1\or *\or \dagger\or \ddagger\or
   \mathsection\or \mathparagraph\or \|\or **\or \dagger\dagger
   \or \ddagger\ddagger \else\@ctrerr\fi}}
\makeatother

\begin{document}

\title{Generalization in Graph Neural Networks: Improved PAC-Bayesian Bounds on Graph Diffusion}

\author[$*\dag$]{Haotian Ju}
\author[$*\dag$]{Dongyue Li}
\author[$*\ddag$]{Aneesh Sharma}
\author[$\dag$]{Hongyang R. Zhang\footnote{The paper is presented at Artificial Intelligence and Statistics (AISTATS), 2023. Email correspondence: \texttt{\{ju.h, li.dongyu, ho.zhang\}@northeastern.edu} and \texttt{aneesh@google.com}.}}
\affil[$\dag$]{\itshape Northeastern University, Boston}
\affil[$\ddag$]{\itshape Google, Mountain View}

\date{}
\maketitle

\begin{abstract}
Graph neural networks are widely used tools for graph prediction tasks. Motivated by their empirical performance, prior works have developed generalization bounds for graph neural networks, which scale with graph structures in terms of the maximum degree. In this paper, we present generalization bounds that instead scale with the largest singular value of the graph neural network's feature diffusion matrix. These bounds are numerically much smaller than prior bounds for real-world graphs. We also construct a lower bound of the generalization gap that matches our upper bound asymptotically. To achieve these results, we analyze a unified model that includes prior works' settings (i.e., convolutional and message-passing networks) and new settings (i.e., graph isomorphism networks). Our key idea is to measure the stability of graph neural networks against noise perturbations using Hessians. Empirically, we find that Hessian-based measurements correlate with the observed generalization gaps of graph neural networks accurately.  Optimizing noise stability properties for fine-tuning pretrained graph neural networks also improves test performance on several graph-level classification tasks.
\end{abstract}

\section{Introduction}

A central measure of success for a machine learning model is the ability to generalize well from training data to test data.
For linear and shallow models, the generalization gap between their training performance and test performance can be quantified via complexity notions such as the Vapnik–Chervonenkis dimension and Rademacher complexity.
However, formally explaining the empirical generalization performance of deep models remains a challenging problem and an active research area (see, e.g., recent textbook by \citet{hardt2021patterns}).
There are by now many studies for fully-connected and convolutional neural networks that provide an explanation for their superior empirical performance \cite{bartlett2017spectrally,neyshabur2018towards}.
Our work seeks to formally understand generalization in graph neural networks (GNN) \cite{scarselli2008graph}, which are commonly used for learning on graphs \cite{hamilton2017representation}.

As a concrete example for motivating the study of generalization performance, we consider the fine-tuning of pretrained graph neural networks \cite{hu2019strategies}.
Given a pretrained GNN learned on a diverse range of graphs, fine-tuning the pretrained GNN on a specific prediction task is a common approach for transfer learning on graphs.
An empirical problem with fine-tuning is that, on one hand, pretrained GNNs use lots of parameters to ensure representational power.
On the other hand, fine-tuning a large GNN would overfit the training data and suffer poor test performance without proper algorithmic intervention.
Thus, a better understanding of generalization in graph neural networks can help us identify the cause of overfitting and, consequently, inspire designing robust fine-tuning methods for graph neural networks.

A naive application of the generalization bounds from fully-connected feedforward networks \cite{bartlett2017spectrally,neyshabur2017pac} to GNNs would imply an extra term in the generalization bound that scales with $n^{l-1}$, where $n$ is the number of nodes in the graph, hence rendering the error bounds vacuous.
Besides, \citet{scarselli2018vapnik} shows that the VC dimension of GNN scales with $n$. Thus, although the VC dimension is a classical notion for deriving learning bounds, it is oblivious to the graph structure.
Recent works have taken a step towards addressing this issue with better error analysis. %
\citeay{verma2019stability} find that one-layer graph neural networks satisfy uniform stability properties \cite{verma2019stability}, following the work of \citet{hardt2016train}.
The generalization bound of \citet{verma2019stability} scales with the largest singular value of the graph diffusion matrix of the model.
However, their analysis only applies to a single layer and node prediction.
\citeay{garg2020generalization} analyze an $l$ layer message-passing neural network --- with $l-1$ graph diffusion layers and $1$ pooling layer --- for graph prediction tasks \cite{garg2020generalization}.
Their result scales with $d^{l-1}$, where $d$ is the maximum degree of the graph.
Subsequently, \citeay{liao2020pac} develop a tighter bound but still scales with $d^{l-1}$ \cite{liao2020pac}.
For both results, the graph's maximum degree is used to quantify the complexity of node aggregation in each diffusion step.

\begin{figure*}[t!]
    \centering
     \begin{subfigure}[b]{0.49\textwidth}
 		\centering
 		\includegraphics[width=0.9\textwidth]{./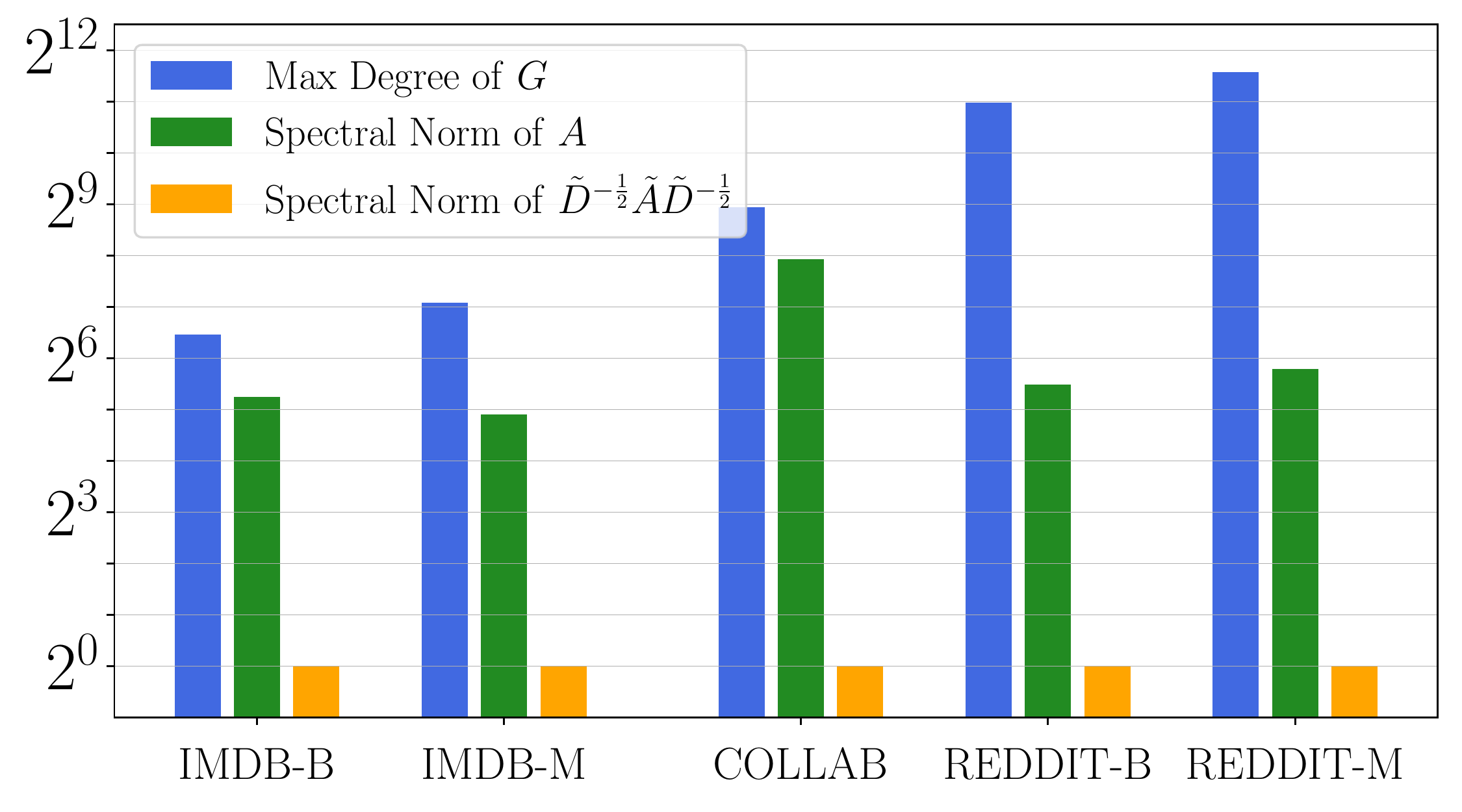}
        \caption{Spectral norm vs. max degree for five graphs}\label{fig_intro_gcn}
 	\end{subfigure}\hfill
    \begin{subfigure}[b]{0.49\textwidth}
        \centering
 		\includegraphics[width=0.9\textwidth]{./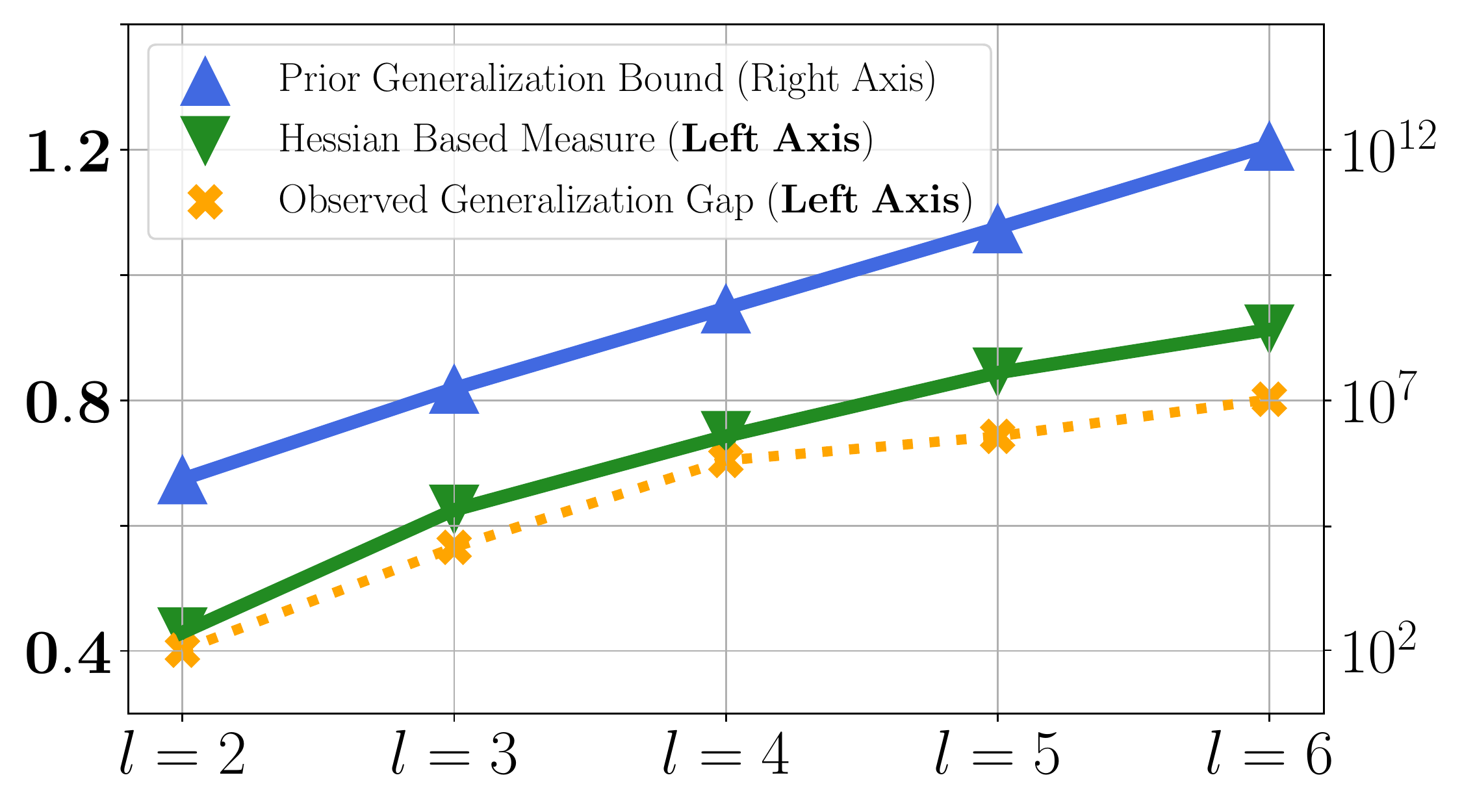}
        \caption{Comparing generalization bounds}\label{fig_intro_graph}
    \end{subfigure}
     \caption{The spectral norm bounds for graph diffusion matrices are orders of magnitude smaller than maximum degree bounds on real-world graphs; In Figure \ref{fig_intro_gcn}, we measure the spectral norm and the max degree for five graph datasets.
     In Figure \ref{fig_intro_graph}, the Hessian-based generalization measure (plotted in green, scaled according to the \emph{left axis}) matches the empirically observed generalization gaps of graph neural networks (plotted in yellow, scaled according to the \emph{left axis}). The blue line shows a uniform-convergence bound (scaled according to the \emph{right axis}) that is orders of magnitude larger than the observed gaps.}
     \label{fig_intro}
\end{figure*}

Our main contribution is to show generalization bounds for graph neural networks by reducing the max degree to the spectral norm of the graph diffusion matrix.
We achieve this by analyzing the stability of a graph neural network against noise injections.
To illustrate, denote an $l$-layer GNN by $f$.
By PAC-Bayesian analysis \cite{mcallester2013pac}, the generalization gap of $f$ will be small if $f$ remains stable against noise injections; otherwise, the generalization gap of $f$ will be large.
Empirically, quantifying the noise stability of $f$ via Lipschitz-continuity properties of its activation functions leads to nonvacuous bounds for feedforward networks that correlate with their observed generalization gaps \cite{arora2018stronger,ju2022robust,dziugaite2017computing,jiang2019fantastic}. %
Our theoretical analysis rigorously formalizes this with refined stability analysis of graph neural networks through Hessians, leading to tight generalization bounds on the graph diffusion matrix.

\medskip
\noindent\textbf{Our Contribution.}
The goal of this work is to improve the theoretical understanding of generalization in graph neural networks, and in that vein, we highlight two results below:

\begin{itemize}[leftmargin=15.0pt,topsep=2.5pt,itemsep=2.5pt] %
    \item First, we prove sharp generalization bounds for message-passing neural networks \cite{dai2016discriminative,gilmer2017neural,jin2018junction}, graph convolutional networks \cite{kipf2016semi}, and graph isomorphism networks \cite{xu2018powerful}.
    Our bounds scale with the spectral norm of $P_{_G}^{l-1}$ for an $l$-layer network, where $P_{_G}$ denotes a diffusion matrix on a graph $G$ and varies between different models (see Theorem \ref{thm_mpgnn} for the full statement).
    We then show a matching lower bound instance where the generalization gap scales with the spectral norm of $P_{_G}^{l-1}$ (see Theorem \ref{prop_lb}).

    \item Second, our stability analysis of graph neural networks provides a practical tool for measuring generalization. 
    Namely, we show that the trace of the loss Hessian matrix can measure the noise stability of GNN.
    The formal statement is given in Lemma \ref{lemma_trace_hess}, and our techniques, which include a uniform convergence of the Hessian matrix, may be of independent interest.
    We note that the proof applies to twice-differentiable and Lipschitz-continuous activations (e.g., tanh and sigmoid).
\end{itemize}
Taken together, these two results provide a sharp understanding of generalization in terms of the graph diffusion matrix for graph neural networks.
We note that the numerical value of our bounds in their dependence on the graph is much smaller than prior results \cite{garg2020generalization,liao2020pac}, as is clear from Figure \ref{fig_intro_gcn}.
Moreover, the same trend holds even after taking weight norms into account (see Figure \ref{fig_bound_measurement}, Section \ref{sec_compare}).
Further, the Hessian-based bounds (see Lemma \ref{lemma_trace_hess}, Section \ref{sec_proff_sketch}) are non-vacuous, matching the scale of empirically observed generalization gaps in Figure \ref{fig_intro_graph}.

Finally, motivated by the above analysis, we also present an algorithm that performs gradient updates on perturbed weight matrices of a graph neural network.
The key insight is that minimizing the average loss of multiple perturbed models  with independent noise injections is equivalent to regularizing $f$'s Hessian in expectation. We conduct experiments on several graph classification tasks with Molecular graphs that show the benefit of this algorithm in the fine-tuning setting.

\section{Related Work}\label{sec_related}

\textbf{Generalization Bounds:}
An article by \citeay{zhang2016understanding} finds that deep nets have enough parameters to memorize real images with random labels, yet they still generalize well if trained with true labels.
This article highlights the overparametrized nature of modern deep nets (see also a recent article by \citet{arora2021technical}), motivating the need for complexity measures beyond classical notions. 
In the case of two-layer ReLU networks, \citeay{neyshabur2018towards} show that (path) norm bounds better capture the ``effective number of parameters'' than VC dimension---which is the number of parameters for piecewise linear activations \cite{bartlett2019nearly}.

For multilayer networks, subsequent works have developed norm, and margin bounds, either via Rademacher complexities \cite{bartlett2017spectrally,golowich2018size,long2020generalization}, or PAC-Bayesian bounds \cite{neyshabur2017pac,arora2018stronger,li2021improved,ju2022robust}.
All of these bounds apply to the fine-tuning setting following the distance from the initialization perspective.
Our analysis approach builds on the work of \citeay{arora2018stronger} and \citeay{ju2022robust}. The latter work connects perturbed losses and Hessians for feedforward neural networks, with one limitation Hessians do not show any explicit dependence on the data.
This is a critical issue for GNN as we need to incorporate the graph structure in the generalization bound.
Our result instead shows an explicit dependence on the graph and applies to message-passing layers that involve additional nonlinear mappings.
We will compare our analysis approach and prior analysis in more detail when we present the proofs in Section \ref{sec_proff_sketch} (see Remark \ref{remark_tech}).

\medskip
\noindent\textbf{Graph Representation Learning:} Most contemporary studies of learning on graphs consider either node-level or graph-level prediction tasks.
Our result applies to graph prediction while permitting an extension to node prediction: see Remark \ref{remark_node} in Section \ref{sec_proff_sketch}.
Most graph neural networks follow an information diffusion mechanism on graphs \cite{scarselli2008graph}.
Early work takes inspiration from ConvNets and designs local convolution on graphs, e.g., spectral networks \cite{bruna2013spectral}, GCN \cite{kipf2016semi}, and GraphSAGE \cite{hamilton2017inductive} (among others).
Subsequent works have designed new architectures with graph attention \cite{velivckovic2017graph} and isomorphism testing \cite{xu2018powerful}.
\citeay{gilmer2017neural} synthesize several models into a framework called message-passing neural networks.
Besides, one could also leverage graph structure in the pooling layer (e.g., differentiable pooling and hierarchical pooling \cite{zhang2018end,ying2018hierarchical}). It is conceivable that one can incorporate the model complexity of these approaches into our analysis.
Recent work applies pretraining to large-scale graph datasets for learning graph representations \cite{hu2019strategies}.
Despite being an effective transfer learning approach, few works have examined the generalization of graph neural nets in the fine-tuning step.

Besides learning on graphs, GNNs are also used for combinatorial optimization \cite{selsam2018learning} and causal reasoning \cite{xu2019can}.
There is another approach for graph prediction using graph kernels \cite{vishwanathan2010graph} such as the graph neural tangent kernel \cite{du2019graph}.
Lastly, we remark that graph diffusion processes have been studied in earlier literature on social and information networks \cite{goel2014connectivity,goel2015note,zhang2019pruning}.
For further references about different applications of GNN, see review articles \cite{hamilton2017representation,errica2019fair,wu2020comprehensive,chami2020machine}.

\medskip
\noindent\textbf{Generalization in Graph Neural Networks:}
Recent work explores generalization by formalizing the role of the algorithm and the alignment between networks and tasks \cite{xu2020neural}.
\citet{esser2021learning} finds that transductive Rademacher complexity-based bound provides insights into the behavior of GNNs in the stochastic block model.
Besides, there are works about size generalization, which refer to performance degradation when models extrapolate to graphs of different sizes from the input \cite{selsam2018learning,yehudai2021local}.
It is conceivable that the new tools we have developed may be useful for studying extrapolation. %

\medskip
\noindent\textbf{Expressivity of Graph Neural Networks:}
The expressivity of GNN for graph classification can be related to graph isomorphism tests and has connections to one-dimensional Weisfeiler-Lehman testing of graph isomorphism \cite{morris2019weisfeiler,xu2018powerful}.
This implies limitations of GNN for expressing tasks such as counting cycles \cite{sato2019approximation,chen2020can,azizian2020expressive}.
The expressiveness view seems orthogonal to generalization, which instead concerns the sample efficiency of learning.
For further discussions and references, see a recent survey by \citet{jegelka2022theory}. %

\section{Sharp Generalization Bounds for Graph Neural Networks}\label{sec_theory}

We first introduce the problem setup for analyzing graph neural networks.
Then, we state our generalization bounds for graph neural networks and compare them with the prior art.
Lastly, we construct an example to argue that our bounds are tight.

\subsection{Problem setup}

Consider a graph-level prediction task.
Suppose we have $N$ examples in the training set; each example is an independent sample from a distribution denoted as $\cD$, which is jointly  supported on the feature space $\cX$ times the label space $\cY$.
In each example, we have an undirected graph denoted as $G = (V, E)$, which describes the connection between $n$ entities, represented by nodes in $V$.
For example, a node could represent a molecule, and an edge between two nodes is a bond between two molecules.
Each node also has a list of $d$ features. Denote all node features as an $n$ by $d$ matrix $X$.
For graph-level prediction tasks, the goal is to predict a graph label $y$ for every example. 
We will describe a few examples of such tasks later in Section \ref{sec_empirical}.

\medskip
\noindent\textbf{Message-passing neural networks (MPNN).} %
We study a model based on several prior works for graph-level prediction tasks \cite{dai2016discriminative,gilmer2017neural,garg2020generalization,liao2020pac}.
Let $l$ be the number of layers: the first $l-1$ layers are diffusion layers, and the last layer is a pooling layer.
Let $d_t$ denote the width of each layer for $t$ from $1$ up to $l$.
There are several nonlinear mappings in layer $t$, denoted as $\phi_{t}, \rho_{t}$, and $\psi_{t}$; further, they are all centered at zero.
There is a weight matrix $W^{(t)}$ of dimension $d_{t-1}$ by $d_{t}$ for transforming neighboring features, and another weight matrix $U^{(t)}$ of dimension $d$ by $d_t$ for transforming the anchor node feature.

For the first $l-1$ layers, we recursively compute the node embedding from the input features $H^{(0)} = X$:
{\begin{align}
    H^{(t)} = \phi_t\Bigbrace{X U^{(t)} + \rho_t\bigbrace{P_{_G} \psi_t(H^{(t-1)})} W^{(t)}}, \text{ for } t = 1,2,\dots,l. \label{eq_matrix_mpgnn}
\end{align}}%
For the last layer $l$, we aggregate the embedding of all nodes: let $\bm{1}_n$ be a vector with $n$ values of one:
{\begin{align}
    H^{(l)} &= \frac 1 n \bm{1}_n^{\top} H^{(l-1)} W^{(l)}. \label{eq_mpgnn_readout}
\end{align}}%
Note that this setting subsumes many existing GNNs.
Several common designs for the graph diffusion matrix $P_{_G}$ would be the adjacency matrix of the graph (denoted as $A$).
$P_{_G}$ can also be the normalized adjacency matrix, $D^{-1}A$, with $D$ being the degree-diagonal matrix.
Adding an identity matrix in $A$ is equivalent to adding self-loops in $G$.
For GCN, we set $U^{(t)}$ as zero, $\rho_t$ and $\psi_t$ as identity mappings. %

\medskip
\noindent\textbf{Notations.}
For any matrix $X$, let $\bignorms{X}$ denote the largest singular value (or spectral norm) of $X$.
Let $\bignormFro{X}$ denote the Frobenius norm of $X$.
We use the notation $f(N) \lesssim g(N)$ to indicate that there exists a fixed constant $c$ that does not grow with $N$ such that $f(N) \le c\cdot g(N)$ for large enough values of $N$.
Let $\cW$ and $\cU$ denote the union of the $W$ and $U$ matrices in a model $f$, respectively.

\subsection{Main results}

Given a message-passing neural network denoted as $f$, what can we say about its generalization performance, i.e., the gap between population and empirical risks? %
Let $f(X, G)$ denote the output of $f$, given input with graph $G$, node feature matrix $X$, and label $y$.
The loss of $f$ for this input example is denoted as $\ell(f(X, G), y)$.
Let $\hat{\cL}(f)$ denote the empirical loss of $f$ over the training set.
Let $\cL(f)$ denote the expected loss of $f$ over a random drawn of $\cD$.
We are interested in the generalization gap of $f$, i.e., $\cL(f) - \hat{\cL}(f)$.
How would the graph diffusion matrix $P_{_G}$ affect the generalization gap of graph neural networks?

To motivate our result, we examine the effect of incorporating graph diffusion in a one-layer linear neural network.
That is, we consider $f(X, G)$ to be $\frac 1 n \bm 1_n^{\top} P_{_G} X W^{(1)}$, which does not involve any nonlinear mapping for simplicity of our discussion.
In this case, by standard spectral norm inequalities for matrices, the Euclidean norm of $f$ (which is a vector) satisfies:
{\begin{align}
    \bignorm{f(X, G)} = & \bignorm{\frac 1 n \bm 1_n^{\top} P_{_G} X W^{(1)}} \nonumber \\
    \le & \bignorms{\frac 1 n \bm 1_n^{\top}} \cdot \bignorms{P_{_G}} \cdot \bignorms{X} \cdot \bignorm{W^{(1)}} \label{eq_one_layer_gcn}
\end{align}}%
Thus, provided that the loss function $\ell(\cdot, y)$ is Lipschitz-continuous, standard arguments imply that the generalization gap of $f$ scales with the spectral norm of ${P_{_G}}$ (divided by ${\sqrt N}$) \cite{mohri2018foundations}.
Let us compare this statement with a fully-connected neural net that averages the node features, i.e., the graph diffusion matrix $P_{_G}$ is the identity matrix.
The spectral norm of $P_{_G}$ becomes one.
Together, we conclude that the graph structure affects the generalization bound of a single layer GNN by adding the spectral norm of ${P_{_G}}$.

Our main result is that incorporating the spectral norm of the \emph{graph diffusion matrix} $P_{_G}^{l-1}$ is sufficient for any $l$ layer MPNN.
We note that the dependence is a power of $l-1$ because there are $l-1$ graph diffusion layers: see equation \eqref{eq_matrix_mpgnn}.
Let $f$ be an $l$-layer network whose weights $\cW, \cU$ are defined within a hypothesis set $\cH$: For every layer $i$ from $1$ up to $l$, we have that
{\begin{align}
     \bignorms{W^{(i)}} \le& s_i, ~~\bignormFro{W^{(i)}} \le s_i r_i, \nonumber\\
     \bignorms{U^{(i)}} \le& s_i, ~~\bignormFro{U^{(i)}} \,\,\le s_i r_i,\label{eq_cH}
\end{align}}%
where  $s_1, s_2, \dots, s_l$ and $r_1, r_2, \dots, r_l$ are bounds on the spectral norm and stable rank and are all greater than or equal to one, without loss of generality.
We now present the full statement.

\begin{theorem}\label{thm_mpgnn}
    Suppose all of the nonlinear activations in $\set{\phi_t, \rho_t, \psi_t: \forall\, t}$ and the loss function $\ell(\cdot, y)$ (for any fixed label $y \in \cY$) are twice-differentiable, Lipschitz-continuous and their first-order and second-order derivatives are both Lipschitz-continuous.
    
    With probability at least $1 - \delta$ over the randomness of $N$ independent samples from $\cD$, for any $\delta > 0$, and any $\epsilon > 0$ close to zero, any model $f$ with weight matrices in the set $\cH$ satisfies:
    {\begin{align}
        \cL(f) \leq (1 + \epsilon) \hat{\cL}(f) %
        + {\sum_{i=1}^l \sqrt{\frac{ C B d_i \Bigbrace{\max\limits_{(X, G, y) \sim \cD}\bignorms{X}^2 \bignorms{P_G}^{2(l-1)}} \Bigbrace{r_i^2 \prod\limits_{j=1}^l s_j^2} } {N}} }
        + \bigo{\frac{\log(\delta^{-1})}{N^{3/4}}} \label{eq_thm_mpgnn}, %
    \end{align}}%
    where $B$ is an upper bound on the value of the loss function $\ell(x, y)$ for any $(x, y) \sim \cD$, $C$ is a fixed constant depending on the activation, and the loss function (see Eq. \eqref{eq_const}, Appendix \ref{proof_theorem}).
\end{theorem}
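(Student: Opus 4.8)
The plan is to instantiate the PAC-Bayesian framework with layer-wise Gaussian weight perturbations and to measure the resulting change in loss through the trace of the loss Hessian, which is exactly the quantity bounded by Lemma~\ref{lemma_trace_hess}. For each layer $i$ I would add independent isotropic Gaussian noise of variance $\sigma_i^2$ to the entries of $W^{(i)}$ and $U^{(i)}$, writing $\tilde{f}$ for the resulting perturbed network, and take the prior to be the same product of Gaussians centered at zero (or at the initialization, for the fine-tuning reading). With this choice the KL divergence between posterior and prior is $\sum_{i} (\bignormFro{W^{(i)}}^2 + \bignormFro{U^{(i)}}^2)/(2\sigma_i^2)$, which the hypothesis-set constraints in~\eqref{eq_cH} bound by $\lesssim \sum_i s_i^2 r_i^2/\sigma_i^2$.

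The core step is to quantify noise sensitivity. A second-order Taylor expansion of $\ell(\tilde{f}(X,G),y)$ around the unperturbed weights has a first-order term that vanishes in expectation since the noise is zero-mean, so the expected loss gap equals $\tfrac12 \sum_i \sigma_i^2\,\text{tr}(H_i)$ plus a third-order remainder, where $H_i$ is the block of the loss Hessian in the layer-$i$ weights. The assumed Lipschitzness of the first and second derivatives of the activations and of $\ell(\cdot,y)$ makes this remainder scale like $\sigma_i^3$, so after the final optimization of $\sigma_i$ it collapses into the $\bigo{\log(\delta^{-1})/N^{3/4}}$ term. Invoking Lemma~\ref{lemma_trace_hess}, I would bound $\text{tr}(H_i) \lesssim C\,d_i\,\bignorms{X}^2\,\bignorms{P_G}^{2(l-1)}\,\prod_{j} s_j^2 / s_i^2$; the factor $\bignorms{P_G}^{2(l-1)}$ arises because differentiating the recursion~\eqref{eq_matrix_mpgnn} propagates the diffusion multiplication through all $l-1$ diffusion layers, and using operator-norm submultiplicativity --- rather than a crude vertex-count bound --- is precisely what replaces the prior $d^{l-1}$ by $\bignorms{P_G}^{l-1}$.

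Because $H_i$ depends on the input $(X,G)$, the population-side noise stability has to be controlled over the whole distribution rather than only at the training points. Here I would use a uniform convergence of the Hessian: show that $\text{tr}(H_i)$ is bounded uniformly over $\cH$ and over the support of $\cD$, so that both $\big|\cL(f)-\mathbb{E}[\cL(\tilde{f})]\big|$ and its empirical analogue are controlled by the same trace bound with the $\max_{(X,G,y)\sim\cD}$ in~\eqref{eq_thm_mpgnn}. Combining the PAC-Bayes inequality with these two noise-stability estimates and the KL bound produces, for each layer, a contribution of the form $\sigma_i^2\,\text{tr}(H_i) + s_i^2 r_i^2/(\sigma_i^2 N)$; balancing it over $\sigma_i$ gives the summand $\sqrt{C B d_i \bignorms{X}^2 \bignorms{P_G}^{2(l-1)} r_i^2 \prod_j s_j^2 / N}$, and summing over $i$ yields~\eqref{eq_thm_mpgnn}. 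Since the prior variance must be fixed before seeing the sample, the optimal $\sigma_i$ would be selected from a fixed grid with a union bound, affecting only the constants and log terms; the multiplicative $(1+\epsilon)$ factor together with the single power of $B$ would come from using the relative, bounded-loss form of the PAC-Bayes bound, which absorbs the $\hat{\cL}(f)$-dependent variance.

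The main obstacle I expect is Lemma~\ref{lemma_trace_hess} itself together with its uniform convergence: differentiating~\eqref{eq_matrix_mpgnn} twice through the three nonlinearities $\phi_t,\rho_t,\psi_t$ and the diffusion $P_G$ generates many cross terms, and the delicate point is to bound each of them by $\bignorms{P_G}$ with the exact power $l-1$ --- any slack in a triangle-inequality expansion would reintroduce a dependence on the number of nodes and defeat the whole improvement. Making this bound simultaneously uniform over the hypothesis set $\cH$ and over $\cD$, so that it transfers cleanly from the empirical Hessian to the population Hessian, is the second delicate ingredient.
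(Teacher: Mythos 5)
Your overall architecture matches the paper's: layerwise Gaussian perturbations inside McAllester's relative PAC-Bayes bound, a Taylor expansion whose first-order term vanishes in expectation and whose second-order term is $\tfrac12\sum_i\sigma_i^2\,\tr\big[\bH^{(i)}\big]$ with an $O(\sigma_i^3)$ remainder that becomes the $N^{-3/4}$ term, the KL bound $\sum_i s_i^2r_i^2/\sigma_i^2$ from the constraints \eqref{eq_cH}, Lemma \ref{lemma_trace_hess} to turn each trace into $d_i\bignorms{X}^2\bignorms{P_{_G}}^{2(l-1)}\prod_{j\neq i}s_j^2$, and a per-layer balancing of $\sigma_i$. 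Your grid-plus-union-bound remark for fixing the prior variance is a legitimate (and strictly speaking necessary) patch that the paper leaves implicit.

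The genuine gap is in how you handle the data dependence of the Hessian. You propose to control the population-side and empirical-side noise stability \emph{separately} by the same deterministic bound $\alpha_i=\max_{(X,G,y)\sim\cD}\tr\big[\bH^{(i)}\big]$. That cannot yield the theorem as stated. In the relative PAC-Bayes inequality the KL term necessarily carries the prefactor $B/\big(2\beta(1-\beta)N\big)\approx B/(\epsilon N)$; under your two-sided max bound the trace term enters with an $O(1)$ prefactor, roughly $\tfrac12\big(1+\tfrac1\beta\big)\alpha_i\sigma_i^2$, so balancing over $\sigma_i$ produces a main term of order $\sqrt{B\alpha_i s_i^2 r_i^2/(\epsilon N)}$, whose constant blows up like $\epsilon^{-1/2}$ as $\epsilon\to0$ --- whereas the constant $C$ in \eqref{eq_thm_mpgnn} (Eq. \eqref{eq_const}) depends only on the activations, the loss, and $l$. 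The paper escapes this through a cancellation: the population second-order term appears with a minus sign and the empirical one with $+1/\beta$, so their combination equals (empirical trace $-$ population trace) plus $\big(\tfrac1\beta-1\big)$ times the empirical trace. The first piece is controlled by a \emph{statistical} uniform convergence of the averaged Hessian to its expectation at rate $\sqrt{\log(N/\delta)}/\sqrt N$, uniformly over $\cH$ (Eq. \eqref{eq_trace_bound}); proving it is what requires Proposition \ref{claim_hess} --- Lipschitz-continuity of the Hessian map in the weights --- together with an $\epsilon$-net, and this concentration statement, not uniform boundedness, is what the paper means by ``uniform convergence of the Hessian.'' The second piece carries the factor $\tfrac1\beta-1=\epsilon$, which exactly offsets the $1/\epsilon$ in the KL term, so the balanced contribution is $(1+\epsilon)\sqrt{B\alpha_i s_i^2r_i^2/N}$ with no blow-up. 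Your closing paragraph gestures at the empirical-to-population transfer, but the mechanism you actually invoke skips it, and without it your argument proves only a weaker bound with an $\epsilon^{-1/2}$-inflated constant.
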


As a remark, prior works by \citet{garg2020generalization} and \citet{liao2020pac} consider an MPNN with $W^{(t)}$ and $U^{(t)}$ being the same for $t$ from $1$ up to $l$, motivated by practical designs \cite{gilmer2017neural,jin2018junction}.
Thus, their analysis is conducted separately for GCN and MPNN with weight tying.
By contrast, our result allows $W^{(t)}$ and $U^{(t)}$ to be arbitrarily different across different layers.
This unifies GCN and MPNN without weight tying in the same framework so that we can unify their analysis.
We defer the proof sketch of our result and a discussion to Section \ref{sec_proff_sketch}.

\subsection{Comparison with prior art}\label{sec_compare}

\begin{table*}[t!]
    \caption{How does the generalization gap of graph neural networks scale with graph properties? In this work, we show spectrally-normalized bounds on $P_{_G}$ and compare our results with prior results in the following table.
    We let $A$ denote the adjacency matrix, $D$ be the degree-diagonal matrix of $A$, and $l$ be the depth of the GNN.
    Previous generalization bounds scale with the graph's maximum degree denoted as $d$.
    Our result instead scales with the spectral norm of $P_{_G}$ and applies to  graph isomorphism networks (GIN) \cite{xu2018powerful} and GraphSAGE with mean aggregation \cite{hamilton2017inductive}.
    }\label{table_theory}
    \centering
    {\begin{tabular}{@{} | c | c | c | c | c |@{}}
    \toprule
         Graph Dependence & {GCN} & {MPNN} & {GIN} & {GraphSAGE-Mean} \\
    \midrule
        \citeay{garg2020generalization}  & $d^{l-1}$ & $d^{l-1}$ & - & - \\
        \citeay{liao2020pac}  & $d^{\frac{l-1} 2}$ & $d^{l-1}$ & - & -\\
        \textbf{Ours (Theorems \ref{thm_mpgnn} and \ref{thm_gin})} & $1$ & $\bignorms{A}^{l-1}$ & $\sum_{i=1}^{l-1} \frac{\bignorms{A}^i}{l-1}$ & $\bignorms{D^{-1} A}^{l-1}$ \\
    \bottomrule
    \end{tabular}}%
\end{table*}

In Table \ref{table_theory}, we compare our result with prior results.
We first illustrate the effects of graph properties on the generalization bounds.
Then we will also show a numerical comparison to incorporate the other components of the bounds. 
\begin{itemize}[leftmargin=15pt]
    \item Suppose $P_{_G}$ is the adjacency matrix of $G$.
    Then, one can show that for any undirected graph $G$, the spectral norm of ${P_{_G}}$ is less than the maximum degree $d$ (cf. Fact \ref{fact_graph}, Appendix \ref{app_proof} for a proof).
    This explains why our result is strictly less than prior results for MPNN in Table \ref{table_theory}.
    
    \item Suppose $P_{_G}$ is the normalized and symmetric adjacency matrix of $G$: $P_{_G} = \tilde D^{-1/2} \tilde A \tilde D^{-1/2}$,  where $\tilde A$ is $A + \id$ and $\tilde D$ is the degree-diagonal matrix of $\tilde A$.
    Then, the spectral norm of $P_{_G}$ is at most one (cf. Fact \ref{fact_graph}, Appendix \ref{app_proof} for a proof).
    This fact explains why the graph dependence of our result for GCN is $1$ in Table \ref{table_theory}.
    Thus, we can see that this provides an exponential improvement compared to the prior results.
\end{itemize}
Thus, for the above diffusion matrices, we conclude that the spectral norm of ${P_{_G}}$ is strictly smaller than the maximum degree of graph $G$ (across all graphs in the distribution $\cD$).

Next, we conduct an empirical analysis to compare our results and prior results numerically.
Following the setting of prior works, we use two types of models that share their weight matrices across different layers, including GCN \cite{kipf2016semi} and the MPNN specified in \citet{liao2020pac}. 
For both models, we evaluate the generalization bounds by varying the network depth $l$ between $2, 4$, and $6$.

We consider graph prediction tasks on three collaboration networks,  including IMDB-B, IMDB-M, and COLLAB \cite{yanardag2015deep}.
IMDB-B includes a collection of movie collaboration graphs. In each graph, a node represents an actor or an actress, and an edge denotes a collaboration in the same movie. The task is to classify each graph into the movie genre as Action or Romance.
The IMDB-M is a multi-class extension with the movie graph label Comedy, Romance, or Sci-Fi.
COLLAB includes a list of ego-networks of scientific researchers. Each graph includes a researcher and her collaborators as nodes. An edge in the graph indicates a collaboration between two researchers. The task is to classify each ego-network into the field of the researcher, including High Energy, Condensed Matter, and Astro Physics.

We report the numerical comparison in Figure \ref{fig_bound_measurement}, averaged over three random seeds.
Our results are consistently smaller than previous results.
As explained in Table \ref{table_theory}, the improvement comes from the spectral norm bounds on graphs compared with the max degree bounds.

\begin{figure*}[t!]
	\begin{subfigure}[b]{0.33\textwidth}
		\centering
		\includegraphics[width=0.8\textwidth]{./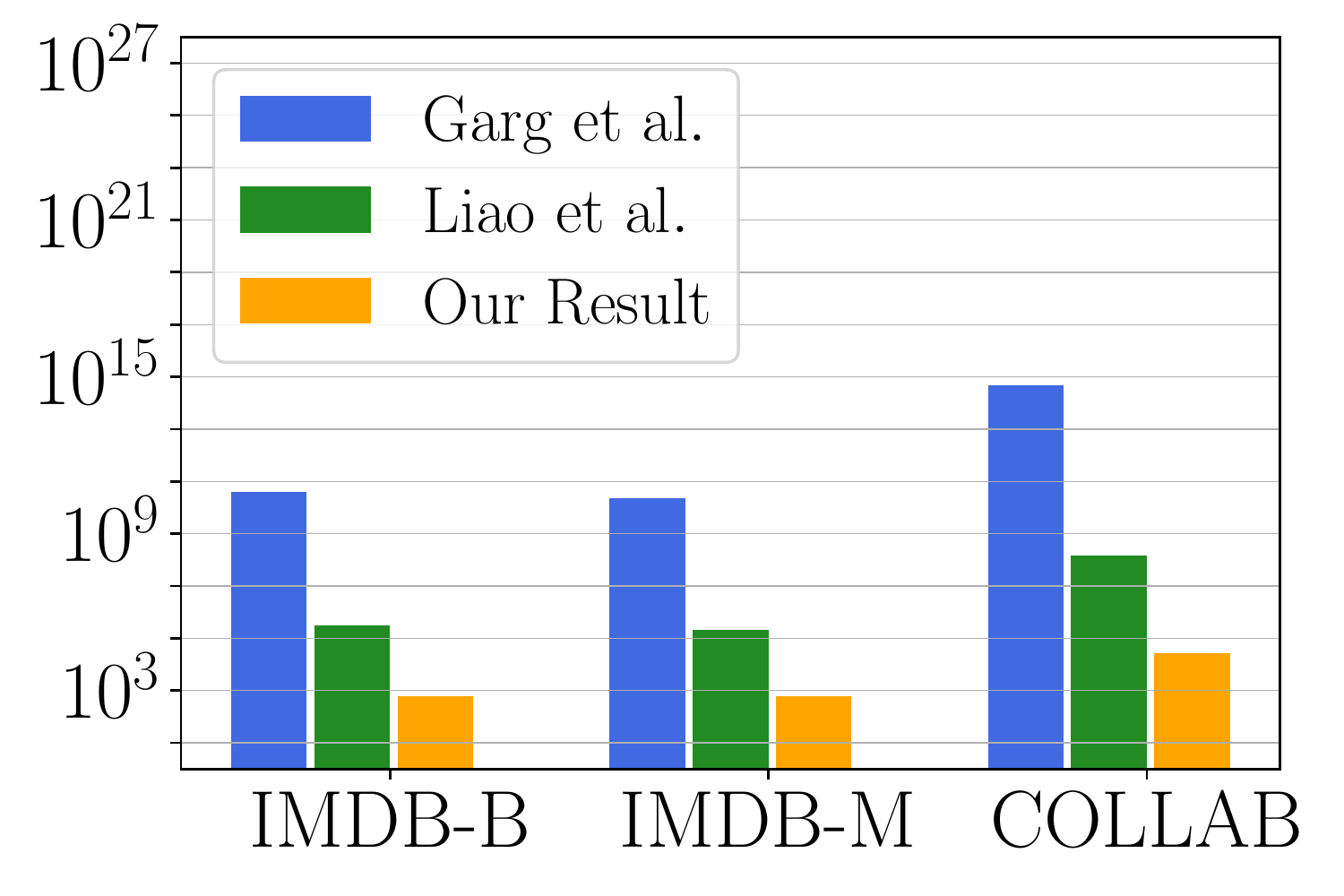}
		\caption{Two-layer GCN}
	\end{subfigure}\hfill%
	\begin{subfigure}[b]{0.33\textwidth}
		\centering
		\includegraphics[width=0.8\textwidth]{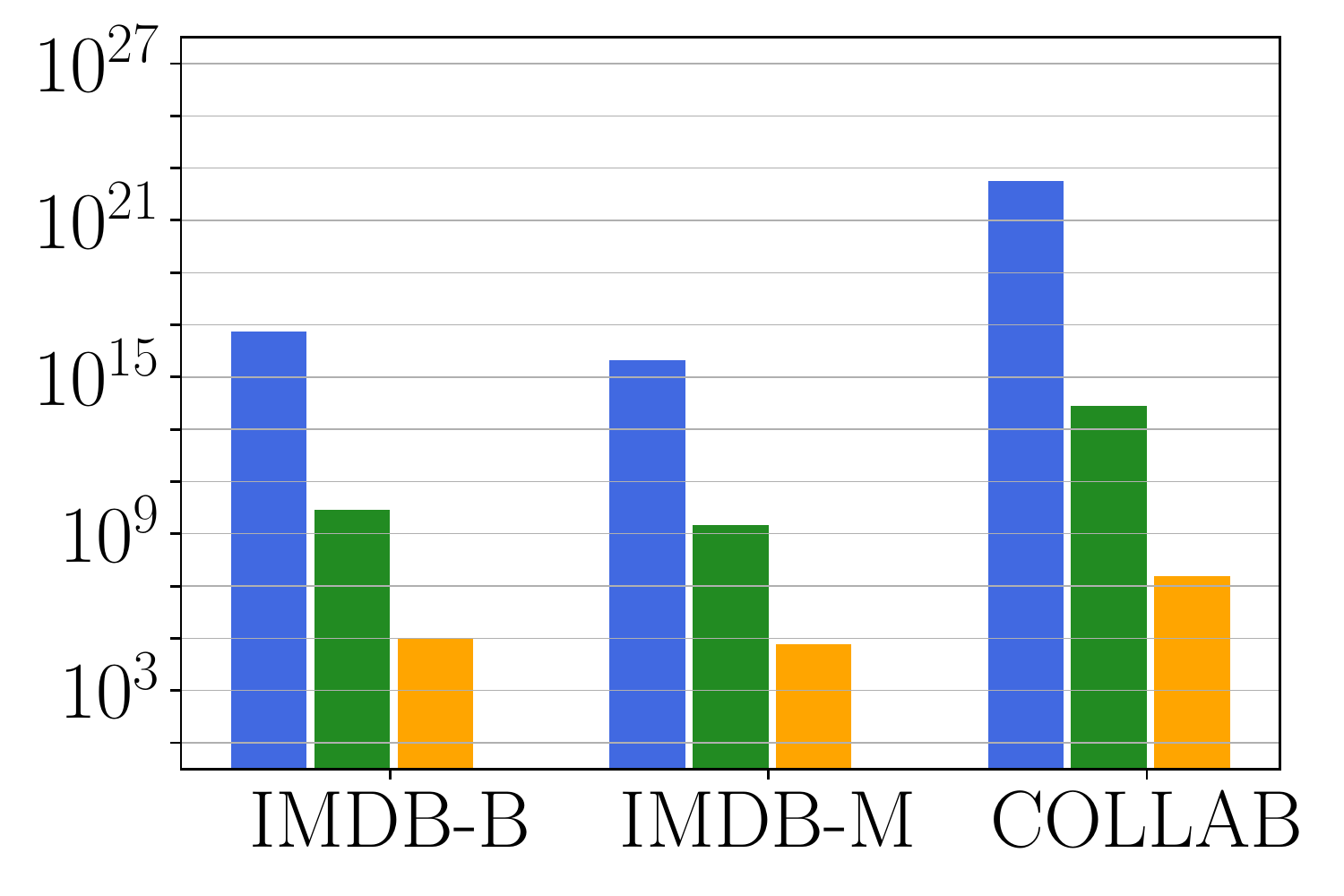}
		\caption{Four-layer GCN}
	\end{subfigure}\hfill%
	\begin{subfigure}[b]{0.33\textwidth}
		\centering
		\includegraphics[width=0.8\textwidth]{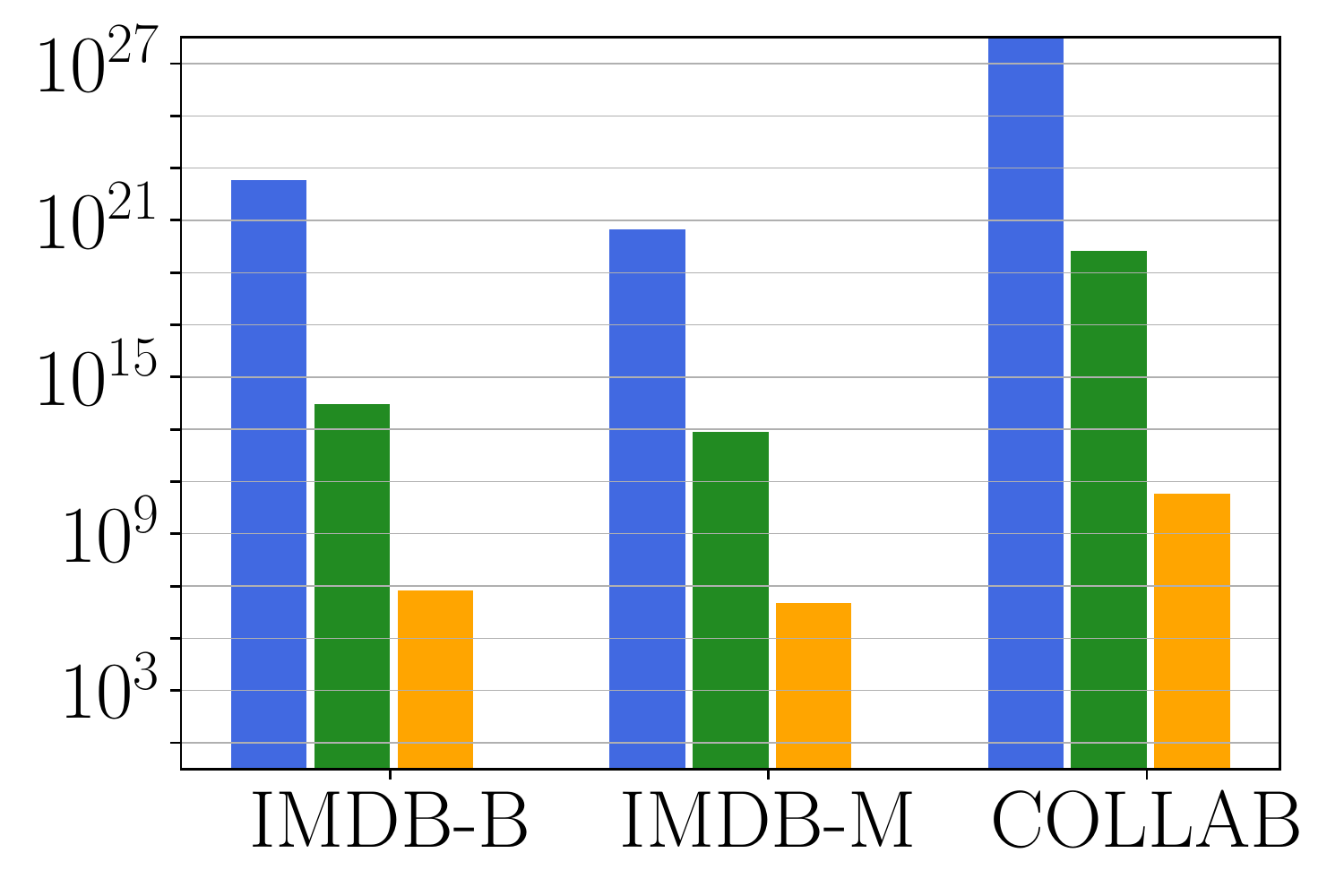}
		\caption{Six-layer GCN}
	\end{subfigure}\hfill%
    \begin{subfigure}[b]{0.33\textwidth}
		\centering
		\includegraphics[width=0.8\textwidth]{./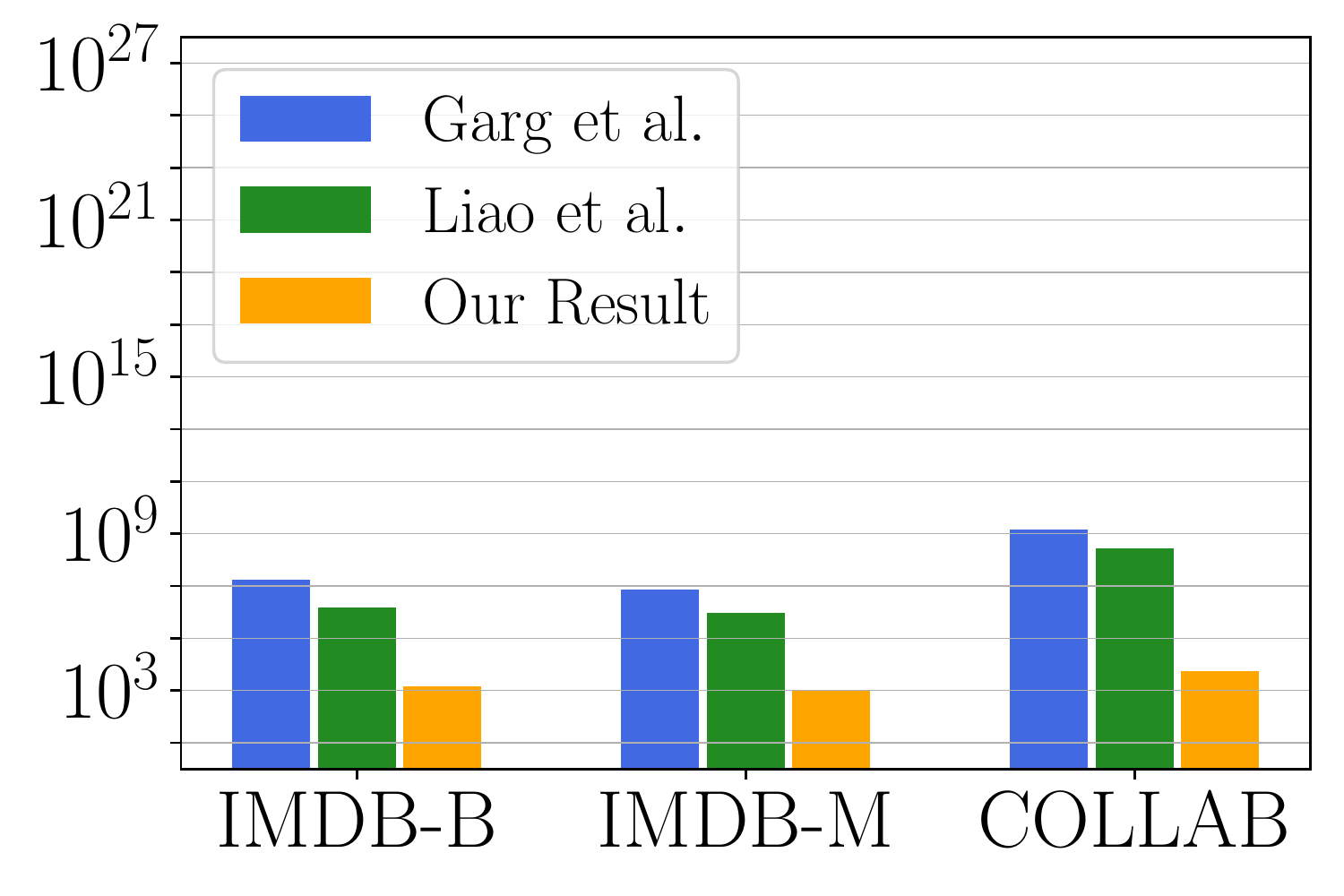}
		\caption{Two-layer MPNN}
	\end{subfigure}\hfill%
	\begin{subfigure}[b]{0.33\textwidth}
		\centering
		\includegraphics[width=0.8\textwidth]{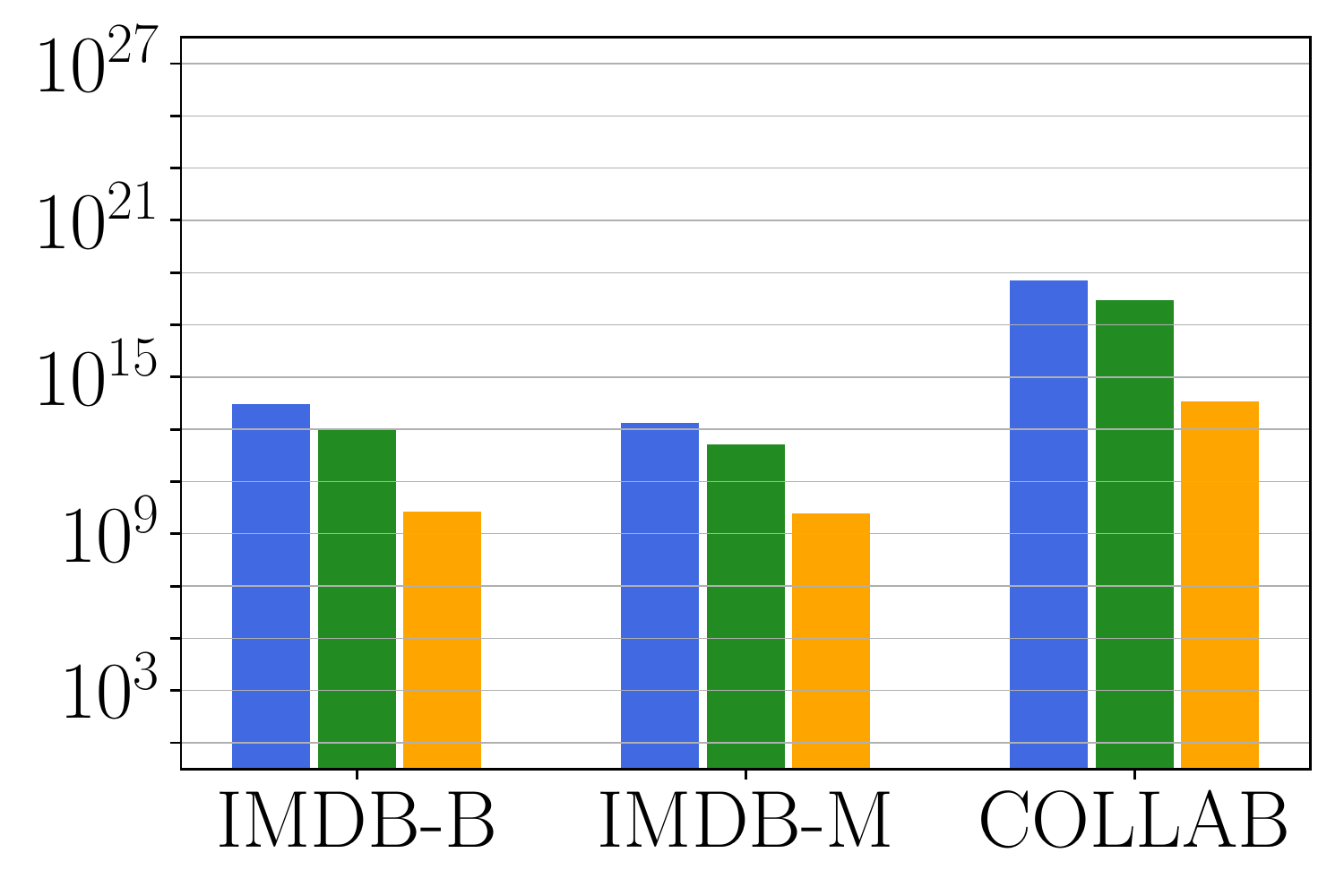}
		\caption{Four-layer MPNN}
	\end{subfigure}\hfill%
	\begin{subfigure}[b]{0.33\textwidth}
		\centering
		\includegraphics[width=0.8\textwidth]{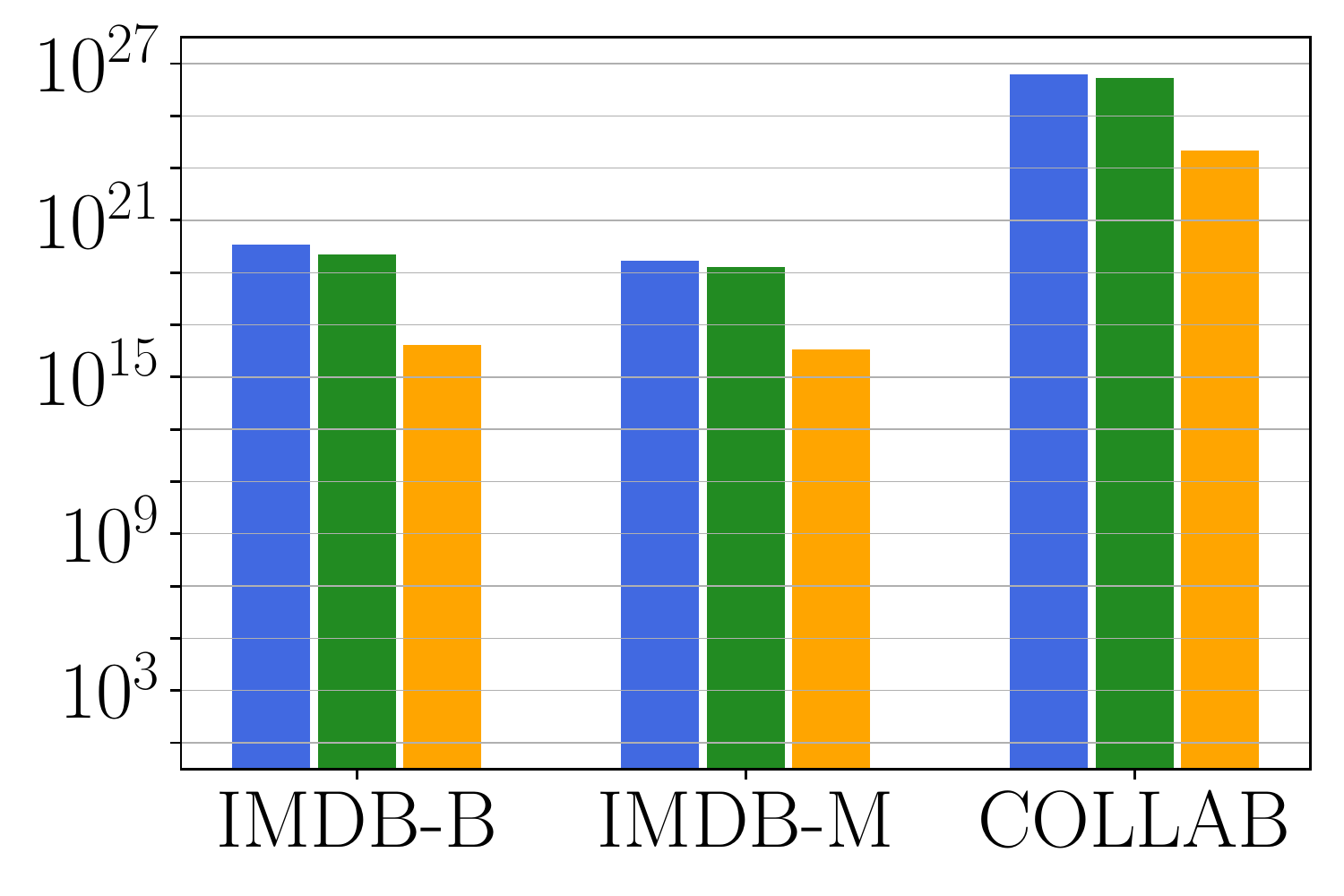}
		\caption{Six-layer MPNN}
	\end{subfigure}
    \caption{Comparing our result and prior results \cite{garg2020generalization, liao2020pac} on three graph classification tasks conducted on GCNs and MPNNs, respectively.}
    \label{fig_bound_measurement}
\end{figure*}

\subsection{A matching lower bound}

Next, we show an instance with the same dependence on the graph diffusion matrix as our upper bound.
In our example:
\begin{itemize}[leftmargin=15pt] 
    \item The graph is the complete graph with self-loops inserted in each node.
    Thus, the adjacency matrix of $G$ is a square matrix with all ones.
    We will set $P_{_G}$ as the adjacency matrix of $G$.

    \item In the first $l-1$ graph diffusion layers, the activation functions $\phi, \rho, \psi$ are all linear functions.
    Further, we fix all the parameters of $\cU$ as zero.
    
    \item The loss function $\ell$ is the logistic loss.
\end{itemize}
Then, we demonstrate a data distribution such that there always exists some weight matrices within $\cH$ whose generalization gap must increase in proportion to the spectral norm of $P_{_G}^{l-1}$ and the product of the spectral norm of every layer $s_1, s_2, \dots, s_l$.

\begin{theorem}\label{prop_lb}
    Let $N_0$ be a sufficiently large value.
    For any norms $s_1, s_2, \dots, s_n$, there exists a data distribution $\cD$ on which with probability at least $0.1$ over the randomness of $N$ independent samples from $\cD$, for any $N \ge N_0$, the generalization gap of $f$ is greater than the following:
    {\begin{align}
        \bigabs{\cL(f) - \hat\cL(f)}  
        \gtrsim\sqrt \frac{\Bigbrace{\max\limits_{(X, G, y) \sim \cD}\bignorms{P_{_G}}^{2(l-1)}} \Bigbrace{\prod\limits_{i=1}^l s_i^2} }{N}. \label{eq_lb}
    \end{align}}%
\end{theorem}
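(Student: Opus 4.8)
The plan is to collapse the network to a single scalar predictor and then exhibit a pure label-noise distribution on which the generalization gap is governed \emph{exactly} by the effective capacity $\bignorms{P_{_G}}^{l-1}\prod_{i} s_i$.

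First I would carry out the reduction. Under the stated restrictions—$\phi_t,\rho_t,\psi_t$ linear (hence the identity without loss of generality, since a zero-centered linear map is a scalar multiple we may absorb), $\cU \equiv 0$, and $P_{_G}=A=\bm{1}_n\bm{1}_n^{\top}$ the all-ones matrix of the self-looped complete graph—the recursion \eqref{eq_matrix_mpgnn} telescopes to $H^{(l-1)} = P_{_G}^{\,l-1}X\,W^{(1)}\cdots W^{(l-1)}$, and the pooling step \eqref{eq_mpgnn_readout} gives $f(X,G)=\tfrac1n\bm{1}_n^{\top}P_{_G}^{\,l-1}X\prod_{t=1}^{l}W^{(t)}$. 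Taking feature dimension $d=1$ and $X=\bm{1}_n$ (every node carries the scalar feature $1$), and using $P_{_G}^{\,k}=n^{k-1}P_{_G}$ together with $\tfrac1n\bm{1}_n^{\top}P_{_G}^{\,l-1}=n^{l-2}\bm{1}_n^{\top}$, this simplifies to $f(X,G)=n^{l-1}\prod_{t=1}^{l}W^{(t)}=:\theta$, a single scalar independent of the example. Since each $W^{(t)}$ is now a scalar with $\lvert W^{(t)}\rvert\le s_t$ (the Frobenius constraint in \eqref{eq_cH} being slack because $r_t\ge1$), the admissible outputs are exactly $\theta\in[-R,R]$ with $R=n^{l-1}\prod_t s_t=\bignorms{P_{_G}}^{l-1}\prod_t s_t$, as $\bignorms{P_{_G}}=n$. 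This already isolates the two quantities appearing in \eqref{eq_lb}.

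Second, I would fix this graph and feature matrix across all examples and let only the label be random: $y\in\{-1,+1\}$ uniform and i.i.d., with $\ell$ the logistic loss $\ell(\theta,y)=\log(1+e^{-y\theta})$. The crucial simplification is that $f$ emits the same $\theta$ on every example, so writing $\hat p$ for the empirical fraction of $+1$ labels, a direct computation using $\log(1+e^{-\theta})-\log(1+e^{\theta})=-\theta$ yields the exact identity
\[
\cL(f)-\hat{\cL}(f)=\bigl(\hat p-\tfrac12\bigr)\theta .
\]
Choosing the weights \emph{after} seeing the sample so that $\theta=R\cdot\mathrm{sign}(\hat p-\tfrac12)$—which lies in $\cH$ because in the scalar case we may simply take $W^{(t)}=\pm s_t$—gives $\cL(f)-\hat{\cL}(f)=\lvert\hat p-\tfrac12\rvert\,R$.

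Finally, it remains to show $\lvert\hat p-\tfrac12\rvert\gtrsim N^{-1/2}$ with probability at least $0.1$. Since $N\hat p\sim\mathrm{Binomial}(N,\tfrac12)$, the Berry–Esseen theorem (or a direct second-moment/Paley–Zygmund argument) furnishes a universal constant $c>0$ with $\Pr\bigl[\lvert\hat p-\tfrac12\rvert\ge c/\sqrt N\bigr]\ge 0.1$ for all $N\ge N_0$. Combining this with the previous identity produces $\bigabs{\cL(f)-\hat{\cL}(f)}\ge cR/\sqrt N = c\sqrt{\bignorms{P_{_G}}^{2(l-1)}\prod_i s_i^2/N}$, which is exactly \eqref{eq_lb}. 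I expect the only delicate point to be this last step: one needs a \emph{two-sided anti-concentration} bound delivering a fluctuation of order precisely $N^{-1/2}$ with a fixed, non-vanishing probability, rather than the one-sided concentration inequalities that are more standard, and one must secure the explicit constant $0.1$ uniformly in $N\ge N_0$. The reduction and the exact gap identity are routine by comparison, though one should verify carefully that restricting to $d=1$ with identical node features genuinely lies inside the hypothesis class $\cH$ and the model family of Theorem \ref{thm_mpgnn}.
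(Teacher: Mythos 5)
Your proof is correct, and it reaches the stated bound by a genuinely different route than the paper's. Both constructions share the same raw ingredients: the self-looped complete graph (so $\bignorms{P_{_G}} = n$), linear activations with $\cU = 0$, uniformly random $\pm 1$ labels, the logistic loss, and binomial anti-concentration at scale $N^{-1/2}$. The difference is the quantifier structure and where the adaptivity sits. The paper fixes an \emph{arbitrary} weight configuration first and tailors the distribution to it: it sets $X = \bm{1}_n u_1^{\top}$, where $u_1$ is the top left singular vector of $Z = \prod_{i=1}^l W^{(i)}$, so the network output is $n^{l-1}\lambda_{\max}(Z)\, v_1^{\top}$, and then argues that the empirical loss (a sum of i.i.d.\ two-valued random variables whose spread is of order $\alpha = \bignorms{P_{_G}}^{l-1}\lambda_{\max}(Z)$) deviates from its mean by $\gtrsim \alpha/\sqrt{N}$ via the central limit theorem. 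You instead fix one simple distribution ($d=1$, $X = \bm{1}_n$, random labels) and choose the weights \emph{after} seeing the sample, $\theta = R\,\mathrm{sign}(\hat p - \tfrac12)$, so that the exact identity $\cL(f) - \hat\cL(f) = (\hat p - \tfrac12)\theta$ does all the work; no SVD or variance computation is needed. Note that your quantifier order --- a single $\cD$ on which, with probability at least $0.1$, \emph{some} $f \in \cH$ has a large gap --- is the form that genuinely matches the uniform-convergence upper bound of Theorem \ref{thm_mpgnn} and is what the prose preceding the theorem promises (``there always exists some weight matrices within $\cH$''), whereas the paper's proof establishes the reversed order: for every fixed $f$ there exists a distribution, depending on $f$, on which that $f$ has a large gap. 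Your handling of the probabilistic step is also tighter on the point you flagged: the paper invokes the CLT asymptotically (``as $N$ grows to infinity''), while Berry--Esseen or Paley--Zygmund gives the constant $0.1$ uniformly over $N \ge N_0$ with an explicit $N_0$. The only detail worth making explicit is that your reduction sets all hidden widths $d_t = 1$ and output dimension $1$, which is legitimate for a lower-bound instance (the constraint set \eqref{eq_cH} is satisfied since both norms of a scalar weight coincide and $r_t \ge 1$), but differs from the paper's two-dimensional output convention; under either convention the bound \eqref{eq_lb} follows.
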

Notice that the lower bound in \eqref{eq_lb} exhibits the same scaling in terms of $G$---$\bignorms{P_{_G}}^{l-1}$---as our upper bound from equation \eqref{eq_thm_mpgnn}.
Therefore, we conclude that our spectral norm bound is tight for multilayer MPNN.
The proof of the lower bound can be found in Appendix \ref{app_lb}.

\begin{remark}\normalfont
    Our results from Theorem \ref{thm_mpgnn} and \ref{prop_lb} together suggest the generalization error bound scales linearly in $l$.
    To verify whether this is the case, we conducted an empirical study on three architectures (GCN, GIN-Mean, and GIN-Sum) that measured the growth of generalization errors as the network depth $l$ varies.
    We find that the generalization error grows sublinearly with $l$ to $\bignorms{P_{_G}}$. We also note that this sublinear growth trend has been captured by our Hessian-based generalization bound (cf. Figure \ref{fig_intro_gcn}). It would be interesting to understand better why the sublinear trend happens and further provide insight into the behavior of GNN.
\end{remark}

\begin{remark}\normalfont
    Theorem \ref{prop_lb} suggests that in the worst case, the generalization bound would have to scale with the spectral norms of the graph and the weight matrices. 
    Although this is vacuous for large $l$, later in Lemma \ref{lemma_gen_error}, we show a data-dependent bound using the trace of the Hessians, which is non-vacuous. As shown in Figure \ref{fig_intro_gcn}, Hessian-based measurements match the scale of actual generalization errors: the green line, calculated based on the trace of the loss Hessian matrix (cf. equation \eqref{eq_main_1}), matches the scale of actual generalization gaps plotted in the yellow line.
\end{remark}

\section{Proof Techniques and Extensions}\label{sec_proff_sketch}

Our analysis for dealing with the graph structure seems fundamentally different from the existing analysis.
In the margin analysis of \citet{liao2020pac}, the authors also incorporate the graph structure in the perturbation error.
For bounding the perturbation error, the authors use a triangle inequality that results in a $(1, \infty)$ norm of the matrix ${P_{_G}}$ (see Lemma 3.1 of \citet{liao2020pac} for GCN).
We note that this norm can be larger than the spectral norm by a factor of $\sqrt{n}$, where $n$ is the number of nodes in $G$: in the case of a star graph, this norm for the graph diffusion matrix of GCN is $\sqrt n$.
By comparison, the spectral norm of the same matrix is less than one (see Fact \ref{fact_graph}, Appendix \ref{app_proof}).

How can we tighten the perturbation error analysis and the dependence on $P_{_G}$ in the generalization bounds, then?
Our proof involves two parts:
\begin{itemize}[leftmargin=15pt]
    \item {\bf Part I:} By expanding the perturbed loss of a GNN, we prove a bound on the generalization gap using the trace of the Hessian matrix associated with the loss.
    \item {\bf Part II:} Then, we explicitly bound the trace of the Hessian matrix with the spectral norm of the graph using the Lipschitzness of the activation functions.
\end{itemize}

\medskip
\noindent\textbf{Part I:} {\itshape Measuring noise stability using the Hessian.} We first state an implicit generalization bound that measures the trace of the Hessian matrix.
Let $\bH^{(i)}$ denote the Hessian matrix of the loss $\ell(f(X, G), y)$ with respect to layer $i$'s parameters, for each $i$ from $1$ up to $l$. Particularly, $\bH^{(i)}$ is a square matrix whose dimension depends on the number of variables within layer $i$.
Let $\bH$ denote the Hessian matrix of the loss $\ell(f(X, G), y)$ over all parameters of $f$.

\begin{lemma}\label{lemma_gen_error}
    In the setting of Theorem \ref{thm_mpgnn}, with probability at least $1 - \delta$ over the randomness of the $N$ training examples, for any $\delta > 0$ and $\epsilon$ close to $0$, we get:
    {\small\begin{align}
        \cL(f) \leq (1 + \epsilon) \hat{\cL}(f) 
        + (1 + \epsilon) \sum_{i=1}^l\sqrt{\frac{B\cdot \left(\max\limits_{(X, G, y)\sim\cD}\tr\big[\bH^{(i)}[\ell(f(X, G), y)]\big]\right) s_i^2 r_i^2}{N}}
        + \bigo{\frac{\log(\delta^{-1})}{N^{3/4}}}.\label{eq_main_1} 
    \end{align}}%
\end{lemma}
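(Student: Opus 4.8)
The plan is to instantiate the PAC-Bayesian framework of \citet{mcallester2013pac} with a layer-wise Gaussian perturbation scheme, following the approach of \citeay{arora2018stronger} and \citeay{ju2022robust}, and to read off the complexity term through a second-order Taylor expansion of the perturbed loss. Concretely, I would take the posterior $Q$ to be the trained weights $\cW, \cU$ perturbed by independent Gaussian noise of variance $\sigma_i^2$ on the parameters of layer $i$, and the prior $P$ to be the same Gaussian family centered at zero. Because the Frobenius norms satisfy $\bignormFro{W^{(i)}}, \bignormFro{U^{(i)}} \le s_i r_i$ throughout $\cH$ (see \eqref{eq_cH}), the Kullback--Leibler divergence decomposes across layers as $\mathrm{KL}(Q \,\|\, P) \lesssim \sum_{i=1}^{l} s_i^2 r_i^2 / \sigma_i^2$. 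I would then invoke the multiplicative (``fast-rate'') form of the PAC-Bayes inequality for the $[0,B]$-bounded loss, which is exactly what produces the factor $(1+\epsilon)$ on the empirical risk and a \emph{linear} dependence on $\mathrm{KL}$ of order $B\,\mathrm{KL}/(\epsilon N)$, rather than the slower square-root dependence.

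The second step converts the expected perturbed risks back into the unperturbed risks using the Hessian. Writing $\bH^{(i)}$ for the Hessian block of $\ell(f(X,G),y)$ in layer $i$'s parameters, a Taylor expansion around $\cW$ shows that for mean-zero noise the first-order term cancels in expectation and the quadratic term contributes exactly $\tfrac12 \sigma_i^2 \tr[\bH^{(i)}]$ per layer, up to a remainder $R$. On the empirical side this gives $\mathbb{E}_{\xi}[\hat{\cL}(f_{\cW+\xi})] \le \hat{\cL}(f) + \tfrac12 \sum_{i} \sigma_i^2 \max_{(X,G,y)\sim\cD} \tr[\bH^{(i)}] + |R_{\mathrm{emp}}|$, where $\max_{(X,G,y)\sim\cD} \tr[\bH^{(i)}]$ dominates the empirical average of the trace since the training points lie in the support of $\cD$. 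On the population side the analogous expansion yields a trace term of the \emph{opposite} sign, so it is either non-positive and can be discarded or is again controlled by the same $\max_{(X,G,y)\sim\cD} \tr[\bH^{(i)}]$; either way it does not change the form of the bound. Chaining these two estimates with the fast-rate PAC-Bayes inequality from the first step leaves a per-layer expression of the shape $(1+\epsilon)\tfrac12 \sigma_i^2 \max_{(X,G,y)\sim\cD} \tr[\bH^{(i)}] + c(1+\epsilon) B\, s_i^2 r_i^2 /(\epsilon \sigma_i^2 N)$.

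The final step is to optimize the free variances $\sigma_i$. Since the combined bound decouples across layers, I would minimize each summand over $\sigma_i^2$ using $\min_{x>0}(\alpha x + \beta/x) = 2\sqrt{\alpha\beta}$, which balances the Hessian-perturbation term against the KL term and yields precisely $\sqrt{B\,(\max_{(X,G,y)\sim\cD} \tr[\bH^{(i)}])\, s_i^2 r_i^2 / N}$ up to the absolute constant $C$ and the $(1+\epsilon)$ factor, giving the stated sum $\sum_{i=1}^{l}$ of square roots in \eqref{eq_main_1}. To make the bound hold simultaneously for every $f \in \cH$, whose learned weights are data-dependent, I would quantize the grid of admissible variances $\sigma_i$ and take a union bound, folding the resulting $\log(\delta^{-1})$ factor together with the higher-order Taylor remainders into the $\bigo{\log(\delta^{-1}) / N^{3/4}}$ term.

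I expect the main obstacle to be controlling the third-order Taylor remainder $R$ uniformly over the hypothesis class. The identity $\mathbb{E}_{\xi}[\ell(f_{\cW+\xi})] = \ell(f_{\cW}) + \tfrac12 \sum_{i} \sigma_i^2 \tr[\bH^{(i)}] + R$ is only useful if $|R|$ is genuinely lower order, which forces the perturbation radius $\sigma_i \sqrt{d_i}$ to be small and demands a uniform bound on how fast the Hessian varies across the noise ball. This is exactly where the hypotheses of Theorem \ref{thm_mpgnn} are needed --- twice differentiability together with Lipschitz-continuous first and second derivatives of every activation $\phi_t, \rho_t, \psi_t$ and of the loss --- and where a uniform convergence of the Hessian over $\cH$ must be established. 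Propagating these smoothness bounds through the composition of nonlinear maps and the diffusion operator $P_{_G}$ inside $f$, so that the third derivatives stay controlled uniformly, is the delicate part, and the resulting trade-off between the remainder and the statistical term is what fixes the $N^{-3/4}$ rate.
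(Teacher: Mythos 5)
Your scaffolding is the same as the paper's --- layerwise Gaussian posterior, McAllester's multiplicative PAC-Bayes bound with KL $\lesssim \sum_i s_i^2 r_i^2/\sigma_i^2$, second-order Taylor expansion, then optimization over $\sigma_i$ --- but the way you dispose of the population-side trace term is a genuine gap, and it is exactly where the paper's proof does its real work. After Taylor expansion, the population risk contributes $-\tfrac12\sum_i \sigma_i^2\, \mathbb{E}_{(X,G,y)\sim\cD}\big[\tr[\bH^{(i)}]\big]$. Your dichotomy, ``either non-positive and can be discarded or again controlled by the same $\max_{(X,G,y)\sim\cD}\tr[\bH^{(i)}]$,'' fails in its second branch: if the expected trace is negative, this term equals $\tfrac12\sigma_i^2\,\big|\mathbb{E}_{\cD}\tr[\bH^{(i)}]\big|$, which can exceed $\tfrac12\sigma_i^2 \max_{\cD}\tr[\bH^{(i)}]$ by an arbitrary factor (take the trace to be $-M$ on half the support and $+1$ on the other half; the term grows with $M$ while the max stays at $1$). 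Nothing in the hypotheses of Theorem \ref{thm_mpgnn} --- smoothness and Lipschitz conditions at an arbitrary $f\in\cH$, not a minimizer --- rules out such sign patterns. The paper never discards this term: it cancels the population trace against the empirical trace and controls the residual, $\tfrac1N\sum_j \tr\big[\bH^{(i)}[\ell(f(X_j,G_j),y_j)]\big] - \mathbb{E}_{\cD}\tr\big[\bH^{(i)}[\ell(f(X,G),y)]\big]$, by a sign-agnostic uniform-convergence (covering) argument for the Hessian, via the Lipschitz continuity of the Hessian map in the weights (Proposition \ref{claim_hess}) and \citet[Lemma 2.4]{ju2022robust}; see equation \eqref{eq_trace_bound}. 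You invoke Hessian uniform convergence only to control the third-order remainder, but in the actual proof it is needed for this trace-concentration step.

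Skipping the cancellation also costs you the stated constant: you cannot recover the $(1+\epsilon)$ coefficient in \eqref{eq_main_1}. In your per-layer expression the empirical trace enters with coefficient $\approx \tfrac12(1+\epsilon)$ while the KL term carries $1/\epsilon$, so $\min_{\sigma_i>0}\big(a\sigma_i^2 + b/\sigma_i^2\big) = 2\sqrt{ab}$ yields $\Theta(1/\sqrt{\epsilon})\cdot\sqrt{B\alpha_i s_i^2 r_i^2/N}$ with $\alpha_i = \max_{\cD}\tr[\bH^{(i)}]$; the $1/\sqrt{\epsilon}$ is not an absolute constant and blows up precisely in the regime ``$\epsilon$ close to $0$'' that the lemma addresses. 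In the paper's decomposition, the near-cancellation leaves the empirical trace with net coefficient $\tfrac12\big(\tfrac1\beta - 1\big) = \tfrac{1-\beta}{2\beta} = O(\epsilon)$, and this factor of $(1-\beta)$ cancels against the $(1-\beta)$ in the denominator of the PAC-Bayes term inside $2\sqrt{ab}$, giving coefficient exactly $1/\beta = 1+\epsilon$. So the cancellation-plus-concentration step is not a stylistic choice: it is what makes both the form of the bound (with $\max_{\cD}\tr$ rather than $\max_{\cD}|\tr|$) and its constant attainable. Your point that the data-dependent choice of $\sigma_i$ requires discretization and a union bound is well taken (the paper glosses over it), but it does not repair this gap.
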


\paragraph{Proof Sketch.} At a high level, the above result follows from Taylor's expansion of the perturbed loss.
Suppose each parameter of $f$ is perturbed by an independent noise drawn from a Gaussian distribution with mean zero and variance $\sigma^2$.
Let $\tilde \ell(f(X, G), y)$ be the perturbed loss value of an input example $X, G$ with label $y$.
Let $\cE$ denote the noise injections organized in a vector.
Using Taylor's expansion of the perturbed loss $\tilde \ell$, we get:
{ 
\begin{align}
         \tilde \ell(f(X, G), y) - \ell(f(X, G), y) \label{eq_perturb} 
    =   \cE^{\top} \nabla \ell(f(X, G), y) + \frac 1 2 {\cE}^{\top} \bH\big[\ell(f(X, G), y)\big] {\cE} + \order{\sigma^3}. 
\end{align}}%
Notice that the expectation of the first-order expansion term above equals zero.
The expectation of the second-order expansion term becomes $\sigma^2$ times the trace of the loss Hessian.
To derive equation \eqref{eq_main_1}, we use a PAC-Bayes bound of \citet[Theorem 2]{mcallester2013pac}.
There are two parts to this PAC-Bayes bound:
\begin{itemize}[leftmargin=15pt]
    \item The expectation of the noise perturbation in equation \eqref{eq_perturb}, taken over the injected noise $\cE$;
    \item The KL divergence between the prior and the posterior, which is at most $s_i^2 r_i^2$ for layer $i$, for $i$ from $1$ up to $l$, within the hypothesis set $\cH$.
\end{itemize}
Thus, one can balance the two parts by adjusting the noise variance at each layer---this leads to the layerwise Hessian decomposition in equation \eqref{eq_main_1}.

A critical step is showing the uniform convergence of the Hessian matrix.
We achieve this based on the Lipschitz-continuity of the first and twice derivatives of the nonlinear activation mappings.
With these conditions, we prove the uniform convergence with a standard $\epsilon$-cover argument.
The complete proof can be found in Appendix \ref{proof_trace}.

\begin{remark}\label{remark_node}\normalfont
Our argument in Lemma \ref{lemma_gen_error} applies to graph-level prediction tasks, which assume an unknown distribution of graphs.
A natural question is whether the analysis applies to node-level prediction tasks, which are often treated as semi-supervised learning problems.
The issue with directly applying our analysis to semi-supervised learning is that the size of a graph is only finite.
Instead, a natural extension would be to think about our graph as a random sample from some population and then argue about generalization in expectation of the random sample.
It is conceivable that one can prove a similar spectral norm bound for node prediction in this extension. %
This would be an interesting question for future work.
\end{remark}

\medskip
\noindent\textbf{Part II:} {\itshape Spectral norm bounds of the trace of the Hessian.} Next, we explicitly analyze the trace of the Hessian at each layer.
We bound the trace of the Hessian using the spectral norm of the weight matrices and the graph based on the Lipschitz-continuity conditions from Theorem \ref{thm_mpgnn}.
Notice that the last layer is a linear pooling layer, which can be deduced from layer $l-1$. Hence, we consider the first $l-1$ layers below.

\begin{lemma}\label{lemma_trace_hess}
    In the setting of Theorem \ref{thm_mpgnn}, 
    the trace of the loss Hessian matrix $\bH^{(i)}$ taken over $W^{(i)}$ and $U^{(i)}$ satisfies the following, for any $i = 1,2,\cdots,l-1$,
    {\begin{align}
         & \bigabs{\bigtr{\bH^{(i)}\bigbracket{\ell\bigbrace{f(X, G), y}}}} \nonumber \\ 
    \lesssim ~ & s_l^2 \Bigg(\sum_{p=1}^{d_{i-1}}\sum_{q=1}^{d_i} \bignormFro{\frac{\partial^2 H^{(l-1)}}{\partial \big(W^{(i)}_{p,q}\big)^2}} + \sum_{p=1}^{d_0}\sum_{q=1}^{d_i} \bignormFro{\frac{\partial^2 H^{(l-1)}}{\partial\bigbrace{U_{p,q}^{(i)}}^2}}   
        + \bignormFro{\frac{\partial H^{(l-1)}}{\partial W^{(i)}}}^2 + \bignormFro{\frac{\partial H^{(l-1)}}{\partial U^{(i)}}}^2 \Bigg) \label{eq_loss_3} \\
    \lesssim ~ & {\bignorms{X}^2\bignorms{P_{_G}}^{2(l-1)}}  {\prod\nolimits_{j=1:\,j\neq i}^{l} s_j^2}. \label{eq_loss_4}
    \end{align}}%
\end{lemma}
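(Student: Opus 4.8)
The plan is to establish the two inequalities separately: \eqref{eq_loss_3} reduces the trace of the layer-$i$ loss Hessian to first- and second-order derivatives of the penultimate embedding $H^{(l-1)}$ in the layer-$i$ weights, and \eqref{eq_loss_4} then bounds those derivatives by unrolling the diffusion recursion \eqref{eq_matrix_mpgnn}. Throughout I abbreviate $f = f(X,G) = \frac{1}{n}\bm{1}_n^\top H^{(l-1)} W^{(l)}$ and read $\partial H^{(l-1)}/\partial W^{(i)}$ as the collection of per-entry derivative matrices $\{\partial H^{(l-1)}/\partial W^{(i)}_{p,q}\}_{p,q}$, consistent with the Frobenius norms appearing in the statement.

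For \eqref{eq_loss_3}, I would start from $\tr[\bH^{(i)}] = \sum_{\theta} \partial^2 \ell/\partial\theta^2$, where $\theta$ ranges over the entries of $W^{(i)}$ and $U^{(i)}$. Applying the chain rule twice to $\ell(f,y)$ gives the Gauss--Newton-type identity $\partial^2\ell/\partial\theta^2 = (\partial f/\partial\theta)^\top (\nabla^2_f \ell)(\partial f/\partial\theta) + \langle \nabla_f\ell,\, \partial^2 f/\partial\theta^2\rangle$. Since $\ell(\cdot,y)$ is Lipschitz with a Lipschitz gradient, both $\|\nabla_f\ell\|$ and $\|\nabla^2_f\ell\|$ are bounded by activation/loss-dependent constants, so $|\partial^2\ell/\partial\theta^2| \lesssim \|\partial f/\partial\theta\|^2 + \|\partial^2 f/\partial\theta^2\|$. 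Each derivative passes unchanged through the linear pooling map, $\partial f/\partial\theta = \frac{1}{n}\bm{1}_n^\top (\partial H^{(l-1)}/\partial\theta) W^{(l)}$ and likewise for the second derivative. Using $\|\frac{1}{n}\bm{1}_n^\top\| = n^{-1/2}\le 1$, $\|W^{(l)}\|\le s_l$, and $\|M W^{(l)}\|\le s_l\|M\|_F$, I obtain $\|\partial f/\partial\theta\|^2 \le s_l^2\,\|\partial H^{(l-1)}/\partial\theta\|_F^2$ and $\|\partial^2 f/\partial\theta^2\| \le s_l\,\|\partial^2 H^{(l-1)}/\partial\theta^2\|_F \le s_l^2\,\|\partial^2 H^{(l-1)}/\partial\theta^2\|_F$, absorbing $s_l\le s_l^2$ since $s_l\ge 1$. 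Summing over $\theta$ and collecting the first-order squared terms into $\|\partial H^{(l-1)}/\partial W^{(i)}\|_F^2$ and $\|\partial H^{(l-1)}/\partial U^{(i)}\|_F^2$ yields exactly \eqref{eq_loss_3}.

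For \eqref{eq_loss_4}, I would control the forward pass and the backward Jacobians by induction on the layer index. A forward-pass bound, proved by induction on $t$ using that each activation is Lipschitz and centered at zero, gives $\|H^{(t)}\|_F \lesssim \|X\|\,\|P_{_G}\|^{t}\prod_{j\le t}s_j$. For the first-order derivative, $H^{(t)}$ depends on $W^{(i)},U^{(i)}$ (with $t>i$) only through $H^{(t-1)}$ inside $\rho_t(P_{_G}\psi_t(H^{(t-1)}))W^{(t)}$, so each transition Jacobian $\partial H^{(t)}/\partial H^{(t-1)}$ carries exactly one $P_{_G}$ and one $W^{(t)}$ and has norm $\lesssim s_t\|P_{_G}\|$. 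At the base layer $i$, differentiating in $W^{(i)}$ produces the factor $\rho_i(P_{_G}\psi_i(H^{(i-1)}))$ of size $\lesssim \|P_{_G}\|\,\|H^{(i-1)}\|_F$, while differentiating in $U^{(i)}$ produces the factor $X$. Chaining the base factor with the $l-1-i$ transition Jacobians and squaring gives $\|\partial H^{(l-1)}/\partial W^{(i)}\|_F^2 \lesssim \|X\|^2\|P_{_G}\|^{2(l-1)}\prod_{j\ne i}s_j^2$, and the $U^{(i)}$ term is dominated by the same quantity. The second-order derivatives are handled identically by differentiating the chain once more: by the product rule this either inserts a bounded activation second derivative $\phi_t'',\rho_t'',\psi_t''$ (finite because the first derivatives of the activations are Lipschitz) or splits the chain into two first-order chains; in both cases the number of resulting terms depends only on $l$ and each term inherits the same $\|P_{_G}\|,s_j$ scaling, so $\sum_{p,q}\|\partial^2 H^{(l-1)}/\partial(W^{(i)}_{p,q})^2\|_F$ and its $U^{(i)}$ analogue obey the bound $\lesssim \|X\|^2\|P_{_G}\|^{2(l-1)}\prod_{j\ne i}s_j^2$. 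Substituting into \eqref{eq_loss_3} and using $s_l\ge 1$ to complete the product over all $j\ne i$ gives \eqref{eq_loss_4}.

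I expect the second-order backpropagation to be the main obstacle. Unlike the first-order Jacobian, which is a clean product of per-layer factors, the Hessian of $H^{(l-1)}$ in the weights is a sum of many terms generated by the nested chain rule through the compositions $\phi_t\circ(\cdot)\circ\rho_t\circ P_{_G}\circ\psi_t$, and I must verify both that the number of terms remains bounded by a constant independent of $N$ (depending only on $l$, absorbed into $\lesssim$ and the constant $C$) and that each term carries the correct powers of $\|P_{_G}\|$ and $s_j$. This is exactly the step where the twice-differentiability and the Lipschitzness of the activations' first and second derivatives assumed in Theorem \ref{thm_mpgnn} are indispensable, since they are what keep every second-order factor uniformly bounded over the whole data distribution $\cD$ so that the maximum over $(X,G,y)\sim\cD$ in \eqref{eq_loss_4} is finite.
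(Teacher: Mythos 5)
Your proposal takes essentially the same route as the paper's own proof: the chain-rule (Gauss--Newton) split of the loss Hessian, the Lipschitz bounds on $\nabla_f \ell$ and $\nabla_f^2 \ell$, and passing derivatives through the linear pooling layer give \eqref{eq_loss_3} exactly as in Appendix \ref{proof_trace}, while the layerwise induction on first- and second-order derivatives of $H^{(j)}$ (transition factors of size $\|P_{_G}\| s_t$, base factors $P_{_G}\psi_i(H^{(i-1)})$ and $X$, plus the forward-pass bound on $\|H^{(i-1)}\|_F$) is precisely the content of Propositions \ref{prop_first_mpgnn} and \ref{prop_second_mpgnn}. The only difference is cosmetic: you absorb the explicit $\kappa_0,\kappa_1$ constants into $\lesssim$, where the paper tracks them.
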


\paragraph{Proof Sketch.} Equation \eqref{eq_loss_3} uses the chain rule to expand out the trace of the Hessian and then applies the Lipschitzness of the loss function. Based on this result, equation \eqref{eq_loss_4} then bounds the first and second derivatives of $H^{(l-1)}$.
This step is achieved via an induction of $\partial H^{(j)}$ and $\partial^2 H^{(j)}$ over $W^{(i)}$ and $U^{(i)}$, for $j = 1,\dots,l-1$ and $i = 1,\dots, j$.
The induction relies on the feedforward architecture and the Lipschitzness of the first and second derivatives.
We leave out a few details, such as the constants in equations \eqref{eq_loss_3} and \eqref{eq_loss_4} that can be found in Appendix \ref{proof_first} and \ref{proof_second}.
Combining both parts together, we get equation \eqref{eq_matrix_mpgnn}.

\smallskip
\begin{remark}\label{remark_tech}\normalfont
We compare our analysis with the approach of \citet{liao2020pac}.
Both our analysis and \citet{liao2020pac} follow the PAC-Bayesian framework.
But additionally, we explore Lipschitz-continuity properties of the first and second derivatives of the activation functions (e.g., examples of such activations include tanh and sigmoid).
This allows us to measure the perturbation loss with Hessians, which captures data-dependent properties much more accurately than the margin analysis of \citet{liao2020pac}. 
It would be interesting to understand if one could still achieve spectral norm bounds on graphs under weaker smoothness conditions. This is left for future work.
\end{remark}

\subsection{Extensions}\label{sec_extension}

\textbf{Graph isomorphism networks.} This architecture concatenates every layer's embedding together for more expressiveness \cite{xu2018powerful}.
A classification layer is used after the layers.
Let $V^{(i)}$ denote a $d_i$ by $k$ matrix (recall $k$ is the output dimension).
Denote the set of these matrices by $\cV$.
We average the loss of all of the classification layers.
Let $\hat\cL_{_{GIN}}(f)$ denote the average loss of $f$ over $N$ independent samples of $\cD$.
Let ${\cL}_{_{GIN}}(f)$ denote the expected loss of $f$ over a random sample of $\cD$.
See also equation \eqref{eq_loss_gin} in Appendix \ref{app_gin} for their precise definitions.

Next, we state a generalization bound for graph isomorphism networks.
Let $f$ be any $l$-layer MPNN with weights defined in a hypothesis space $\cH$: the parameters of $f$ reside within the constraints from equation \eqref{eq_cH};
further, for every $i$ from $1$ up to $l$, the spectral norm of $V^{(i)}$ is less than $s_l$. %
Building on Lemma \ref{lemma_trace_hess}, we show a bound that scales with the spectral norm generalization of the averaged graph diffusion matrices.
Let $P_{_{GIN}}$ denote the average of $l-1$ matrices: $P_{_G}, P_{_G}^2, \dots, P_{_G}^{l-1}$.
We state the result below.

\begin{corollary}\label{thm_gin}
    Suppose the nonlinear activation mappings and the loss function satisfy the conditions stated in Theorem \ref{thm_mpgnn}.    
    With probability at least $1- \delta$ for any $\delta \ge 0$, and any $\epsilon$ close to zero, any $f$ in $\cH$ satisfies:
    {\small\begin{align}
         {L}_{_{GIN}}(f) \le (1+\epsilon)\hat\cL_{_{GIN}}(f) \label{eq_gin_result} 
         + {\sum_{i=1}^{l} \sqrt {\frac { {BC  d_i \cdot \left({\max\limits_{(X, G, y) \sim \cD} \bignorms{X}^2 \bignorms{ P_{_{GIN}}}^2}\right) \Bigbrace{r_i^2 \prod\limits_{j=1}^{l} s_j^2}}} {N}}} + \bigo{\frac{\log(\delta^{-1})}{N^{3/4}}}, 
    \end{align}}%
    where $B$ is an upper bound on the value of the loss function $\ell$ across the data distribution $\cD$, $C$ is a fixed constant that only depends on the Lipschitz-continuity of the activation mappings and the loss.
\end{corollary}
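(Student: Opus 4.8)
The plan is to reuse the two-part argument behind Theorem~\ref{thm_mpgnn}, treating the averaged GIN objective as the loss: I would apply Lemma~\ref{lemma_gen_error} to reduce the generalization gap to a sum of layerwise Hessian traces, and then re-derive the trace bound of Lemma~\ref{lemma_trace_hess} for the concatenated architecture. The only genuinely new work is in the second step, where aggregating over the per-layer classification heads converts the single power $\bignorms{P_{_G}}^{l-1}$ into the spectral norm of the averaged operator $P_{_{GIN}} = \frac{1}{l-1}\sum_{i=1}^{l-1} P_{_G}^{i}$.

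\textbf{Part I (PAC-Bayes reduction).} First I would check that $\hat\cL_{_{GIN}}$ inherits the regularity hypotheses of Theorem~\ref{thm_mpgnn}. Each head is the affine read-out $H^{(t)} \mapsto \frac{1}{n}\bm 1_n^\top H^{(t)} V^{(t)}$ composed with $\ell(\cdot, y)$, and a convex combination of bounded, Lipschitz, twice-differentiable functions with Lipschitz first and second derivatives retains all of these properties. Hence Lemma~\ref{lemma_gen_error} applies verbatim with $\hat\cL_{_{GIN}}$ in place of $\hat\cL$, producing a bound of the form \eqref{eq_gin_result} but with $\max_{(X,G,y)} \tr[\bH^{(i)}]$ in place of the explicit graph factor, where $\bH^{(i)}$ is now the Hessian of the averaged loss over layer~$i$'s parameters. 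The extra head matrices $V^{(i)}$ are perturbed alongside $W^{(i)}, U^{(i)}$; since $\bignorms{V^{(i)}} \le s_l$, their contribution to the KL term folds into the $\prod_j s_j^2$ factor exactly as the other weight matrices do.

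\textbf{Part II (trace bound).} The crux is re-running Lemma~\ref{lemma_trace_hess} for the combined output $g = \frac{1}{l-1}\sum_{t=1}^{l-1} \frac{1}{n}\bm 1_n^\top H^{(t)} V^{(t)}$ that feeds the loss. The chain-rule expansion \eqref{eq_loss_3} is unchanged and again reduces the trace to the first-derivative terms $\bignormFro{\partial g/\partial W^{(i)}}^2$, $\bignormFro{\partial g/\partial U^{(i)}}^2$ and the corresponding second-derivative sums. Because layer~$i$ only influences the heads $t \ge i$, I would write $\partial g/\partial W^{(i)} = \frac{1}{l-1}\sum_{t \ge i} \frac{1}{n}\bm 1_n^\top (\partial H^{(t)}/\partial W^{(i)}) V^{(t)}$, a single sum over heads. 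The induction of Lemma~\ref{lemma_trace_hess} shows that $\partial H^{(t)}/\partial W^{(i)}$ carries exactly $t$ applications of $P_{_G}$; after factoring out uniform spectral-norm bounds on the weight and head matrices, the graph-dependent part of this sum assembles into the single matrix $\frac{1}{l-1}\sum_{t} P_{_G}^{t} = P_{_{GIN}}$ sitting inside one spectral norm, so that $\bignormFro{\partial g/\partial W^{(i)}} \lesssim \bignorms{X}\,\bignorms{P_{_{GIN}}} \prod_{j \neq i} s_j$ up to the head norm $s_l$. Squaring yields the factor $\bignorms{P_{_{GIN}}}^2$; the same argument for $U^{(i)}$ and for the second-derivative sums gives the analogous bounds, and substituting them into the Lemma~\ref{lemma_gen_error} estimate produces \eqref{eq_gin_result}.

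\textbf{Main obstacle.} I expect the delicate step to be forcing the diffusion factors to combine as a \emph{matrix} sum $P_{_{GIN}}$ inside a single norm, rather than as a sum of spectral norms $\frac{1}{l-1}\sum_t \bignorms{P_{_G}}^{t}$ (the coarser triangle-inequality bound reported for GIN in Table~\ref{table_theory}), or, worse, the sum of squares $\frac{1}{l-1}\sum_t \bignorms{P_{_G}}^{2t}$ that a naive per-head averaging of the losses would produce. Obtaining the tighter $\bignorms{P_{_{GIN}}}^2$ requires the loss to act on the single combined output $g$ and the per-head weight factors to be pulled out uniformly in $t$, so that only the averaged diffusion operator stays coupled to the graph. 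Propagating the second-derivative induction for $\partial^2 H^{(t)}/\partial (W^{(i)})^2$ through the deeper nonlinear layers while keeping each head's graph dependence at $P_{_G}^{t}$ is the other place that needs care; the remaining bookkeeping of constants transfers directly from Appendices~\ref{proof_first} and \ref{proof_second}.
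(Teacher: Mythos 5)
Your Part~I is fine and matches the paper: Lemma~\ref{lemma_gen_error} is applied to the averaged GIN objective, with the classification matrices $V^{(i)}$ absorbed into the hypothesis set exactly as you describe. The genuine gap is in Part~II, and it concerns \emph{what} is being proved rather than bookkeeping. The corollary is a statement about $\hat\cL_{_{GIN}}$ as defined in equation \eqref{eq_loss_gin}, i.e.\ the \emph{average of per-head losses} $\bar{\ell} = \frac{1}{l-1}\sum_{j}\ell\bigbrace{\frac 1n\bm{1}_n^{\top}H^{(j)}V^{(j)}, y}$. Your argument instead requires the loss to act on the single combined output $g = \frac{1}{l-1}\sum_{t}\frac 1n\bm{1}_n^{\top}H^{(t)}V^{(t)}$ --- you say so explicitly in your ``main obstacle'' paragraph, where you concede that per-head averaging of the losses would only produce $\frac{1}{l-1}\sum_t\bignorms{P_{_G}}^{2t}$. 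But per-head averaging of the losses \emph{is} the paper's definition, so even a completed version of your Part~II would bound the generalization gap of a different objective than the one in the statement. The paper's own proof keeps the per-head definition throughout: by linearity of the Hessian, $\tr\big[\bH^{(i)}[\bar{\ell}]\big]$ equals the average of the per-head traces; each per-head trace is bounded by re-running Lemma~\ref{lemma_trace_hess} together with Propositions~\ref{prop_first_mpgnn} and \ref{prop_second_mpgnn}, with $V^{(j)}$ in the role of the readout matrix, yielding factors $\max\bigbrace{1,\bignorms{P_{_G}}^{2(j-i+1)}}$; the average over $j$ is then converted to $\max\bigbrace{1,\bignorms{P_{_{GIN}}}^2}$ in one final displayed inequality.

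There is also a second, independent problem with your route: even for the combined output $g$, the graph factors do not ``assemble into a single matrix $P_{_{GIN}}$ sitting inside one spectral norm.'' Each Jacobian $\partial H^{(t)}/\partial W^{(i)}$ interleaves the multiplications by $P_{_G}$ with Hadamard products by the masks $\phi_s', \rho_s', \psi_s'$ and with the weights $W^{(s)}$, all of which differ across heads $t$; there is no common factor to pull out of the sum, so the triangle inequality only gives the average of norms $\frac{1}{l-1}\sum_t \bignorms{P_{_G}}^{t-i+1}$ (times uniform weight bounds), not $\bignorms{\frac{1}{l-1}\sum_t P_{_G}^{t}}$. These coincide when $P_{_G}$ is symmetric with dominant positive eigenvalue (e.g.\ an adjacency matrix, by Perron--Frobenius, which is how Table~\ref{table_theory} reports the GIN row), but in general the average of norms only upper-bounds $\bignorms{P_{_{GIN}}}$, so your intermediate claim needs that extra assumption. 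That said, your diagnosis of where the difficulty lies is accurate and worth keeping: with the paper's per-head loss, the trace computation naturally outputs $\frac{1}{l-1}\sum_j\max\bigbrace{1,\bignorms{P_{_G}}^{2j}}$, and the paper's final step bounding this by $\max\bigbrace{1,\bignorms{P_{_{GIN}}}^2}$ runs against Jensen's inequality whenever $\bignorms{P_{_G}}>1$ (the average of squares dominates the square of the average); that is the weakest link of the argument, and a correct treatment must confront it directly rather than redefine the loss.
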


The proof can be found in Appendix \ref{app_gin}.
In particular, we apply the trace norm bound over the model output of every layer.
The classification layer, which only uses a linear transformation, can also be incorporated. 

\medskip
\noindent\textbf{Fine-tuning Graph Neural Networks.} We note that all of our bounds can be applied to the fine-tuning setting, where a graph neural network is initialized with pretrained weights and then fine-tuned on the target task.
The results can be extended to this setting by setting the norm bounds within equation \eqref{eq_cH} as the distance between the pretrained and fine-tuned model.

\section{An Algorithm for Fine-tuning Graph Neural Networks}\label{sec_alg}

A common practice to apply deep networks on graphs is to fit pretrained GNNs, which can be fine-tuned on a target task.
Typically, only a small amount of data is available for fine-tuning.
Thus, the fine-tuned model may overfit the training data, incurring a large generalization gap. %
A central insight from our analysis is that maintaining a small perturbed loss ensures lower generalization gaps.
Motivated by this observation, we design an algorithm to minimize the perturbed risk of GNN.

Let $f$ denote a GNN and $\tilde \ell(f)$ be the perturbed loss of $f$, with noise injected inside $f$'s weight matrices.
Recall from step \eqref{eq_perturb} that $\tilde \ell(f)$ is equal to $\ell(f)$ plus several expansion terms.
In particular, minimizing the expectation of $\tilde \ell(f)$ is equivalent to minimizing $\hat\cL(f)$ plus the trace of the Hessian matrix.
To estimate this expectation, we sample several noise perturbations independently.
Because Taylor's expansion of $\tilde \ell(f)$ also involves the gradient, we cancel this out by computing the perturbed loss with the negated perturbation. 
Algorithm \ref{alg_nso} describes the complete procedure.

We evaluate the above algorithm for fine-tuning pretrained GNNs.
Empirical results reveal that this algorithm achieves better test performance compared with existing regularization methods for five graph classification tasks.
The code repository for reproducing the experiments is at \url{https://github.com/VirtuosoResearch/Generalization-in-graph-neural-networks}.

\subsection{Experimental setup}

We focus on graph classification tasks, including five datasets from the MoleculeNet benchmark \cite{wu2018moleculenet}.
Each dataset aims to predict whether a molecule has a certain chemical property given its graph representation. 
We use pretrained GINs from \citet{hu2019strategies} and fine-tune the model on each downstream task. Following their experimental setup, we use the scaffold split for the dataset, and the model architecture is fixed for all five datasets. Each model has $5$ layers; each layer has $300$ hidden units and uses average pooling in the readout layer.
We set the parameters, such as the learning rate and the number of epochs following their setup.

\begin{algorithm}[t!]
	\caption{Algorithms for fine-tuning graph neural networks}\label{alg_nso}
	\begin{small}
		\textbf{Input}: A training dataset $\{(X_i, G_i, y_i)\}_{i=1}^{N}$ with node feature $X_i$, graph $G_i$, and graph-level label $y_i$, for $i = 1, \dots, N$.
  
        \vspace{0.02in}%
		\textbf{Require}: Number of perturbations $m$, noise variance $\sigma^2$, learning rate $\eta$, and number of epochs $T$.

        \vspace{0.02in}%
		\textbf{Output}: A trained model $f^{(T)}$.
  
		\begin{algorithmic}[1]
			\STATE At $t = 0$, initialize the parameters of $f^{(0)}$ with pretrained GNN weight matrices.
            \vspace{0.02in}
			\FOR{$1 \le t \le T$}
                \FOR{$1 \le i \le m$}
                    \vspace{0.02in}
                    \STATE Add perturbation $\cE_i$ drawn from a normal distribution with mean zero and variance $\sigma^2$.
                    \vspace{0.02in}
                    \STATE Let $\tilde \cL_i(f^{(t-1)})$ be the training loss of the model $f^{(t-1)}$ with weight matrix perturbed by $\cE_i$.
                    \vspace{0.02in}                    
                    \STATE Let $\tilde \cL_i^{'}(f^{(t-1)})$ be the training loss of the model $f^{(t-1)}$ with weight matrix perturbed by $-\cE_i$.
                    \vspace{0.02in}
                \ENDFOR
			    \STATE Use stochastic gradient descent to update $f^{(t)}$ as $f^{(t-1)} -  \frac{\eta}{2m} \sum_{i=1}^m 
			    \big(\nabla \tilde{\cL}_i\bigbrace{f^{(t-1)}}
			    + \nabla \tilde{\cL}_i^{'}\bigbrace{f^{(t-1)}}\big )$.
                \vspace{0.05in}
			\ENDFOR
		\end{algorithmic}
	\end{small}
\end{algorithm}

We compare our algorithm with previous regularization methods that serve as benchmark approaches for improving generalization.
This includes early stopping, weight decay, dropout, weight averaging \cite{izmailov2018averaging}, and distance-based regularization \cite{gouk2020distance}.
For implementing our algorithm, we set the number of perturbations as $10$ and choose the noise standard deviation $\sigma$ with a grid search in $\{0.01, 0.02, 0.05, 0.1, 0.2, 0.5\}$.

\subsection{Experimental results}\label{sec_empirical}

Table \ref{tab_molecule_pred} reports the test ROC-AUC performance averaged over multiple binary prediction tasks in each dataset. 
Comparing the average ranks of methods across datasets, our algorithm outperforms baselines on all five molecular property prediction datasets. 
The results support our theoretical analysis that the noise stability property of GNN is a strong measure of empirical generalization performance. 
Next, we provide details insights from applying our algorithm.

First, we hypothesize that our algorithm is particularly effective when the empirical generalization gap is large.
To test the hypothesis, we vary the size of the training set in the BACE dataset; we compare the performance of our algorithm with early stopping until epoch $100$. 
We plot the generalization gap between the training and test losses during training, shown in Figure \ref{fig_exp_N1}-\ref{fig_exp_N2}. 
As the trend shows, our algorithm consistently reduces the generalization gap, particularly when the training set size $N$ is $600$.  

Second, we hypothesize that our algorithm helps reduce the trace of the Hessian matrix (associated with the loss).
We validate this by plotting the trace of the Hessian as the number of epochs progresses during training, again using the BACE dataset as an example.
Specifically, we average the trace over the training dataset.
Figure \ref{fig_exp_tr} shows the averaged trace values during the fine-tuning process.
The results confirm that noise stability optimization reduces the trace of the Hessian matrix (more significantly than early stopping).
We note that noise stability optimization also reduces the largest eigenvalue of the Hessian matrix, along with reducing the trace.
This can be seen in Figure \ref{fig_exp_eig}.

Lastly, we study the number of perturbations used in our algorithm. While more perturbations would lead to a better estimation of the noisy stability, we observe that using $10$ perturbations is sufficient for getting the most gain.
We also validate that using negated perturbations consistently performs better than not using them across five datasets.
This is because the negated perturbation cancels out the first-order term in Taylor's expansion.
In our ablation study, we find that adding the negated perturbation performs better than not using it by 1\% on average over the five datasets.

\begin{remark}\normalfont %
We note that noise stability optimization is closely related to sharpness-aware minimization (SAM) \cite{foret2020sharpness}.
Noise stability optimization differs in two aspects compared with SAM.
First, SAM requires solving constrained minimax optimization, which may not even be differentiable \cite{daskalakis2021complexity}.
Our objective remains the same after perturbation.
Second, SAM reduces the largest eigenvalue of the Hessian matrix, which can be seen from Taylor's expansion of $\tilde \ell(f)$.
We reduce the trace of the Hessian matrix, thus reducing the largest eigenvalue.
There is another related work that regularizes noise stability in NLP \cite{hua2021noise}.
Their approach adds noise perturbation to the input and regularizes the loss change in the output. Our approach adds perturbation to weight matrices.
\end{remark}

\begin{table*}[!t]
\centering
\begin{footnotesize}
\caption{Test ROC-AUC (\%) score for five molecular property prediction datasets with different regularization methods. The reported results are averaged over five random seeds.}\label{tab_molecule_pred}
\begin{tabular}{@{}lcccccc@{}}
\toprule
Dataset & SIDER & ClinTox & BACE & BBBP & Tox21 \\ 
\# Molecule Graphs & 1,427 & 1,478 & 1,513 & 2,039 & 7,831 \\
\# Binary Prediction Tasks & 27 & 2 & 1 & 1 & 12 \\ \midrule
Early Stopping       & 61.06$\pm$1.48 & 68.25$\pm$2.63 & 82.86$\pm$0.95 & 67.80$\pm$1.05 & 77.52$\pm$0.23 \\
Weight Decay         & 61.30$\pm$0.21 & 67.43$\pm$2.88 & 83.72$\pm$0.99 & 67.98$\pm$2.41 & 78.23$\pm$0.35 \\
Dropout              & 63.90$\pm$0.90 & 73.70$\pm$2.80 & 84.50$\pm$0.70 & 68.07$\pm$1.30 & 78.30$\pm$0.30 \\
Weight Averaging     & 63.67$\pm$0.34 & 78.78$\pm$1.49 & 83.93$\pm$0.36	& 70.26$\pm$0.24 & 77.59$\pm$0.11 \\
Distance-based Reg. & 64.36$\pm$0.48 & 76.68$\pm$1.19 & 84.65$\pm$0.48 & 70.37$\pm$0.44 & 78.62$\pm$0.24 \\
\midrule
\textbf{Ours (Alg. \ref{alg_nso})} & \textbf{65.13$\pm$0.18} & \textbf{80.18$\pm$0.82} & \textbf{85.07$\pm$0.43} & \textbf{71.22$\pm$0.36} & \textbf{79.31$\pm$0.24} \\ \bottomrule
\end{tabular}
\end{footnotesize}
\end{table*}

\begin{figure*}[!t]
    \centering
	\caption{In Figures \ref{fig_exp_N1} and \ref{fig_exp_N2}, we show that our algorithm is particularly effective at reducing the generalization gap for small training dataset sizes $N$.
	In Figures \ref{fig_exp_tr} and \ref{fig_exp_eig}, we find that both the trace and the largest eigenvalue of the loss Hessian matrix decreased during training.}
	\begin{subfigure}[b]{0.24\textwidth}
		\centering
		\includegraphics[width=0.975\textwidth]{./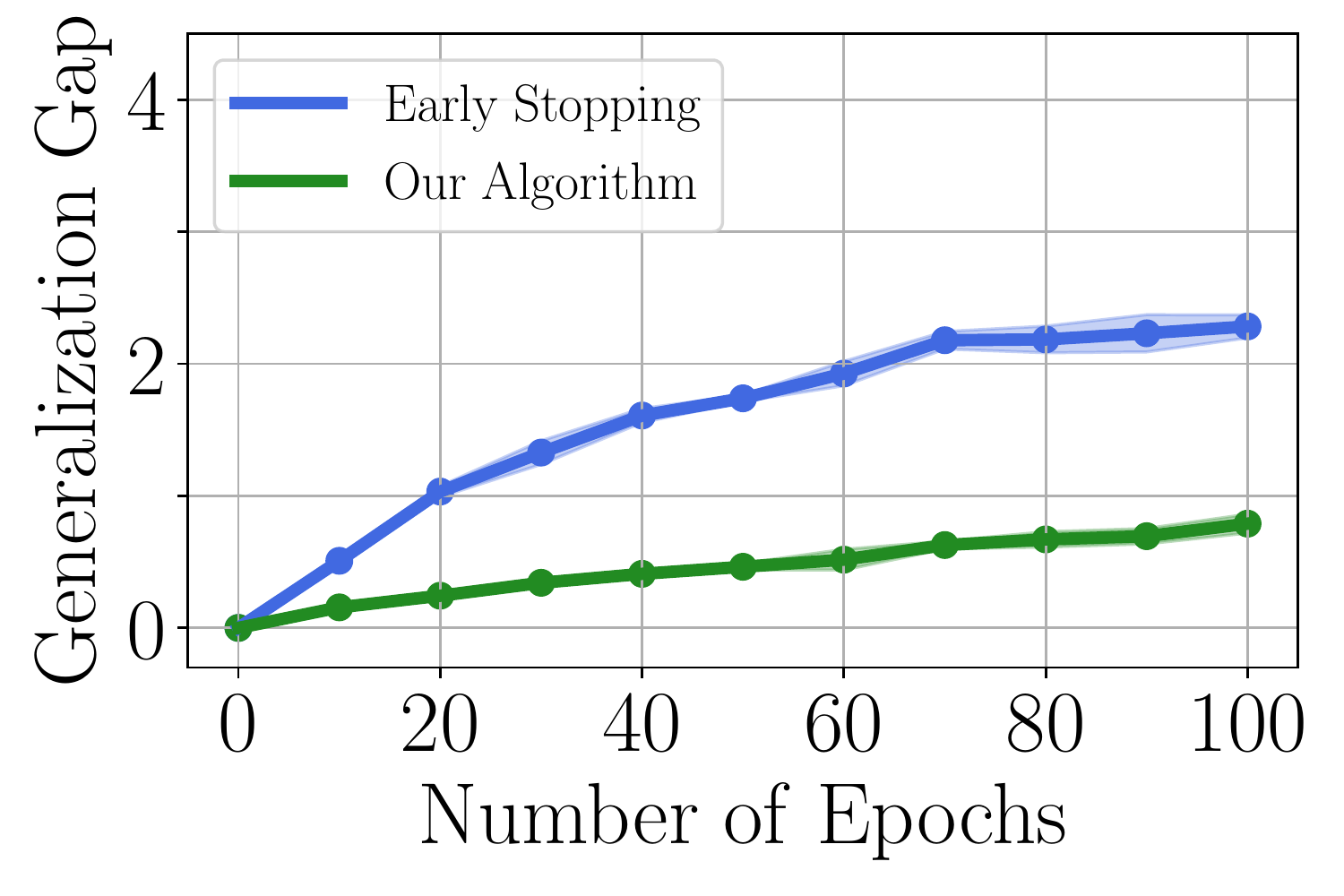}
		\caption{$N=1200$}\label{fig_exp_N1}
	\end{subfigure}\hfill
	\begin{subfigure}[b]{0.24\textwidth}
		\centering
		\includegraphics[width=0.975\textwidth]{./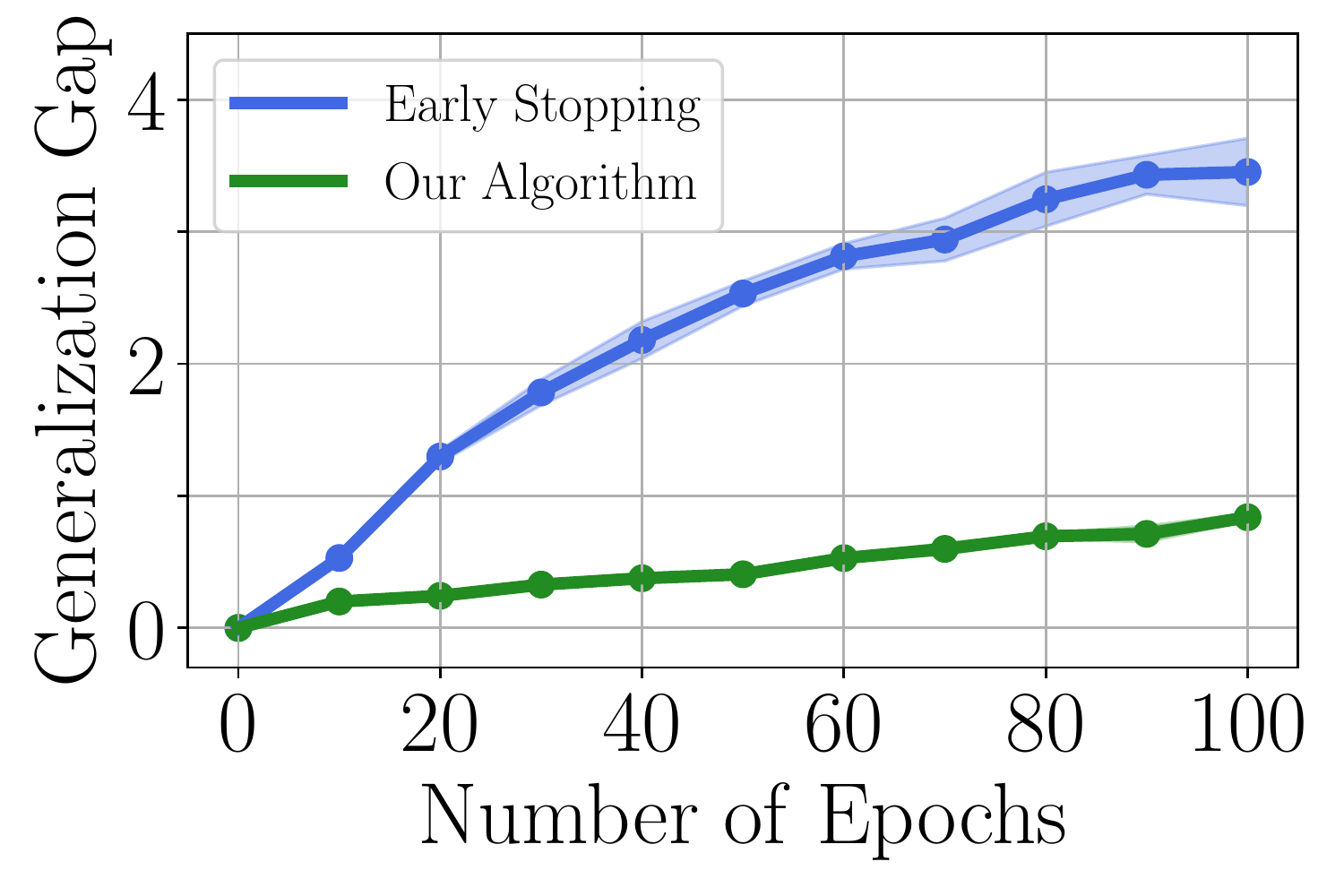}
		\caption{$N=600$}\label{fig_exp_N2}
	\end{subfigure}\hfill
	\begin{subfigure}[b]{0.24\textwidth}
		\centering
		\includegraphics[width=0.975\textwidth]{./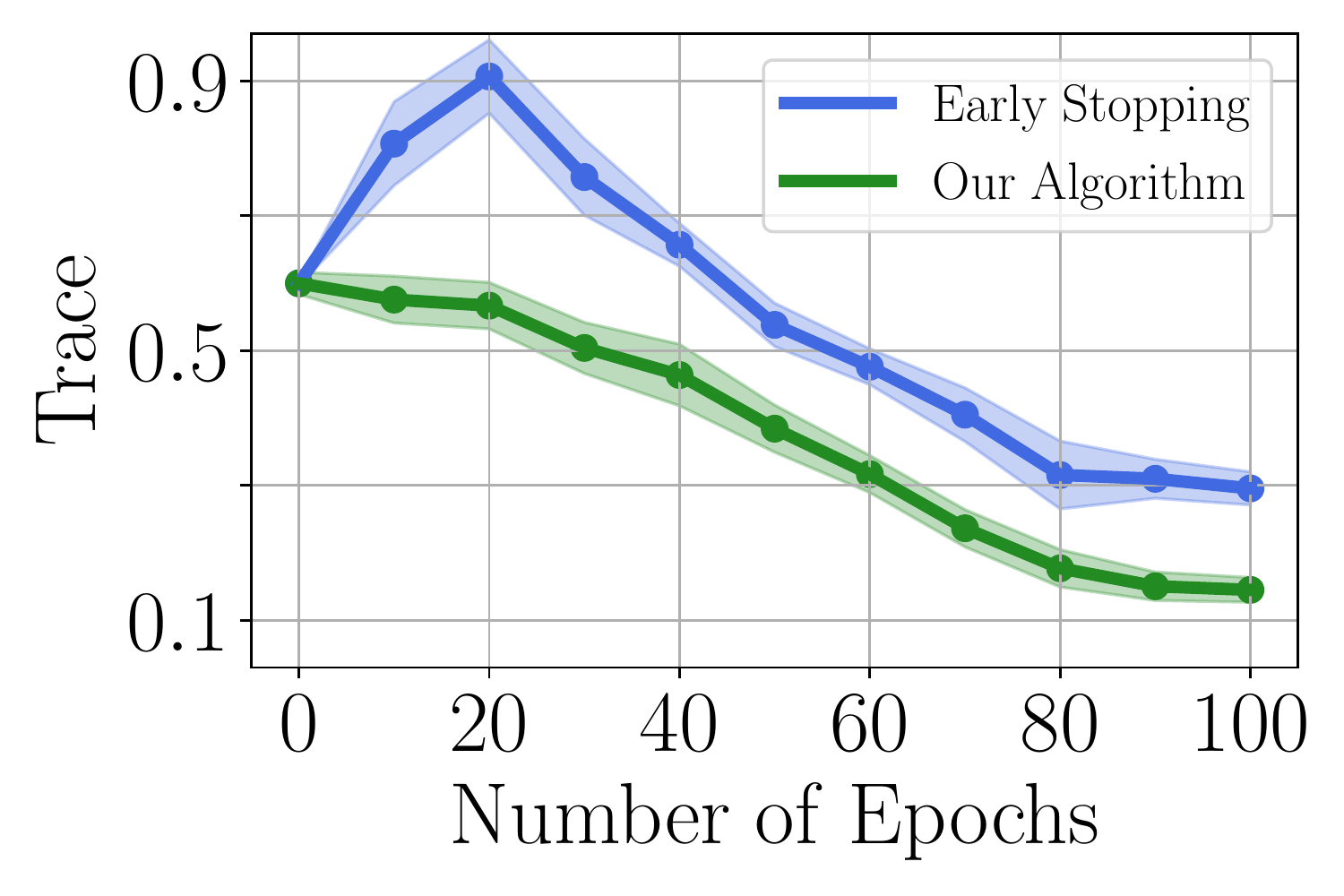}
		\caption{Trace}\label{fig_exp_tr}
	\end{subfigure}\hfill
	\begin{subfigure}[b]{0.24\textwidth}
		\centering
		\includegraphics[width=0.975\textwidth]{./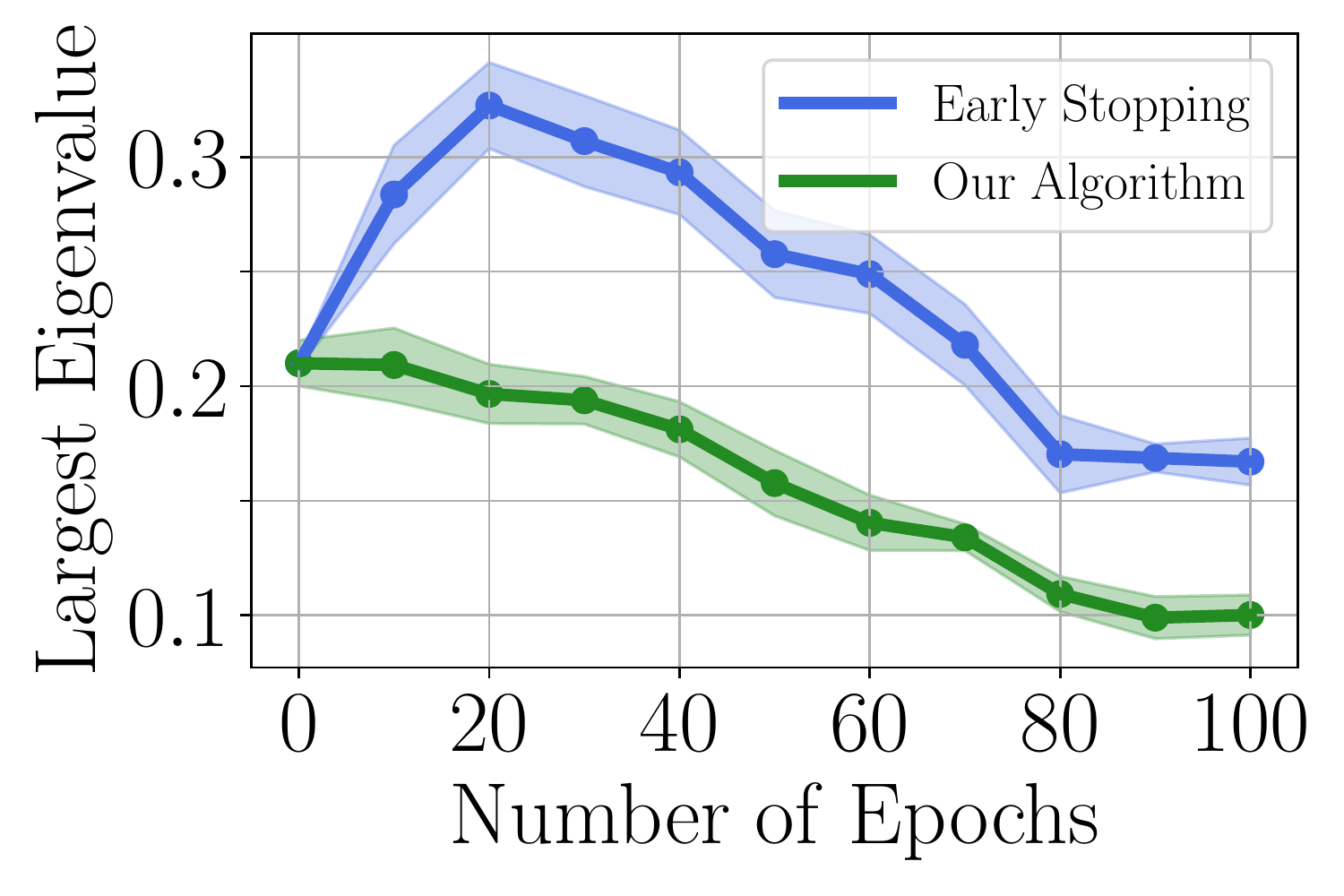}
		\caption{Largest Eigenvalue}\label{fig_exp_eig}
	\end{subfigure}%
    \label{fig_ablate_sample_size} 
\end{figure*}
\section{Conclusion}\label{sec_conclude}

This work develops generalization bounds for graph neural networks with a sharp dependence on the graph diffusion matrix. The results are achieved within a unified setting that significantly extends prior works. In particular, we answer an open question mentioned in \citet{liao2020pac}: a refined PAC-Bayesian analysis can improve the generalization bounds for message-passing neural networks. These bounds are obtained by analyzing the trace of the Hessian matrix with the Lipschitz-continuity of the activation functions. Empirical findings suggest that the Hessian-based bound matches observed gaps on real-world graphs. Thus, our work also develops a practical tool to measure the generalization performance of graph neural networks. The algorithmic results with noise stability optimization further demonstrate the practical implication of our findings.

Our work opens up many interesting questions for future work. Could the new tools we have developed be used to study generalization in graph attention networks \cite{velivckovic2017graph}? Could Hessians be used for measuring out-of-distribution generalization gaps of graph neural networks?
We remark that in a subsequent paper \cite{ju2023noise}, we provide a rigorous analysis of the convergence rates of our algorithm presented in the experiments, showing matching upper and lower bounds given Lipschitz-continuous conditions of the gradients.

\section*{Acknowledgement}

Thanks to Renjie Liao, Haoyu He, and the anonymous referees for providing constructive feedback on our work.
Thanks to Yang Yuan for the helpful discussions.
H. Z. is grateful to Stefanie Jegelka for an invitation to present this paper at her lab meeting.
H. J. and D. L. acknowledge financial support from the startup fund of Khoury College of Computer Sciences, Northeastern University.

\begin{refcontext}[sorting=nyt]
\printbibliography

@inproceedings{yanardag2015deep,
  title={Deep graph kernels},
  author={Yanardag, Pinar and Vishwanathan, S.},
  booktitle={KDD},
  year={2015}
}

@article{wu2018moleculenet,
  title={MoleculeNet: a benchmark for molecular machine learning},
  author={Wu, Zhenqin and Ramsundar, Bharath and Feinberg, Evan N and Gomes, Joseph and Geniesse, Caleb and Pappu, Aneesh S and Leswing, Karl and Pande, Vijay},
  journal={Chemical science},
  year={2018}
}

@article{hu2019strategies,
  title={Strategies for pre-training graph neural networks},
  author={Hu, Weihua and Liu, Bowen and Gomes, Joseph and Zitnik, Marinka and Liang, Percy and Pande, Vijay and Leskovec, Jure},
  journal={ICLR},
  year={2020}
}

@inproceedings{arora2018stronger,
  title={Stronger generalization bounds for deep nets via a compression approach},
  author={Arora, Sanjeev and Ge, Rong and Neyshabur, Behnam and Zhang, Yi},
  booktitle={ICML},
  year={2018},
}

@article{neyshabur2017pac,
  title={A pac-bayesian approach to spectrally-normalized margin bounds for neural networks},
  author={Neyshabur, Behnam and Bhojanapalli, Srinadh and Srebro, Nathan},
  journal={ICLR},
  year={2018}
}

@article{mcallester2013pac,
  title={A PAC-Bayesian tutorial with a dropout bound},
  author={McAllester, David},
  journal={arXiv preprint arXiv:1307.2118},
  year={2013}
}

@article{zhang2016understanding,
  title={Understanding deep learning requires rethinking generalization},
  author={Zhang, Chiyuan and Bengio, Samy and Hardt, Moritz and Recht, Benjamin and Vinyals, Oriol},
  journal={ICLR},
  year={2017}
}

@article{gouk2020distance,
	title={Distance-Based Regularisation of Deep Networks for Fine-Tuning},
	author={Gouk, Henry and Hospedales, Timothy M and Pontil, Massimiliano},
	journal={ICLR},
	year={2021}
}

@article{dziugaite2017computing,
	title={Computing nonvacuous generalization bounds for deep (stochastic) neural networks with many more parameters than training data},
	author={Dziugaite, Gintare Karolina and Roy, Daniel M},
	journal={UAI},
	year={2017}
}

@article{jiang2019fantastic,
	title={Fantastic generalization measures and where to find them},
	author={Jiang, Yiding and Neyshabur, Behnam and Mobahi, Hossein and Krishnan, Dilip and Bengio, Samy},
	journal={ICLR},
	year={2020}
}

@article{bartlett2017spectrally,
	title={Spectrally-normalized margin bounds for neural networks},
	author={Bartlett, Peter and Foster, Dylan J and Telgarsky, Matus},
	journal={NeurIPS},
	year={2017}
}

@inproceedings{guedj2019primer,
  title={A Primer on PAC-Bayesian Learning},
  author={Guedj, Benjamin},
  booktitle={Proceedings of the French Mathematical Society},
  year={2019}
}

@article{foret2020sharpness,
  title={Sharpness-aware minimization for efficiently improving generalization},
  author={Foret, Pierre and Kleiner, Ariel and Mobahi, Hossein and Neyshabur, Behnam},
  journal={ICLR},
  year={2021}
}

@article{long2020generalization,
  title={Generalization bounds for deep convolutional neural networks},
  author={Long, Philip M and Sedghi, Hanie},
  journal={ICLR},
  year={2020}
}

@article{jin2019short,
  title={A short note on concentration inequalities for random vectors with subgaussian norm},
  author={Jin, Chi and Netrapalli, Praneeth and Ge, Rong and Kakade, Sham M and Jordan, Michael I},
  journal={arXiv preprint arXiv:1902.03736},
  year={2019}
}

@inproceedings{golowich2018size,
  title={Size-independent sample complexity of neural networks},
  author={Golowich, Noah and Rakhlin, Alexander and Shamir, Ohad},
  booktitle={COLT},
  year={2018}
}

@article{bartlett2019nearly,
  title={Nearly-tight VC-dimension and pseudodimension bounds for piecewise linear neural networks},
  author={Bartlett, Peter and Harvey, Nick and Liaw, Christopher and Mehrabian, Abbas},
  journal={JMLR},
  year={2019}
}

@article{vishwanathan2010graph,
  title={Graph kernels},
  author={Vishwanathan, S. and Schraudolph, Nicol and Kondor, Risi and Borgwardt, Karsten},
  journal={JMLR},
  year={2010}
}

@article{li2021improved,
  title={Improved regularization and robustness for fine-tuning in neural networks},
  author={Li, Dongyue and Zhang, Hongyang R},
  journal={NeurIPS},
  year={2021}
}

@inproceedings{goel2015note,
  title={A note on modeling retweet cascades on Twitter},
  author={Goel, Ashish and Munagala, Kamesh and Sharma, Aneesh and Zhang, Hongyang},
  booktitle={Workshop on Algorithms and Models for the Web-Graph, 2015},
  year={2015}
}

@inproceedings{zhang2019pruning,
  title={Pruning based distance sketches with provable guarantees on random graphs},
  author={Zhang, Hongyang and Yu, Huacheng and Goel, Ashish},
  booktitle={WWW},
  year={2019}
}

@inproceedings{goel2014connectivity,
  title={Connectivity in random forests and credit networks},
  author={Goel, Ashish and Khanna, Sanjeev and Raghvendra, Sharath and Zhang, Hongyang},
  booktitle={SODA},
  year={2014}
}

@article{ju2023noise,
  title={Noise Stability Optimization for Flat Minima with Tight Rates},
  author={Ju, Haotian and Li, Dongyue and Zhang, Hongyang R},
  journal={arXiv preprint arXiv:2306.08553},
  year={2023}
}

@article{kipf2016semi,
  title={Semi-supervised classification with graph convolutional networks},
  author={Kipf, Thomas N and Welling, Max},
  journal={ICLR},
  year={2017}
}

@inproceedings{garg2020generalization,
  title={Generalization and representational limits of graph neural networks},
  author={Garg, Vikas and Jegelka, Stefanie and Jaakkola, Tommi},
  booktitle={ICML},
  year={2020}
}

@article{liao2020pac,
  title={A PAC-Bayesian Approach to Generalization Bounds for Graph Neural Networks},
  author={Liao, Renjie and Urtasun, Raquel and Zemel, Richard},
  journal={ICLR},
  year={2021}
}

@article{ju2022robust,
  title={Robust Fine-Tuning of Deep Neural Networks with Hessian-based Generalization Guarantees},
  author={Ju, Haotian and Li, Dongyue and Zhang, Hongyang R},
  journal={ICML},
  year={2022}
}

@article{xu2018powerful,
  title={How powerful are graph neural networks?},
  author={Xu, Keyulu and Hu, Weihua and Leskovec, Jure and Jegelka, Stefanie},
  journal={ICLR},
  year={2019}
}

@article{jegelka2022theory,
  title={Theory of Graph Neural Networks: Representation and Learning},
  author={Jegelka, Stefanie},
  journal={arXiv preprint arXiv:2204.07697},
  year={2022}
}

@inproceedings{dai2016discriminative,
  title={Discriminative embeddings of latent variable models for structured data},
  author={Dai, Hanjun and Dai, Bo and Song, Le},
  booktitle={ICML},
  year={2016}
}

@inproceedings{jin2018junction,
  title={Junction tree variational autoencoder for molecular graph generation},
  author={Jin, Wengong and Barzilay, Regina and Jaakkola, Tommi},
  booktitle={ICML},
  year={2018}
}

@article{hamilton2017inductive,
  title={Inductive representation learning on large graphs},
  author={Hamilton, Will and Ying, Zhitao and Leskovec, Jure},
  journal={NeurIPS},
  year={2017}
}

@article{chami2020machine,
  title={Machine learning on graphs: A model and comprehensive taxonomy},
  author={Chami, Ines and Abu-El-Haija, Sami and Perozzi, Bryan and R{\'e}, Christopher and Murphy, Kevin},
  journal={arXiv preprint arXiv:2005.03675},
  pages={1},
  year={2020}
}

@article{du2019graph,
  title={Graph neural tangent kernel: Fusing graph neural networks with graph kernels},
  author={Du, Simon S and Hou, Kangcheng and Salakhutdinov, Russ R and Poczos, Barnabas and Wang, Ruosong and Xu, Keyulu},
  journal={NeurIPS},
  year={2019}
}

@article{xu2019can,
  title={What can neural networks reason about?},
  author={Xu, Keyulu and Li, Jingling and Zhang, Mozhi and Du, Simon S and Kawarabayashi, Ken-ichi and Jegelka, Stefanie},
  journal={ICLR},
  year={2020}
}

@article{xu2020neural,
  title={How neural networks extrapolate: From feedforward to graph neural networks},
  author={Xu, Keyulu and Zhang, Mozhi and Li, Jingling and Du, Simon S and Kawarabayashi, Ken-ichi and Jegelka, Stefanie},
  journal={ICLR},
  year={2021}
}

@article{scarselli2008graph,
  title={The graph neural network model},
  author={Scarselli, Franco and Gori, Marco and Tsoi, Ah Chung and Hagenbuchner, Markus and Monfardini, Gabriele},
  journal={IEEE transactions on neural networks},
  year={2008}
}

@inproceedings{yehudai2021local,
  title={From local structures to size generalization in graph neural networks},
  author={Yehudai, Gilad and Fetaya, Ethan and Meirom, Eli and Chechik, Gal and Maron, Haggai},
  booktitle={ICML},
  year={2021}
}

@article{scarselli2018vapnik,
  title={The vapnik--chervonenkis dimension of graph and recursive neural networks},
  author={Scarselli, Franco and Tsoi, Ah Chung and Hagenbuchner, Markus},
  journal={Neural Networks},
  year={2018}
}

@inproceedings{morris2019weisfeiler,
  title={Weisfeiler and leman go neural: Higher-order graph neural networks},
  author={Morris, Christopher and Ritzert, Martin and Fey, Matthias and Hamilton, William L and Lenssen, Jan Eric and Rattan, Gaurav and Grohe, Martin},
  booktitle={AAAI},
  year={2019}
}

@article{hamilton2017representation,
  title={Representation learning on graphs: Methods and applications},
  author={Hamilton, Will and Ying, Rex and Leskovec, Jure},
  journal={arXiv preprint arXiv:1709.05584},
  year={2017}
}

@article{chen2020can,
  title={Can graph neural networks count substructures?},
  author={Chen, Zhengdao and Chen, Lei and Villar, Soledad and Bruna, Joan},
  journal={NeurIPS},
  year={2020}
}

@article{sato2019approximation,
  title={Approximation ratios of graph neural networks for combinatorial problems},
  author={Sato, Ryoma and Yamada, Makoto and Kashima, Hisashi},
  journal={NeurIPS},
  year={2019}
}

@article{azizian2020expressive,
  title={Expressive power of invariant and equivariant graph neural networks},
  author={Azizian, Waiss and Lelarge, Marc},
  journal={ICLR},
  year={2021}
}

@inproceedings{verma2019stability,
  title={Stability and generalization of graph convolutional neural networks},
  author={Verma, Saurabh and Zhang, Zhi-Li},
  booktitle={KDD},
  year={2019}
}

@inproceedings{hardt2016train,
  title={Train faster, generalize better: Stability of stochastic gradient descent},
  author={Hardt, Moritz and Recht, Benjamin and Singer, Yoram},
  booktitle={ICML},
  year={2016}
}

@article{hua2021noise,
  title={Noise stability regularization for improving BERT fine-tuning},
  author={Hua, Hang and Li, Xingjian and Dou, Dejing and Xu, Cheng-Zhong and Luo, Jiebo},
  journal={ACL},
  year={2021}
}

@article{hardt2021patterns,
  title={Patterns, predictions, and actions: A story about machine learning},
  author={Hardt, Moritz and Recht, Benjamin},
  journal={arXiv preprint arXiv:2102.05242},
  year={2021}
}

@inproceedings{gilmer2017neural,
  title={Neural message passing for quantum chemistry},
  author={Gilmer, Justin and Schoenholz, Samuel S and Riley, Patrick F and Vinyals, Oriol and Dahl, George E},
  booktitle={ICML},
  year={2017}
}

@article{errica2019fair,
  title={A fair comparison of graph neural networks for graph classification},
  author={Errica, Federico and Podda, Marco and Bacciu, Davide and Micheli, Alessio},
  journal={ICLR},
  year={2020}
}

@article{arora2021technical,
  title={Technical perspective: Why don't today's deep nets overfit to their training data?},
  author={Arora, Sanjeev},
  journal={Communications of the ACM},
  year={2021}
}

@article{neyshabur2018towards,
  title={Towards understanding the role of over-parametrization in generalization of neural networks},
  author={Neyshabur, Behnam and Li, Zhiyuan and Bhojanapalli, Srinadh and LeCun, Yann and Srebro, Nathan},
  journal={ICLR},
  year={2019}
}

@article{selsam2018learning,
  title={Learning a SAT solver from single-bit supervision},
  author={Selsam, Daniel and Lamm, Matthew and B{\"u}nz, Benedikt and Liang, Percy and de Moura, Leonardo and Dill, David L},
  journal={ICLR},
  year={2019}
}

@article{wu2020comprehensive,
  title={A comprehensive survey on graph neural networks},
  author={Wu, Zonghan and Pan, Shirui and Chen, Fengwen and Long, Guodong and Zhang, Chengqi and Philip, Yu},
  journal={IEEE transactions on neural networks and learning systems},
  year={2020}
}

@article{izmailov2018averaging,
  title={Averaging weights leads to wider optima and better generalization},
  author={Izmailov, Pavel and Podoprikhin, Dmitrii and Garipov, Timur and Vetrov, Dmitry and Wilson, Andrew Gordon},
  journal={UAI},
  year={2018}
}

@inproceedings{daskalakis2021complexity,
  title={The complexity of constrained min-max optimization},
  author={Daskalakis, Constantinos and Skoulakis, Stratis and Zampetakis, Manolis},
  booktitle={STOC},
  year={2021}
}

@inproceedings{zhang2018end,
  title={An end-to-end deep learning architecture for graph classification},
  author={Zhang, Muhan and Cui, Zhicheng and Neumann, Marion and Chen, Yixin},
  booktitle={AAAI},
  year={2018}
}

@article{ying2018hierarchical,
  title={Hierarchical graph representation learning with differentiable pooling},
  author={Ying, Zhitao and You, Jiaxuan and Morris, Christopher and Ren, Xiang and Hamilton, Will and Leskovec, Jure},
  journal={NeurIPS},
  year={2018}
}

@article{velivckovic2017graph,
  title={Graph attention networks},
  author={Veli{\v{c}}kovi{\'c}, Petar and Cucurull, Guillem and Casanova, Arantxa and Romero, Adriana and Lio, Pietro and Bengio, Yoshua},
  journal={ICLR},
  year={2018}
}

@book{mohri2018foundations,
  title={Foundations of machine learning},
  author={Mohri, Mehryar and Rostamizadeh, Afshin and Talwalkar, Ameet},
  year={2018},
  publisher={MIT press}
}

@article{bruna2013spectral,
  title={Spectral networks and locally connected networks on graphs},
  author={Bruna, Joan and Zaremba, Wojciech and Szlam, Arthur and LeCun, Yann},
  journal={ICLR},
  year={2014}
}

@article{esser2021learning,
  title={Learning theory can (sometimes) explain generalisation in graph neural networks},
  author={Esser, Pascal and Chennuru Vankadara, Leena and Ghoshdastidar, Debarghya},
  journal={NeurIPS},
  year={2021}
}
\end{refcontext}

\appendix
\noindent\textbf{Organization.} %
In Appendix \ref{app_proof}, we state the complete proofs for our results.
In Appendix \ref{sec_add_exp_setup}, we describe extra experiment details to complement our results.
\section{Proofs}\label{app_proof}

This section provides the complete proofs for our results in Section \ref{sec_theory}. First, we state several notations and facts needed in the proofs. Then we provide the proof of the Hessian-based generalization bound for MPNN, as stated in Lemma \ref{lemma_gen_error}.
After that, in Appendix \ref{app_mpgnn}, we provide the proof of Theorem \ref{thm_mpgnn}, a key step of which is the proof of Lemma \ref{lemma_trace_hess}.
Next, in Appendix \ref{app_lb}, we state the proof of the lower bound.
Lastly, in Appendix \ref{app_gin}, we will provide proof for the case of graph isomorphism networks.

First, we state several facts about graphs and provide a short proof of them.

\begin{fact}\label{fact_graph}
Let $G = (V, E)$ be an undirected graph.  Let $d_{G}$ be the maximum degree of $G$. %
\begin{enumerate}[leftmargin=0.25in]
    \item[a)] Let $A$ be the adjacency matrix of $G$. Then, the adjacency matrix satisfies: $\sqrt{d_G} \le \bignorms{A} \le d_G$. %
    \item[b)] The symmetric and degree-normalized adjacency matrix satisfies $\bignorms{D^{-1/2} A D^{-1/2}} \le 1$.
\end{enumerate}
\end{fact}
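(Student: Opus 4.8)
The plan is to handle the two parts separately, exploiting in both cases that the relevant matrices are symmetric (because $G$ is undirected), so that their spectral norm equals the largest eigenvalue in absolute value, equivalently the supremum of $\bigabs{x^\top A x}/\bignorm{x}^2$ over nonzero $x$. For the upper bound in (a), I would invoke Gershgorin's circle theorem: every diagonal entry of $A$ is zero and the $i$-th absolute row sum equals $\deg(i) \le d_G$, so every eigenvalue lies in the interval $[-d_G, d_G]$, which gives $\bignorms{A} \le d_G$. For the lower bound, I would let $v$ be a vertex attaining the maximum degree and test against the standard basis vector $e_v$: then $A e_v$ is the $0/1$ indicator of the neighbors of $v$, so $\bignorm{A e_v}^2 = \deg(v) = d_G$ while $\bignorm{e_v} = 1$, and hence $\bignorms{A} \ge \bignorm{A e_v} = \sqrt{d_G}$.

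For part (b), I would write $M = D^{-1/2} A D^{-1/2}$, which is again symmetric, and substitute $y = D^{-1/2} x$ so that $x^\top M x = y^\top A y$ and $\bignorm{x}^2 = \sum_i d_i y_i^2$, where $d_i = \deg(i)$. Expanding over edges gives $y^\top A y = \sum_{\{i,j\} \in E} 2 y_i y_j$ and $\sum_i d_i y_i^2 = \sum_{\{i,j\} \in E} (y_i^2 + y_j^2)$, since each vertex $i$ is incident to exactly $d_i$ edges. The elementary inequality $2ab \le a^2 + b^2$ applied edge-by-edge then yields $\bigabs{y^\top A y} \le \sum_{\{i,j\} \in E}(y_i^2 + y_j^2) = \bignorm{x}^2$, so $\bigabs{x^\top M x} \le \bignorm{x}^2$ for every $x$. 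This places all eigenvalues of $M$ inside $[-1, 1]$ and therefore gives $\bignorms{M} \le 1$, as claimed.

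Both statements are classical, so I do not anticipate a genuine obstacle; the points that need care are the choice of the test vector $e_v$ for the lower bound and the change of variables together with the edge-wise application of $2ab \le a^2+b^2$ in (b). The one real subtlety is that $D^{-1/2}$ is only well-defined when every vertex has positive degree. I would dispose of this either by assuming $G$ has no isolated vertices (minimum degree at least one), or by noting that the self-loops added in $\tilde A = A + \id$—the normalization actually used for GCN in the paper—guarantee strictly positive degrees, so the substitution $y = D^{-1/2}x$ is valid and the same edge-wise bound goes through.
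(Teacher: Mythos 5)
Your proof is correct, and for part (a) it takes a genuinely different route from the paper's. The paper works with the quadratic form throughout: for the upper bound it writes $\bignorms{A} = \max_{\bignorm{x}=1} x^\top A x = \max_{\bignorm{x}=1}\sum_{(i,j)\in E} x_i x_j$ and applies $x_i x_j \le \tfrac12(x_i^2 + x_j^2)$ edge-by-edge; for the lower bound it constructs an explicit two-parameter test vector (mass $1/\sqrt{2}$ on a maximum-degree vertex and $1/\sqrt{2d_G}$ on each of its neighbors) and evaluates the quadratic form on it. You instead get the upper bound from Gershgorin's theorem and the lower bound from the operator-norm inequality $\bignorms{A} \ge \bignorm{A e_v} = \sqrt{d_G}$. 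Both are elementary; yours is somewhat shorter (no tuning of a test vector), and by bounding $|\lambda|$ for a symmetric matrix it handles the most-negative eigenvalue explicitly, a point that the paper's identity $\bignorms{A} = \max_{\bignorm{x}=1} x^\top A x$ quietly elides (that identity needs nonnegativity of $A$, i.e.\ Perron--Frobenius, to be literally correct). What the paper's route buys is uniformity: the identical AM--GM computation is then reused verbatim for part (b). Indeed, for part (b) your argument is essentially the paper's in different clothing: after the substitution $y = D^{-1/2}x$, your edge-wise bound $2y_iy_j \le y_i^2 + y_j^2$ is exactly the paper's inequality $\frac{x_i x_j}{\sqrt{d_i d_j}} \le \frac{x_i^2}{2d_i} + \frac{x_j^2}{2d_j}$ (which the paper, incidentally, mislabels as Cauchy--Schwarz). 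One genuine improvement on your side: you flag that $D^{-1/2}$ is only defined when every degree is positive, and you note that the GCN normalization $\tilde A = A + \id$ used in Section 3 guarantees this; the paper assumes positivity implicitly.
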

\begin{proof}
    Based on the definition of the spectral norm, we get
    \begin{align*}
        \bignorms{A} = \max_{\bignorm{x} = 1} x^{\top} A x = \max_{\bignorm{x} = 1} \sum_{(i,j)\in E} x_ix_j \le \max_{\bignorm{x} = 1} \sum_{(i,j)\in E} \frac{1}{2}(x_i^2 + x_j^2) \le d_G \sum_{i\in V} x_i^2 = d_G.
    \end{align*}
    Assume that node $i$ has the maximum degree $d_G$. Denote edges set $E_i =  \{(i,i_k)\}_{k=1}^{d_G}\subseteq E$. Let $x_i = \frac{1}{\sqrt{2}}$, $x_{i_k} = \frac{1}{\sqrt{2d_G}}$ for all $k = 1,\dots,d_G$. The rest entries of $x$ are equal to zero. Thus, $x$ is a normalized vector. Next, we have
    \begin{align*}
        \bignorms{A} = \max_{\bignorm{x} = 1} \sum_{(i,j)\in E} x_ix_j \ge \max_{\bignorm{x} = 1} 2 \sum_{(i,j)\in E_i} x_ix_j = 2 d_G \cdot \frac{1}{\sqrt{2}} \frac{1}{\sqrt{2d_G}} = \sqrt{d_G}.
    \end{align*}
    An example in which $\bignorms{P_G}$ gets close to $\sqrt {d_G}$ is the star graph.
    An example in which $\bignorms{P_G}$ gets close to $d_G$ is the complete graph.

    \smallskip
    Next, we focus on case b).
    From the definition of the spectral norm, we know
    \begin{align*}
        \bignorms{D^{-1/2} A D^{-1/2}} &= \max_{\bignorm{x} = 1} x^{\top} \bigbrace{D^{-1/2} A D^{-1/2}} x = \max_{\bignorm{x} = 1} \sum_{(i,j)\in E} \frac{x_ix_j}{\sqrt{d_id_j}} \\
        &\le \max_{\bignorm{x} = 1} \sum_{(i,j)\in E} \frac{x_i^2}{2d_i} + \frac{x_j^2}{2d_j} = \sum_{i\in V} x_i^2 = 1.
    \end{align*}
    During the middle of the above step, we used the Cauchy-Schwartz inequality. The proof of this result is now completed.
\end{proof}

\noindent\textbf{Notations:}
For two matrices $X$ and $Y$ that are both of dimension $d_1$ by $d_2$, the Hadamard product of $X$ and $Y$, denoted as $X \odot Y$, is equal to the entrywise product of $X$ and $Y$.

\subsection{Proof of our PAC-Bayesian bound (Lemma \ref{lemma_gen_error})}\label{proof_pac_bayes}

To be precise, we will restate the conditions required in Theorem \ref{thm_mpgnn} separately below. The conditions are exactly the same as stated in Section \ref{sec_theory}.

\begin{assumption}\label{ass_1}
    Assume that all the activation functions $\phi_i(\cdot),\rho_i(\cdot),\psi_i(\cdot)$ for any $1\leq i\leq l-1$ and the loss function $\ell(x, y)$ over $x$ are twice-differentiable and $\kappa_0$-Lipschitz. Their first-order derivatives are $\kappa_1$-Lipschitz and their second-order derivatives are $\kappa_2$-Lipschitz.
\end{assumption}

Based on the above assumption, we provide the precise statement for Taylor's expansion, used in equation \eqref{eq_perturb}.

\begin{proposition}\label{prop_taylor}
    In the setting of Theorem \ref{thm_mpgnn}, suppose each parameter in layer $i$ is perturbed by an independent noise drawn from $\cN(0,\sigma_i^2)$. 
    Let $\tilde{\ell}(f(X,G),y)$ be the perturbed loss function with noise perturbation injection vector $\cE$ on all parameters $\cW$ and $\cU$. 
    There exist some fixed value $C_1$ that do not grow with $N$ and $1/\delta$ such that %
    \begin{small}
        \begin{align*}
             \bigabs{\tilde \ell(f(X, G), y) - \ell(f(X, G), y) - \frac 1 2 \sum_{i=1}^l\sigma_{i}^2 \bigtr{\bH^{{(i)}}[\ell (f(X,G),y)]} } 
            \le  C_1 \sum_{i=1}^l \sigma_{i}^3.
        \end{align*}
    \end{small}%
\end{proposition}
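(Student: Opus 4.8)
The plan is to read the left-hand side as an expectation over the noise $\cE$: the trace term is exactly what the quadratic term of a Taylor expansion contributes \emph{in expectation}, since $\mathbb{E}[\cE^\top \nabla \ell] = 0$ and $\mathbb{E}\bigbrace{\tfrac12 \cE^\top \bH \cE} = \tfrac12 \sum_{i=1}^l \sigma_i^2 \bigtr{\bH^{(i)}}$, whereas any pointwise statement would leave an uncontrolled first-order fluctuation of order $\sigma$. Concretely, I would fix the input $(X,G,y)$ and the realization of $\cE$, collect all parameters of $f$ into a vector $\theta$, and study the scalar function $h(t) = \ell(f_{\theta + t\cE}(X,G), y)$ on $t \in [0,1]$. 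Taylor's theorem with the Lagrange form of the remainder gives $h(1) = h(0) + h'(0) + \tfrac12 h''(0) + \tfrac16 h'''(\xi)$ for some $\xi \in (0,1)$, where $h(1) = \tilde\ell$, $h(0) = \ell$, $h'(0) = \cE^\top \nabla \ell$, and $h''(0) = \cE^\top \bH \cE$.

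Taking expectation over $\cE$ handles the first two orders. The first-order term vanishes because each coordinate of $\cE$ is mean zero and independent of $\nabla \ell$ (evaluated at the unperturbed $\theta$). For the second-order term, independence across coordinates kills all off-diagonal contributions, so $\mathbb{E}[\cE^\top \bH \cE] = \sum_j \mathrm{Var}(\cE_j)\, \bH_{jj}$; grouping coordinates by layer, where the $i$-th layer's noise has variance $\sigma_i^2$, this equals $\sum_{i=1}^l \sigma_i^2 \bigtr{\bH^{(i)}}$. Hence the difference in the statement equals $\tfrac16\, \mathbb{E}[h'''(\xi)]$, and it remains to bound this by $C_1 \sum_{i=1}^l \sigma_i^3$.

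To control the remainder I would bound the third directional derivative $\bigabs{h'''(\xi)} = \bigabs{D^3\ell(\theta + \xi\cE)[\cE,\cE,\cE]}$. Under Assumption \ref{ass_1} every activation and the loss has first, second, and third derivatives bounded by $\kappa_0, \kappa_1, \kappa_2$ respectively, so expanding $D^3(\ell \circ f)$ by the chain rule (the same layerwise induction used for Lemma \ref{lemma_trace_hess}) bounds it by a fixed polynomial in the spectral norms of the weight matrices and $\bignorms{X}$, times $\bignorm{\cE}^3$. Taking expectations and using the power-mean inequality $\bignorm{\cE}^3 \le \sqrt{l}\, \sum_{i=1}^l \bignorm{\cE^{(i)}}^3$ (applied to the layer blocks $\cE = (\cE^{(1)}, \dots, \cE^{(l)})$) together with the Gaussian third-moment bound $\mathbb{E}\bignorm{\cE^{(i)}}^3 \lesssim \sigma_i^3$ (the implied constant depending only on the per-layer parameter count) yields $\mathbb{E}\bignorm{\cE}^3 \lesssim \sum_{i=1}^l \sigma_i^3$. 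Absorbing the Lipschitz constants, the weight-norm bounds from $\cH$, and the dimension factors into a single $C_1$ --- which depends on none of $N$ or $1/\delta$ --- gives the claimed inequality.

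The main obstacle is the uniform bound on $D^3(\ell\circ f)$ at the intermediate point $\theta + \xi\cE$: the weight matrices there are perturbed and hence not bounded by the norms defining $\cH$, so naively the third derivative grows with $\bignorm{\cE}$ and contributes higher powers of $\sigma$. The careful step is to show these extra contributions are of strictly higher order in $\sigma$, so the leading remainder is genuinely $\bigo{\sum_i \sigma_i^3}$; this is exactly where the feedforward structure and the global boundedness of the activations' derivatives (up to third order) are used, propagated across layers by induction as in Lemma \ref{lemma_trace_hess}. A secondary point of care is the dimension dependence hidden in $C_1$ through the Gaussian moment bounds, which is acceptable here because $C_1$ is only required to be independent of $N$ and $\delta$.
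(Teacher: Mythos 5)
Your proposal is correct and takes essentially the same route as the paper's proof: read $\tilde \ell$ as an expectation over the injected noise $\cE$, Taylor-expand the perturbed loss, use mean-zero noise to remove the first-order term, identify the expected quadratic term with $\frac 1 2 \sum_{i=1}^l \sigma_i^2 \bigtr{\bH^{(i)}[\ell(f(X,G),y)]}$, and bound the remainder by a constant times $\bignorm{\cE}^3$ before applying Gaussian third-moment bounds, absorbing dimension and Lipschitz factors into $C_1$. The one technical adjustment: Assumption \ref{ass_1} only guarantees twice-differentiability with Lipschitz second derivatives, so rather than invoking a classical third derivative $h'''(\xi)$ you should write the remainder as $\frac 1 2 \bigbrace{h''(\xi) - h''(0)}$ and bound it via Lipschitz-continuity of the Hessian along the segment, which is exactly the paper's bound $\bigabs{R(\ell(f),\cE)} \le \bar C \bignorm{\cE}^3$.
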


\begin{proof}
    By Taylor's expansion, the following identity holds
    \begin{align*}
        \tilde \ell(f(X, G), y) - \ell(f(X, G), y)
        = \exarg{\cE}{\cE^{\top} \nabla \ell(f) + \frac 1 2 {\cE}^{\top} \bH[\ell(f)] {\cE} + R(\ell(f),\cE)}.
    \end{align*}
    where $R(\ell(f),\cE)$ is the rest of the first-order and the second-order terms.
    Since each entry in $\cE$ follows the normal distribution, we have
    $\exarg{\cE}{\cE^{\top} \nabla \ell(f)} = 0$. The Hessian term turns to
    \begin{align*}
        {\cE}^{\top} \bH[\ell(f)] {\cE} = \sum_{i=1}^l \sigma_i^2 \bigtr{\bH^{(i)}[\ell (f(X,G),y)]}.
    \end{align*}
    Since the readout layer is linear, by Proposition \ref{prop_taylor}, there exists a fixed constant $\bar C$ that does not grow with $N$ and $\delta^{-1}$ such that
     $   |R(\ell(f),\cE)| \le \bar C \bignorm{\cE}^3$.
    Based on \citet[Lemma 2]{jin2019short}, for any $x$ drawn from a normal distribution $\cN(0, \sigma^2)$, we have ${\ex{x^3}} \le 6\sigma^3$. Hence, we get
       $\ex{R(\ell(f),\cE)} \le C_1 \sum_{i=1}^l {\sigma_{i}^3}$,
    where $C_1 = \order{h^2 \bar C}$ is a fixed constant. Thus, we have finished the proof.
\end{proof}

Next, we state a Lipschitz-continuity upper bound of the network output at each layer.
This will be needed in the $\epsilon$-covering argument later in the proof of Theorem \ref{thm_mpgnn}.
To simplify the notation, we will abbreviate explicit constants that do not grow with $N$ and $1/\delta$ in the notation $\lesssim$; more specifically, we use $A(n) \lesssim B(n)$ to indicate that there exists a function $c$ that does not depend on $N$ and $1/\delta$ such that $A(n) \le c \cdot B(n)$ for large enough values of $n$.

\begin{proposition}\label{claim_hess}
    In the setting of Theorem \ref{thm_mpgnn}, for any $j=1,\dots,l-1$, the change in the Hessian of output of the $j$ layer network $H^{(j)}$ with respect to $W_i$ and $U_i$ under perturbation on $W$ and $U$ can be bounded as follows:
    \begin{align}
        \bignormFro{\bH_{\cW}^{(i)}[\tilde H^{(j)}] - \bH_{\cW}^{(i)}[H^{(j)}]}  &\lesssim \sum_{t=1}^j\Bigbrace{\bignorms{\Delta U^{(t)}} + \bignorms{\Delta W^{(t)}}}. \label{eq_claim_hess_1} \\
        \bignormFro{\bH_{\cU}^{(i)}[\tilde H^{(j)}] - \bH_{\cU}^{(i)}[H^{(j)}]} &\lesssim \sum_{t=1}^j\Bigbrace{\bignorms{\Delta U^{(t)}} + \bignorms{\Delta W^{(t)}}}. \label{eq_claim_hess_2}
    \end{align}
    Above, the notation $\bH_{\cW}^{(i)}[\tilde H^{(j)}]$ is the perturbation of the Hessian matrix of $H^{(j)}$ by $\Delta \cW$ and $\Delta \cU$, specific to the variables of $\cW$; likewise, $\bH_{\cU}^{(i)}[\tilde H^{(j)}]$ is the perturbation of the Hessian matrix specific to the variables of $\cU$.
\end{proposition}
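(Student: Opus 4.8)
The plan is to prove all three quantities — the perturbation of $H^{(j)}$ itself, of its first derivatives, and of its Hessian with respect to $W^{(i)}$ and $U^{(i)}$ — for each fixed layer index $i$, by a single joint induction on the upper index $j$, running from $j=i$ up to $l-1$. Carrying the lower-order derivatives alongside the Hessian is essential, because the chain rule for the second derivative of $H^{(j)}$ unavoidably produces the function value and the first derivative of $H^{(j-1)}$, so all three must be controlled simultaneously. The two families $\cW$ and $\cU$ are handled in parallel, the only structural difference being that $U^{(j)}$ enters $H^{(j)}$ directly through $X U^{(j)}$ whereas $W^{(j)}$ multiplies the diffused lower-layer embedding.

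First I would record the a priori boundedness facts that hold uniformly over the hypothesis set $\cH$ and over the perturbed weights $\cW + \Delta\cW$, $\cU + \Delta\cU$ (taking the perturbations to have spectral norm bounded by a constant). Using that each activation and its first two derivatives are bounded, together with $\bignorms{W^{(t)}} \le s_t$, $\bignorms{U^{(t)}} \le s_t$, and bounded $\bignorms{P_{_G}}$ and $\bignorms{X}$, a preliminary induction on $t$ shows that $\bignormFro{H^{(t)}}$ and each of $\bignormFro{\partial H^{(t)}/\partial W^{(i)}}$, $\bignormFro{\partial H^{(t)}/\partial U^{(i)}}$ and the Hessian norms are bounded by fixed quantities not growing with $N$. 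The same induction, now tracking differences, yields the zeroth-order Lipschitz estimate $\bignormFro{\tilde H^{(t)} - H^{(t)}} \lesssim \sum_{s=1}^{t}\Bigbrace{\bignorms{\Delta U^{(s)}} + \bignorms{\Delta W^{(s)}}}$, which I will feed into the Hessian step.

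Next, for the inductive step I would differentiate the recursion
\begin{align*}
H^{(j)} = \phi_j\Bigbrace{X U^{(j)} + \rho_j\bigbrace{P_{_G} \psi_j(H^{(j-1)})} W^{(j)}}
\end{align*}
twice with respect to the entries of $W^{(i)}$ and $U^{(i)}$. The first derivatives bring in $\phi_j'$ times a linear term in $W^{(j)}$, $P_{_G}$, $\rho_j'$, $\psi_j'$ and $\partial H^{(j-1)}/\partial W^{(i)}$; the second derivatives additionally bring in $\phi_j''$, $\rho_j''$, $\psi_j''$ multiplying products of first derivatives of $H^{(j-1)}$, together with the genuine second-order term $\phi_j'\cdot(\cdots)\cdot\partial^2 H^{(j-1)}$. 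To bound the \emph{difference} between the perturbed and unperturbed Hessian, I would split each product by the telescoping identity $\|AB - \tilde A\tilde B\| \le \|A-\tilde A\|\,\|B\| + \|\tilde A\|\,\|B-\tilde B\|$, so that every term reduces to one difference factor multiplied by uniform bounds on the rest. Three kinds of difference factors then appear: (i) differences of activation derivatives evaluated at the perturbed versus unperturbed pre-activation $Z^{(j)}$, which by $\kappa_1$- and $\kappa_2$-Lipschitzness are controlled by $\bignormFro{\Delta Z^{(j)}}$, itself bounded via the zeroth-order hypothesis and the terms $\bignorms{X}\,\bignorms{\Delta U^{(j)}}$, $\bignorms{\Delta W^{(j)}}$ by $\sum_{t=1}^{j}\Bigbrace{\bignorms{\Delta U^{(t)}}+\bignorms{\Delta W^{(t)}}}$; (ii) the explicit weight differences $\Delta W^{(j)}, \Delta U^{(j)}$; and (iii) differences of the first and second derivatives of $H^{(j-1)}$, which are precisely the first-order and Hessian inductive hypotheses at level $j-1$, each bounded by $\sum_{t=1}^{j-1}\Bigbrace{\bignorms{\Delta U^{(t)}}+\bignorms{\Delta W^{(t)}}}$. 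Summing all contributions and absorbing constants into $\lesssim$ yields the claimed $\sum_{t=1}^{j}\Bigbrace{\bignorms{\Delta U^{(t)}}+\bignorms{\Delta W^{(t)}}}$, closing the induction and establishing both \eqref{eq_claim_hess_1} and \eqref{eq_claim_hess_2}.

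The main obstacle I anticipate is the bookkeeping of the second-order chain rule: the Hessian expansion contains several structurally distinct cross terms (products of two first derivatives weighted by $\phi_j''$ or $\rho_j''$, versus the true second-derivative term weighted by $\phi_j'$), and each must be routed to the correct combination of a Lipschitz-of-derivative estimate (i) and an inductive difference (iii), while every remaining factor is replaced by its uniform bound from the first step. The difficulty is organizational rather than analytic — no single estimate is delicate — and the essential point is to verify that every difference factor is charged to a perturbation at some layer $t \le j$ and that the uniform boundedness genuinely covers each factor pulled out of a product, for both the $\cW$ and $\cU$ cases.
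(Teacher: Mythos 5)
Your proposal is correct and takes essentially the same approach as the paper: the paper establishes the zeroth-order perturbation estimate (Claim \ref{claim_perturb}), the Jacobian perturbation estimate (Claim \ref{claim_grad}), and then the Hessian estimate by inductions over $j$ from $i$ to $l-1$, using exactly the chain-rule expansion, telescoping of products into one difference factor times uniform bounds, and Lipschitzness of the activations and their first two derivatives that you describe. The only difference is organizational --- you carry all three quantities in a single joint induction, whereas the paper proves them as three sequential lemmas, each consuming the previous one as a black box.
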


The proof of Proposition \ref{claim_hess} will be deferred until Appendix \ref{proof_prop_lip}. Based on Propositions \ref{prop_taylor} and \ref{claim_hess}, now we are ready to present the proof of Lemma \ref{lemma_gen_error}.

\begin{proof}[Proof of Lemma \ref{lemma_gen_error}]
    First, we separate the gap of $\cL(f)$ and $\frac{1}{\beta}\hat{\cL}(f)$ into three parts:
    \begin{align*}
         &\cL(f) - \frac{1}{\beta} \hat{\cL}(f) 
        =  \underbrace{\exarg{(X,G,y) \sim \cD}{\ell(f(X,G),y)} - \exarg{(X,G,y) \sim \cD}{\tilde{\ell}(f(X,G),y)}}_{E_1} + \exarg{(X,G,y) \sim \cD}{\tilde{\ell}(f(X,G),y)} \\
        & - \frac{1}{\beta} \Bigbrace{\frac{1}{N} \sum_{i=1}^N \tilde{\ell}(f(X_i,G_i),y_i)} + \underbrace{\frac{1}{\beta} \Bigbrace{\frac{1}{N} \sum_{i=1}^N \tilde{\ell}(f(X_i,G_i),y_i)} - \frac{1}{\beta} \Bigbrace{\frac{1}{N} \sum_{i=1}^N \ell(f(X_i,G_i),y_i)}}_{E_2}.
    \end{align*}
    for any $\beta \in (0,1)$.
    Above, $\tilde{\ell}(f(X,G),y)$ is the perturbed loss from $\ell(f(X, G), y)$ with noise injections $\cE$ added to all the parameters in $\cW$ and $\cU$. By Taylor's expansion from Proposition \ref{prop_taylor}, we can bound the difference between $\tilde{\ell}(f(X,G), y)$ and $\ell(f(X, G)$ with the trace of the Hessian.
    Therefore
    \begin{align*}
        \cL(f) - \frac{1}{\beta} \hat{\cL}(f) \le \,& -\exarg{(X,G,y) \sim \cD}{\frac 1 2 \sum_{i=1}^l \sigma_i^2 \bigtr{\bH^{(i)}[\ell(f(X,G),y)]}}  + \sum_{i=1}^l C_1 \sigma_i^3 \tag{by Prop. \ref{prop_taylor} for $E_1$}\\
        & + \Bigg( \exarg{(X,G,y) \sim \cD}{\tilde{\ell}(f(X,G),y)} - \frac{1}{\beta} \Bigbrace{\frac{1}{N} \sum_{i=1}^N \tilde{\ell}(f(X_i,G_i),y_i)} \Bigg) \\
        & + \frac{1}{2\beta}\sum_{i=1}^l \sigma_i^2  \Bigbrace{\frac{1}{N} \sum_{j=1}^N \bigtr{\bH^{(i)}[\ell(f(X_j,G_j),y_j)]}} + \frac{1}{\beta}\sum_{i=1}^l C_1 \sigma_{i}^3. \tag{by Prop. \ref{prop_taylor} for $E_2$}
    \end{align*}%
    By rearranging the above equation, we get the following:
    {\small\begin{align*}
         \cL(f) - \frac{1}{\beta} \hat{\cL}(f) 
        &\le  \frac 1 2 \sum_{i=1}^l \sigma_i^2 \Bigg( \underbrace{\frac{1}{N} \sum_{j=1}^N \bigtr{\bH^{(i)}[\ell(f(X_j,G_j),y_j)]} - \exarg{(X,G,y) \sim \cD}{\bigtr{\bH^{(i)}[\ell(f(X,G),y)]}}}_{E_3} \Bigg) \\
        & + \frac 1 2 \Bigbrace{\frac{1}{\beta} - 1} \underbrace{\sum_{i=1}^l \frac{\sigma_i^2}{N} { \sum_{j=1}^N \bigtr{\bH^{(i)}[\ell(f(X_j,G_j),y_j)]}}}_{E_4} \\
        & + \Bigbrace{1 + \frac{1}{\beta}} C_1 \sum_{i=1}^l \sigma_i^3 + \underbrace{\exarg{(X,G,y) \sim \cD}{\tilde{\ell}(f(X,G),y)} - {\frac{1}{\beta N} \sum_{i=1}^N \tilde{\ell}(f(X_i,G_i),y_i)}}_{E_5}.
    \end{align*}}%
    Based on Proposition \ref{claim_hess}, the Hessian operator $\bH^{(i)}$ is Lipschitz-continuous for some parameter that does not depend on $N$ and $1/\delta$, for any $i = 1,2\dots,l$.
    Therefore, from \citet[Lemma 2.4]{ju2022robust}, there exist some fixed values $C_2$, $C_3$ that do not grow with $N$ and $1/\delta$, such that with probability at least $1 - \delta$ over the randomness of the training set. Therefore, the matrix inside the trace of $E_3$ satisfies
    \begin{align}
        \bignormFro{\frac 1 N \sum_{j=1}^N\bH^{(i)}[\ell(f(X_j,G_j), y_j)] - \exarg{(X,G,y)\sim\cD}{\bH^{(i)}[\ell(f(X,G), y)]}} \le \frac{C_2\sqrt{\log (C_3 N/\delta)}}{\sqrt N}, \label{eq_trace_bound}
    \end{align}
    for any $i = 1,\dots,l.$
    Thus, by the Cauchy-Schwartz inequality, $E_3$ is less than $\sqrt{2h^2}$ times the RHS of equation \eqref{eq_trace_bound}.
    Suppose the loss function $\ell(f(X,G),y)$ lies in a bounded range $[0, B]$ given any $(X,G,y) \sim \cD$. By the PAC-Bayes bound of \citet[Theorem 2]{mcallester2013pac} (see also \citet{guedj2019primer}), we choose $\cU$ as a prior distribution and $\cW + \cU$ as a posterior distribution. For any $\beta \in (0,1)$ and $\delta \in [0,1)$, with probability at least $1 - \delta$, $E_5$ satisfies:
    \begin{align}
        \exarg{(X,G,y) \sim \cD}{\tilde{\ell}(f(X,G),y)} - {\frac{1}{\beta N} \sum_{i=1}^N \tilde{\ell}(f(X_i,G_i),y_i)}
        \le& \frac{B}{2\beta(1-\beta)N}\Bigbrace{\sum_{i=1}^l {\frac{\bignormFro{W^{(i)}}^2 + \bignormFro{U^{(i)}}^2}{2\sigma_{i}^2}} + \log \frac{1}{\delta}} \nonumber \\
        \le& \frac{B}{2\beta(1-\beta)N} \Bigbrace{\sum_{i=1}^l \frac{s_i^2 r_i^2}{\sigma_i^2} + \log\frac 1 {\delta}}. \label{eq_pac_bayes}
    \end{align}
    The above is because ${\cW}$ and ${\cU}$ are inside the hypothesis set $\cH$.
    For any $i = 1,\dots,l$, let
    \[ \alpha_i = \max\limits_{(X, G, y)\sim\cD} \bigtr{\bH^{(i)}[\ell(f(X,G),y)]}. \]
    Lastly, we use $\sigma_i^2 \alpha_i$ above to upper bound $E_4$.
    Combined with equations \eqref{eq_trace_bound} and \eqref{eq_pac_bayes}, with probability at least $1 - 2\delta$, we get
    \begin{align*}
        \cL(f) - \frac{1}{\beta} \hat{\cL}(f) \le \,&  
           \frac{C_2\sqrt{2 h^2 \log (C_3 N/\delta)}}{\sqrt N}  \sum_{i=1}^l \sigma_i^2 + \Bigbrace{1 + \frac{1}{\beta}} C_1 \sum_{i=1}^l \sigma_i^3 \\
          & + {\frac 1 2 \Bigbrace{\frac{1}{\beta} - 1} \sum_{i=1}^l \alpha_i \sigma_i^2 
          + \frac{B}{2\beta(1-\beta)N}\Bigg( \sum_{i=1}^l {\frac{s_i^2 r_i^2}{\sigma_{i}^2}} + \log \frac{1}{\delta} \Bigg)}.
    \end{align*}
    Next, we will select $\sigma_i$ to minimize the last line above.
    One can verify that this is achieved when
    \begin{align*}
        \sigma_i^2 = \frac{s_i r_i}{1-\beta}\sqrt{\frac{B}{\alpha_i N}}, \text{ for every } i = 1,2,\dots,l.
    \end{align*}
    With this setting of the noise variance, the gap between $\cL(f)$ and $\hat{\cL}(f)/ {\beta}$ becomes:
    \begin{align*}
        & \cL(f) - \frac{1}{\beta} \hat{\cL}(f) \\
        \le& \frac{1}{\beta} \sum_{i=1}^l \sqrt{\frac{B \alpha_i s_i^2 r_i^2}{N}} 
        + \frac{C_2\sqrt{2 h^2 \log (C_3 N/\delta)}}{\sqrt N} \sum_{i=1}^L \sigma_{i}^2 + \Bigbrace{1 + \frac{1}{\beta}} C_1\sum_{i=1}^l \sigma_i^3 + \frac{C}{2\beta(1-\beta)N}\log \frac{1}{\delta}.
    \end{align*}
    Let $\beta$ be a fixed value close to $1$ and independent of $N$ and $\delta^{-1}$; let $\epsilon = (1 - \beta)/\beta$. We get
    \begin{align*}
        &\cL(f) \le  \, (1 + \epsilon) \hat{\cL}(f) + (1 + \epsilon) \sum_{i=1}^l \sqrt{\frac{B \alpha_i r_i^2 s_i^2}{N}} + \xi, \text{ where } \\
        &\xi = \frac{C_2\sqrt{2 h^2 \log (C_3 N/\delta)}}{\sqrt N}  \sum_{i=1}^L \sigma_{i}^2 + \Bigbrace{1 + \frac{1}{\beta}} C_1 \sum_{i=1}^l {\sigma_{i}^3} + \frac{C}{2\beta(1-\beta)N}\log \frac{1}{\delta}.
    \end{align*}
    Notice that $\xi$ is of order $\order{N^{-3/4}\ + \log(\delta^{-1}) N^{-1}} \le \order{\log(\delta^{-1}) / N^{3/4}}$.
    Therefore, we have finished the proof of equation \eqref{eq_main_1}.
\end{proof}

\subsubsection{Proof of Proposition \ref{claim_hess}}\label{proof_prop_lip}

For any $j = 1,2,\dots,l$, let $\tilde H^{(j)}$ be the perturbed network output after layer $j$, with perturbations given by $\Delta \cW$ and $\Delta \cU$.
We show the following Lipschitz-continuity property for $H^{(j)}$.

\begin{claim}\label{claim_perturb}
    Suppose that Assumption \ref{ass_1} holds. For any $j=1,\dots,l-1$, the change in the output of the $j$ layer network $H^{(j)}$ with perturbation added to $\cW$ and $\cU$ can be bounded as follows:
    \begin{align}\label{eq_claim_perturb}
        \bignormFro{\tilde H^{(j)} - H^{(j)}}\lesssim \sum_{t=1}^j\Bigbrace{\bignorms{\Delta U^{(t)}}+\bignorms{\Delta W^{(t)}}}.
    \end{align}
\end{claim}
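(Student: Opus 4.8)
The plan is to prove the bound by induction on the layer index $j$, using the recursive definition of $H^{(j)}$ in equation \eqref{eq_matrix_mpgnn}. The base case is $j=1$: since $H^{(0)} = \tilde H^{(0)} = X$ is unperturbed, the only discrepancy between $\tilde H^{(1)}$ and $H^{(1)}$ comes from the perturbations $\Delta U^{(1)}$ and $\Delta W^{(1)}$ on the weight matrices. Applying the $\kappa_0$-Lipschitzness of $\phi_1$ (Assumption \ref{ass_1}) and then the submultiplicative inequality $\bignormFro{AB} \le \bignormFro{A}\bignorms{B}$, one bounds $\bignormFro{\tilde H^{(1)} - H^{(1)}}$ by $\kappa_0\bigbrace{\bignormFro{X}\,\bignorms{\Delta U^{(1)}} + \bignormFro{\rho_1(P_{_G}\psi_1(X))}\,\bignorms{\Delta W^{(1)}}}$; since $X$ and $P_{_G}$ are fixed inputs, both coefficients are constants absorbed into $\lesssim$, matching the $j=1$ case of \eqref{eq_claim_perturb}.

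For the inductive step I would assume the bound holds up to layer $j-1$, write $\tilde H^{(j)} - H^{(j)} = \phi_j(\tilde A_j) - \phi_j(A_j)$ where $A_j$ and $\tilde A_j$ are the clean and perturbed pre-activations inside \eqref{eq_matrix_mpgnn}, and use Lipschitzness of $\phi_j$ to pass to $\bignormFro{\tilde A_j - A_j}$. The key step is an add-and-subtract decomposition of $\tilde A_j - A_j$ into three contributions: (i) $X\,\Delta U^{(j)}$ from perturbing $U^{(j)}$; (ii) $\rho_j(P_{_G}\psi_j(\tilde H^{(j-1)}))\,\Delta W^{(j)}$ from perturbing $W^{(j)}$; and (iii) $\big(\rho_j(P_{_G}\psi_j(\tilde H^{(j-1)})) - \rho_j(P_{_G}\psi_j(H^{(j-1)}))\big) W^{(j)}$, the propagated error from the previous layer. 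Terms (i) and (ii) each produce a factor $\bignorms{\Delta U^{(j)}}$ or $\bignorms{\Delta W^{(j)}}$ via $\bignormFro{AB}\le\bignormFro{A}\bignorms{B}$; for (ii) I additionally use that $\rho_j,\psi_j$ are centered at zero and $\kappa_0$-Lipschitz, so $\bignormFro{\rho_j(M)}\le\kappa_0\bignormFro{M}$, together with $\bignorms{P_{_G}}$, to bound $\bignormFro{\rho_j(P_{_G}\psi_j(\tilde H^{(j-1)}))}$ by a constant. Term (iii) is bounded by $\kappa_0^2\,\bignorms{P_{_G}}\,\bignorms{W^{(j)}}\,\bignormFro{\tilde H^{(j-1)} - H^{(j-1)}}$ by chaining the Lipschitzness of $\rho_j$ and $\psi_j$, after which the inductive hypothesis yields $\sum_{t=1}^{j-1}(\bignorms{\Delta U^{(t)}} + \bignorms{\Delta W^{(t)}})$. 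Summing the three contributions closes the induction at layer $j$.

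The main subtlety, and the step I would be most careful about, is checking that every constant hidden in $\lesssim$ is independent of $N$ and $1/\delta$. Two points arise. First, in term (ii) the factor $\bignormFro{\rho_j(P_{_G}\psi_j(\tilde H^{(j-1)}))}$ involves the \emph{perturbed} embedding; I would control it through $\bignormFro{\tilde H^{(j-1)}} \le \bignormFro{H^{(j-1)}} + \bignormFro{\tilde H^{(j-1)} - H^{(j-1)}}$, where the first summand is a constant (bounded data together with $\bignorms{W^{(t)}}\le s_t$) and the second is controlled by the inductive hypothesis under the assumption that the perturbations are bounded, which holds with high probability for the Gaussian noise used in the PAC-Bayes argument. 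Second, the coefficient multiplying $\bignormFro{\tilde H^{(j-1)} - H^{(j-1)}}$ in term (iii) accumulates a product of the Lipschitz constant $\kappa_0$, the spectral norm $\bignorms{P_{_G}}$, and the weight bounds $s_t$ across layers; since the depth $l$ is fixed, this product is a finite constant legitimately absorbed into $\lesssim$. These observations ensure the recursion closes with a uniform constant, completing the proof of Claim \ref{claim_perturb}.
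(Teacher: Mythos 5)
Your proposal is correct and takes essentially the same route as the paper's proof: induction over layers, the $\kappa_0$-Lipschitzness of $\phi_j$ to pass to the pre-activations, and the same add-and-subtract decomposition isolating the $\Delta U^{(j)}$ term, the $\Delta W^{(j)}$ term, and the propagated error $\bignormFro{\tilde H^{(j-1)} - H^{(j-1)}}$ closed by the inductive hypothesis. Your explicit handling of the perturbed-embedding coefficient (bounding $\bignormFro{\tilde H^{(j-1)}}$ via the triangle inequality and bounded perturbations) is a detail the paper silently absorbs into $\lesssim$, so if anything your write-up is slightly more careful on the constants.
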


\begin{proof}
    We will prove using induction with respect to $j$. If $j = 1$, we have
    \begin{align*}
        &\bignormFro{\phi_1\Bigbrace{X \bigbrace{U^{(1)} + \Delta U^{(1)}} + \rho_1\bigbrace{P_{_G} \psi_1\bigbrace{X}} \bigbrace{W^{(1)} + \Delta W^{(1)}}} - \phi_1\Bigbrace{X U^{(1)} + \rho_1\bigbrace{P_{_G} \psi_1\bigbrace{X}} W^{(1)}}} \\
        \leq & \kappa_0\bignormFro{X \Delta U^{(1)} + \rho_1\bigbrace{P_{_G} \psi_1\bigbrace{X}} \Delta W^{(1)}} \lesssim \bignorms{\Delta U^{(1)}}+\bignorms{\Delta W^{(1)}}.
    \end{align*}
    Hence, we know that equation \eqref{eq_claim_perturb} will be correct when $j = 1$. Assuming that equation \eqref{eq_claim_perturb} is correct for any $j \geq 1$, the perturbation of layer $j+1$'s network output $H^{(j+1)}$ is less than
    {\begin{align*} 
        & \bignormFro{\tilde H^{(j+1)} - H^{(j+1)}} \\ 
        \le & \, \kappa_0\bignormFro{X \Delta U^{(j+1)} + \rho_{j+1}\bigbrace{P_{_G} \psi_{j+1}\bigbrace{\tilde H^{(j)}}} \bigbrace{W^{(j+1)} + \Delta W^{(j+1)}} - \rho_{j+1}\bigbrace{P_{_G} \psi_{j+1}\bigbrace{H^{(j)}}} W^{(j+1)}} \\
        \lesssim & \bignorms{\Delta U^{(j+1)}}+\bignorms{\Delta W^{(j+1)}} + \bignormFro{\tilde H^{(j)} - H^{(j)}}.
    \end{align*}}%
    Thus, we have finished the proof of the induction step.
\end{proof}

Next, for any $i$ and $j$, let $\frac{\partial \tilde H^{(j)}}{\partial W^{(i)}}$ be the perturbation of the partial derivative of $H^{(j)}$ with perturbations given by $\Delta \cW$ and $\Delta \cU$.

\begin{claim}\label{claim_grad}
    Suppose that Assumption \ref{ass_1} holds. For any $j=1,\dots,l-1$, the change in the Jacobian of the $j$-th layer's output $H^{(j)}$ with respect to $W^{(i)}$ and $U^{(i)}$ satisfies:
    \begin{align}
        \bignormFro{\frac{{{\partial \tilde H}}^{(j)}}{\partial W^{(i)}} - \frac{\partial H^{(j)}}{\partial W^{(i)}}}  &\lesssim \sum_{t=1}^j\Bigbrace{\bignorms{\Delta U^{(t)}} + \bignorms{\Delta W^{(t)}}}. \label{eq_claim_grad_1} \\
        \bignormFro{\frac{{\partial\tilde H}^{(j)}}{\partial  U^{(i)}}  - \frac{\partial H^{(j)}}{\partial U^{(i)}}} &\lesssim \sum_{t=1}^j\Bigbrace{\bignorms{\Delta U^{(t)}} + \bignorms{\Delta W^{(t)}}}. \label{eq_claim_grad_2}
    \end{align}
\end{claim}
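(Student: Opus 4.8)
The plan is to prove \eqref{eq_claim_grad_1} and \eqref{eq_claim_grad_2} simultaneously by induction on $j$, mirroring the structure of the just-proved Claim \ref{claim_perturb} but now tracking first derivatives rather than the outputs themselves. For $j < i$ the output $H^{(j)}$ does not depend on $W^{(i)}$ or $U^{(i)}$, so both Jacobians vanish and the claim is trivial; the substantive base case is $j = i$, where $\partial H^{(i)}/\partial W^{(i)}$ depends on $W^{(i)}$ both directly and through the argument of $\phi_i'$, and the perturbation difference is controlled by the $\kappa_1$-Lipschitzness of $\phi_i'$ combined with Claim \ref{claim_perturb} applied to $H^{(i-1)}$, plus an explicit $\bignorms{\Delta W^{(i)}}$ term. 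Throughout, I would absorb into $\lesssim$ the fixed constants $\kappa_0,\kappa_1$, the spectral-norm bounds $s_t$, and $\bignorms{P_{_G}}$, together with the uniform boundedness over $\cH$ of the activation values, their first derivatives, and the Jacobians $\partial H^{(j)}/\partial W^{(i)}$ themselves; the last of these is exactly the content of \eqref{eq_loss_4}. These boundedness facts are precisely what allow the telescoping of differences of products in the inductive step.

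The core computation is the chain-rule form of the Jacobian recursion. Writing $Z^{(t)} = X U^{(t)} + \rho_t(P_{_G}\psi_t(H^{(t-1)})) W^{(t)}$ so that $H^{(t)} = \phi_t(Z^{(t)})$, differentiating with respect to an entry of $W^{(i)}$ gives, for $t > i$, an expression of the form $\phi_t'(Z^{(t)}) \odot \big[ \big( \rho_t'(P_{_G}\psi_t(H^{(t-1)})) \odot P_{_G}( \psi_t'(H^{(t-1)}) \odot \partial H^{(t-1)}/\partial W^{(i)} ) \big) W^{(t)} \big]$, with an additional explicit term when $t = i$. In the inductive step I would subtract the perturbed recursion from the unperturbed one and expand the difference as a telescoping sum $\prod_m \tilde A_m - \prod_m A_m = \sum_m \tilde A_1 \cdots \tilde A_{m-1}(\tilde A_m - A_m) A_{m+1}\cdots A_k$ over its factors. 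Each factor falls into exactly one of three categories: an activation-derivative factor ($\phi_t',\rho_t',\psi_t'$) whose argument has shifted, which contributes $\kappa_1$ times the shift and is controlled through Claim \ref{claim_perturb} by $\sum_{t=1}^{j}(\bignorms{\Delta U^{(t)}} + \bignorms{\Delta W^{(t)}})$; a weight factor $W^{(t)}+\Delta W^{(t)}$, contributing an explicit $\bignorms{\Delta W^{(t)}}$ term; or the inner Jacobian factor $\partial H^{(t-1)}/\partial W^{(i)}$, whose perturbation is supplied by the induction hypothesis. The diffusion matrix $P_{_G}$ passes through with its bounded spectral norm, and the Hadamard products are handled via $\bignormFro{A \odot B} \le (\max_{p,q}\lvert A_{p,q}\rvert)\,\bignormFro{B}$ together with the uniform bounds on the activation derivatives.

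I expect the \emph{main obstacle} to be the bookkeeping of this telescoping expansion rather than any single sharp inequality: the Jacobian recursion nests two activation-derivative factors, the diffusion matrix, and the weight matrix, each of which is perturbed, so I must verify that in every term of the telescoping sum the leading and trailing partial products are uniformly bounded over $\cH$, leaving only the single middle difference to carry a perturbation factor. The one point requiring genuine care is checking that each argument shift fed into a Lipschitz bound for $\phi_t',\rho_t',\psi_t'$ is itself $O\big(\sum_{t}(\bignorms{\Delta U^{(t)}}+\bignorms{\Delta W^{(t)}})\big)$: for $H^{(t-1)}$ this is Claim \ref{claim_perturb} directly, and for the pre-activation $Z^{(t)}$ it follows by combining Claim \ref{claim_perturb} with the explicit weight perturbations $\Delta W^{(t)}, \Delta U^{(t)}$. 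Summing the resulting contributions across the $j+1$ factors and reindexing reproduces the stated bound at layer $j+1$, closing the induction. The proof of \eqref{eq_claim_grad_2} is identical after interchanging the roles of $W^{(i)}$ and $U^{(i)}$, with the direct term now entering through the $X U^{(i)}$ summand at layer $i$.
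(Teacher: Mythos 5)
Your proposal is correct and follows essentially the same route as the paper's proof: induction over $j \ge i$ for fixed $i$, with the base case and inductive step both controlled by the $\kappa_1$-Lipschitzness of the activation derivatives combined with Claim \ref{claim_perturb}, explicit $\bignorms{\Delta W^{(t)}}, \bignorms{\Delta U^{(t)}}$ terms for the weight factors, and all bounded factors absorbed into $\lesssim$ (the telescoping difference-of-products you describe is exactly what the paper's chains of inequalities do implicitly). One small correction: the uniform boundedness of the Jacobians that you invoke is the content of Proposition \ref{prop_first_mpgnn} (equations \eqref{eq_deri_W}--\eqref{eq_deri_U}), not of equation \eqref{eq_loss_4}, though since that proposition is proved by an independent induction there is no circularity in using it here.
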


\begin{proof}
    We will consider a fixed $i = 1,\dots,l-1$ and take induction over $j = i, \dots, l-1$.
    We focus on the proof of equation \eqref{eq_claim_grad_1}, while the proof of equation \eqref{eq_claim_grad_2} will be similar.
    To simplify the derivation, we use two notations for brevity.
    Let
    \begin{align*}
        F_j = P_{_G} \psi_{j}\bigbrace{H^{(j-1)}} W^{(j)} \text{ and }
        E_j = X U^{(j)} + \rho_{j}\bigbrace{F_j}.
    \end{align*}
    First, we consider the base case when $j = i$.
    By the chain rule, we have:
    \begin{align*}
        \bignormFro{\frac{\partial \tilde H^{(i)}}{\partial W^{(i)}}  - \frac{\partial H^{(i)}}{\partial W^{(i)}}} 
        =&  \bignormFro{\phi'_i\bigbrace{\tilde E_i} 
        \odot \frac{\partial \tilde E_i} {\partial W^{(i)}} - \phi'_i\bigbrace{E_i} \odot \frac{\partial E_i} {\partial W^{(i)}} } \\
        \lesssim&  \bignormFro{\phi'_i\bigbrace{\tilde E_i}  - \phi'_i\bigbrace{E_i} } + \bignormFro{\frac{\partial\tilde E_i} {\partial W^{(i)}} - \frac{\partial E_i} {\partial W^{(i)}} }. 
    \end{align*}
    From Claim \ref{claim_perturb}, we know
    \begin{align*}
        \bignormFro{\phi'_i\bigbrace{\tilde E_i}  - \phi'_i\bigbrace{E_i}} \leq \kappa_1 \bignormFro{\tilde E_i - E_i} \lesssim \bignorms{\Delta W^{(i)}} + \bignorms{\Delta U^{(i)}}.
    \end{align*}
    By the chain rule again, we get:
    \begin{align*}
        \bignormFro{\frac{\partial \tilde E_i} {\partial W^{(i)}} - \frac{\partial E_i} {\partial W^{(i)}} } 
        \lesssim \bignormFro{\rho'_i\bigbrace{\tilde F_i} - \rho'_i\bigbrace{F_i}} + \bignormFro{\frac{\partial \tilde F_i} {\partial W^{(i)}} - \frac{\partial F_i} {\partial W^{(i)}}} 
        \lesssim {\bignorms{\Delta W^{(i)}} + \bignorms{\Delta U^{(i)}}} \tag{by Claim \ref{claim_perturb} again}.
    \end{align*}
    Hence, we know that equation \eqref{eq_claim_grad_1} will be correct when $j=i$. Assuming that equation \eqref{eq_claim_grad_1} will be correct for any $j$ up to $j \ge i$, we have
    \begin{align*}
        & \bignormFro{\frac{\partial \tilde H^{(j+1)}}{\partial W^{(i)}} - \frac{\partial H^{(j+1)}}{\partial W^{(i)}}} \\
        \lesssim & \bignormFro{\phi'_{j+1}\bigbrace{\tilde E_{j+1}} - \phi'_{j+1}\bigbrace{E_{j+1}} } + \bignormFro{\frac{\partial \tilde E_{j+1}} {\partial W^{(i)}} - \frac{\partial E_{j+1}} {\partial W^{(i)}} } \\
        \lesssim & \sum_{t=1}^{j+1}\Bigbrace{\bignorms{\Delta U^{(t)}} + \bignorms{\Delta W^{(t)}}} + \bignormFro{\rho'_{j+1}\bigbrace{\tilde F_{j+1}} - \rho'_{j+1}\bigbrace{F_{j+1}}} + \bignormFro{\frac{\partial \tilde F_{j+1}} {\partial W^{(i)}} - \frac{\partial F_{j+1}} {W^{(i)}}} \\
        \lesssim & \sum_{t=1}^{j+1}\Bigbrace{\bignorms{\Delta U^{(t)}}+\bignorms{\Delta W^{(t)}}} + \bignormFro{\psi'_{j+1}\bigbrace{\tilde H^{(j)}} - \psi'_{j+1}\bigbrace{H^{(j)}}} +  \bignormFro{\frac{\partial \tilde H^{(j)}} {\partial W^{(i)}} - \frac{\partial H^{(j)}} {\partial W^{(i)}}} \\
        \lesssim & \sum_{t=1}^{j+1}\Bigbrace{\bignorms{\Delta U^{(t)}}+\bignorms{\Delta W^{(t)}}}. \tag{by Claim \ref{claim_perturb} and the induction step}
    \end{align*}
    The above steps all use Claim \ref{claim_perturb}.
    The last step additionally uses the induction hypothesis.
    From repeatedly applying the above beginning with $j = i$ along with the base case of equation \eqref{eq_claim_grad_1}, we conclude that equation \eqref{eq_claim_grad_1} holds.

    \smallskip
    Next, we consider the base case for equation \eqref{eq_claim_grad_2}. For the base case $j=i$, from the chain rule, by Claim \ref{claim_perturb}, we get:
    \begin{align*}
        \bignormFro{\frac{\partial \tilde H^{(i)}}{\partial U^{(i)}} - \frac{\partial H^{(i)}}{\partial U^{(i)}}} 
        \lesssim \bignormFro{\phi'_i\bigbrace{\tilde E_i} - \phi'_i\bigbrace{E_i} } + \bignormFro{\frac{\partial \tilde E_i} {\partial U^{(i)}} - \frac{\partial E_i} {\partial U^{(i)}} }
        \lesssim {\bignorms{\Delta W^{(i)}} + \bignorms{\Delta U^{(i)}}}.
    \end{align*}
    Hence, we know that equation \eqref{eq_claim_grad_2} will be correct when $j=i$. Assuming that equation \eqref{eq_claim_grad_2} will be correct for any $j$ up to $j \ge i$, we have
    \begin{align*}
        & \bignormFro{\frac{\partial \tilde H^{(j+1)}}{\partial U^{(i)}} - \frac{\partial H^{(j+1)}}{\partial U^{(i)}}} 
        \lesssim  \bignormFro{\phi'_{j+1}\bigbrace{\tilde E_{j+1}} - \phi'_{j+1}\bigbrace{E_{j+1}}} + \bignormFro{\frac{\partial \tilde E_{j+1}} {\partial U^{(i)}} - \frac{\partial E_{j+1}} {\partial U^{(i)}} } \\
        &\lesssim  \sum_{t=1}^{j+1}\Bigbrace{\bignorms{\Delta U^{(t)}} + \bignorms{\Delta W^{(t)}}} + \bignormFro{\rho'_{j+1}\bigbrace{\tilde F_{j+1}} - \rho'_{j+1}\bigbrace{F_{j+1}}} + \bignormFro{\frac{\partial \tilde F_{j+1}} {\partial U^{(i)}} - \frac{\partial F_{j+1}} {\partial U^{(i)}}}  \\
        &\lesssim  \sum_{t=1}^{j+1}\Bigbrace{\bignorms{\Delta U^{(t)}}+\bignorms{\Delta W^{(t)}}} + \bignormFro{\psi'_{j+1}\bigbrace{\tilde H^{(j)}} - \psi'_{j+1}\bigbrace{H^{(j)}}} +  \bignormFro{\frac{\partial \tilde H^{(j)}} {\partial U^{(i)}} - \frac{\partial H^{(j)}} {\partial U^{(i)}}} \\
        &\lesssim  \sum_{t=1}^{j+1}\Bigbrace{\bignorms{\Delta U^{(t)}}+\bignorms{\Delta W^{(t)}}}. \tag{by Claim \ref{claim_perturb} and the induction step}
    \end{align*}
    The second and third steps are based on Claim \ref{claim_perturb}.
    From repeatedly applying the above beginning with $j = i$ along with the base case of equation \eqref{eq_claim_grad_2}, we conclude that equation \eqref{eq_claim_grad_2} holds.
    The proof of claim \ref{claim_grad} is complete.
\end{proof}

\begin{proof}[Proof of Proposition \ref{claim_hess}]
    We will consider a fixed $i = 1,\dots,l-1$ and take induction over $j = i, \dots, l-1$.
    We focus on the proof of equation \eqref{eq_claim_hess_1}, while the proof of equation \eqref{eq_claim_hess_2} will be similar.
    To simplify the derivation, we use two notations for brevity.
    Let
    \begin{align*}
        F_j = P_{_G} \psi_{j}\bigbrace{H^{(j-1)}} W^{(j)} \text{ and }
        E_j = X U^{(j)} + \rho_{j}\bigbrace{F_j}.
    \end{align*}
    First, we consider the base case when $j = i$.
    By the chain rule, we have:
    We use the chain rule to get:
    \begin{align*}
        \frac {\partial^2 H^{(i)}} {\partial \bigbrace{W^{(i)}_{p,q}}^2}
        = \phi''_{i}(E_i) \odot \frac{\partial E_i}{\partial W_{p,q}^{(i)}} \odot \frac{\partial E_i}{\partial W_{p,q}^{(i)}}
        + \phi'_{i}(E_i) \odot \rho''_{i}(F_i) \odot {\frac{\partial F_i}{\partial W_{p,q}^{(i)}}} \odot {\frac{\partial F_i}{\partial W^{(i)}_{p,q}}}.
    \end{align*}
    Hence, the Frobenius norm of the Hessian of $H^{(i)}$ with respect to $W_i$ under perturbation on $W$ and $U$ turns to
    \begin{align*}
        \bignormFro{\bH_{\cW}^{(i)} [\tilde H^{(i)}] - \bH_{\cW}^{(i)} [H^{(i)}]} 
        & \lesssim \bignormFro{\phi''_i\bigbrace{\tilde E_i} - \phi''_i\bigbrace{E_i} } 
        + \bignormFro{\frac{\partial \tilde E_i} {\partial W^{(i)}} - \frac{\partial E_i} {\partial W^{(i)}} } + \bignormFro{\phi'_i\bigbrace{\tilde E_i} - \phi'_i\bigbrace{E_i} } \\
        & + \bignormFro{\rho''_i\bigbrace{\tilde F_i} - \rho''_i\bigbrace{F_i} } + \bignormFro{\frac{\partial \tilde F_i} {\partial W^{(i)}} - \frac{\partial F_i} {\partial W^{(i)}} }. 
    \end{align*}
    From Claim \ref{claim_perturb}, we know
    \begin{align*}
        \bignormFro{\phi''_i\bigbrace{\tilde E_i} - \phi''_i\bigbrace{E_i}} &\le \kappa_2 \bignormFro{\tilde E_i - E_i} \lesssim \bignorms{\Delta W^{(i)}} + \bignorms{\Delta U^{(i)}}, \\
        \bignormFro{\phi'_i\bigbrace{\tilde E_i} - \phi'_i\bigbrace{E_i} } &\le \kappa_1 \bignormFro{\tilde E_i - E_i} \lesssim \bignorms{\Delta W^{(i)}} + \bignorms{\Delta U^{(i)}},  \\
        \bignormFro{\rho''_i\bigbrace{\tilde F_i} - \rho''_i\bigbrace{F_i} } &\le \kappa_2 \bignormFro{\tilde F_i - F_i} \lesssim \bignorms{\Delta W^{(i)}} + \bignorms{\Delta U^{(i)}}. 
    \end{align*}
    From Claim \ref{claim_grad}, we have
    \begin{align*}
        \bignormFro{\frac{\partial \tilde E_i} {\partial W^{(i)}} - \frac{\partial E_i} {\partial W^{(i)}} } & \lesssim \bignorms{\Delta W^{(i)}} + \bignorms{\Delta U^{(i)}}, \\
        \bignormFro{\frac{\partial \tilde F_i} {\partial W^{(i)}} - \frac{\partial F_i} {\partial W^{(i)}} } & \lesssim \bignorms{\Delta W^{(i)}} + \bignorms{\Delta U^{(i)}}. 
    \end{align*}
    Hence, we know that equation \eqref{eq_claim_hess_1} will be correct when $j=i$. Assuming that equation \eqref{eq_claim_hess_1} will be correct for any $j$ up to $j \ge i$, we can get the following steps, by taking another derivative of the first-order derivative, we can get the following steps:
    \begin{small}
        \begin{align*}
            \frac {\partial^2 H^{(j+1)}} {\partial \bigbrace{W^{(i)}_{p,q}}^2}
            =& \, \phi''_{j+1}(E_{j+1}) \odot \frac{\partial E_{j+1}}{\partial W_{p,q}^{(i)}} \odot \frac{\partial E_{j+1}}{\partial W_{p,q}^{(i)}}
            + \phi'_{j+1}(E_{j+1}) \odot \rho''_{j+1}(F_{j+1}) \odot \frac{\partial F_{j+1}}{\partial W_{p,q}^{(i)}} \odot \frac{\partial F_{j+1}}{\partial W_{p,q}^{(i)}} \\
            & + \phi'_{j+1}(E_{j+1}) \odot \rho'_{j+1}(F_{j+1}) \odot {P_{_G} \Bigbrace{\psi''_{j+1}(H^{(j)}) \odot \frac{\partial H^{(j)}}{\partial W_{p,q}^{(i)}} \odot \frac{\partial H^{(j)}}{\partial W_{p,q}^{(i)}} + \psi'_{j+1}(H^{(j)}) \odot \frac{\partial^2 H^{(j)}}{\partial\bigbrace{W_{p,q}^{(i)}}^2}}W^{(j+1)}}. 
        \end{align*} 
    \end{small}%
    Thus, the Frobenius norm of the Hessian of $H^{(j+1)}$ with respect to $W^{(i)}$ satisfies:
    \begin{small}
        \begin{align*}
            & \bignormFro{\bH_{\cW}^{(i)} [\tilde H^{(j+1)}] - \bH_{\cW}^{(i)} [H^{(j+1)}]} 
            \lesssim \underbrace{\bignormFro{\phi''_{j+1}\bigbrace{\tilde E_{j+1}} - \phi''_{j+1}\bigbrace{E_{j+1}} }}_{A_1} 
            + \underbrace{\bignormFro{\frac{\partial \tilde E_{j+1}} {\partial W^{(i)}} - \frac{\partial E_{j+1}} {\partial W^{(i)}} }}_{B_1} \\
            + & \underbrace{\bignormFro{\phi'_{j+1}\bigbrace{\tilde E_{j+1}} - \phi'_{j+1}\bigbrace{E_{j+1}} }}_{A_2} 
            + \underbrace{\bignormFro{\rho''_{j+1}\bigbrace{\tilde F_{j+1}} - \rho''_{j+1}\bigbrace{F_{j+1}} }}_{A_3} \\
            +& \underbrace{\bignormFro{\frac{\partial \tilde F_{j+1}} {\partial W^{(i)}} - \frac{\partial F_{j+1}} {\partial W^{(i)}} }}_{B_2} 
            + \underbrace{\bignormFro{\rho'_{j+1}\bigbrace{\tilde F_{j+1}} - \rho'_{j+1}\bigbrace{F_{j+1}} }}_{A_4} \\
            + & \underbrace{\bignormFro{\psi''_{j+1}\bigbrace{\tilde H^{(j)}} - \psi''_{j+1}\bigbrace{H^{(j)}} }}_{A_5} 
            + \underbrace{\bignormFro{ \frac{\partial \tilde H^{(j)}} {\partial W^{(i)}} - \frac{\partial H^{(j)}} {\partial W^{(i)}} }}_{B_3} \\ 
            + & \underbrace{\bignormFro{\psi'_{j+1}\bigbrace{\tilde H^{(j)}} - \psi'_{j+1}\bigbrace{H^{(j)}}}}_{A_6} 
            + \underbrace{\bignormFro{\bH_{\cW}^{(i)} [\tilde H^{(j)}] - \bH_{\cW}^{(i)} [H^{(j)}]}}_{C_1}.
        \end{align*}
    \end{small}%
    Similarly, by Claim \ref{claim_perturb}, we get 
    \begin{align*}
        A_i \lesssim \sum_{t=1}^{j+1}\Bigbrace{\bignorms{\Delta W^{(t)}} + \bignorms{\Delta U^{(t)}}}, \text{ for } 1\le i\le 6.
    \end{align*}
    By Claim \ref{claim_grad}, we get
    \begin{align*}
        B_i \lesssim \sum_{t=1}^{j+1}\Bigbrace{\bignorms{\Delta W^{(t)}} + \bignorms{\Delta U^{(t)}}}, \text{ for } 1\le i\le 3.
    \end{align*}
    By the induction hypothesis, $C_1$ is also less than the above quantity. %
    From repeatedly applying the above beginning with $j = i$ along with the base case of equation \eqref{eq_claim_hess_1}, we conclude that equation \eqref{eq_claim_hess_1} holds.

    \smallskip
    Next, we consider the base case for equation \eqref{eq_claim_hess_2}. For the base case $j=i$, from the chain rule, we get:
    \begin{align*}
        \bignormFro{\bH_{\cU}^{(i)} [\tilde H^{(i)}] - \bH_{\cU}^{(i)} [H^{(i)}]}
        \lesssim & \bignormFro{\phi''_i\bigbrace{\tilde E_i} - \phi''_i\bigbrace{E_i} } + \bignormFro{\frac{\partial \tilde E_i} {\partial U^{(i)}} - \frac{\partial E_i} {\partial U^{(i)}} } \\
        \lesssim & \kappa_2 \bignormFro{\tilde E_i - E_i} + {\bignorms{\Delta W^{(i)}} +  \bignorms{\Delta U^{(i)}}} \tag{by Claim \ref{claim_grad}} \\
        \lesssim & {\bignorms{\Delta W^{(i)}} +  \bignorms{\Delta U^{(i)}}} \tag{by Claim \ref{claim_perturb}}.
    \end{align*}
    Hence, we know that equation \eqref{eq_claim_hess_2} will be correct when $j=i$. Assuming that equation \eqref{eq_claim_hess_2} will be correct for any $j$ up to $j \ge i$, we obtain the induction step
    similar to the proof of equation \eqref{eq_claim_hess_1}, by Claim \ref{claim_perturb}, Claim \ref{claim_grad}, and the induction hypothesis, we conclude that equation \eqref{eq_claim_hess_2} holds.
\end{proof}

\subsection{Proof for message passing graph neural networks}\label{app_mpgnn}

Next, we present proof for message-passing graph neural networks.
First, in Appendix \ref{proof_trace}, we derive the trace bound, which separates the trace of the Hessian matrix into each entry of the weight matrices.
Then in Appendix \ref{proof_first} and \ref{proof_second}, we provide bounds on the first-order and second-order derivatives of the Hessian matrix.
Last, in Appendix \ref{proof_theorem}, building on these results, we finish the proof of Theorem \ref{thm_mpgnn}.

\subsubsection{Proof of Lemma \ref{lemma_trace_hess}}\label{proof_trace}

\begin{proof}[Proof of Lemma \ref{lemma_trace_hess}]
    Notice that $f(X, G) =  H^{(l)}$.
    Recall that in each layer for $1\le i\le l-1$, there are two weight matrices, a $d_{i-1}$ by $d_i$ matrix denoted as $W^{(i)}$, and a $d_0$ by $U^{(i)}$ matrix denoted as $U^{(i)}$.
    To deal with the trace of the Hessian $\bH^{(i)}$, we first notice that there are two parts in the trace:
    \begin{align*}
        \bigabs{\tr\big[\bH^{(i)}[\ell(H^{(l)}, y)]\big]} 
        \le \underbrace{\bigabs{\sum_{p=1}^{d_{i-1}}\sum_{q=1}^{d_i}\frac{\partial^2 \ell(H^{(l)}, y)}{\partial \big(W^{(i)}_{p,q}\big)^2}}}_{T_1}
        + \underbrace{\bigabs{\sum_{p=1}^{d_0}\sum_{q=1}^{d_i} \frac{\partial^2 \ell(H^{(l)}, y)}{\partial\bigbrace{U_{p,q}^{(i)}}^2}}}_{T_2}.
    \end{align*}
    We can inspect $T_1$ and $T_2$ in the above step separately.
    First, we expand out the second-order derivatives in $T_1$.
    This will involve two terms by the chain rule.
    \begin{align}
        T_1 =\,& \bigabs{\sum_{p=1}^{d_{i-1}}\sum_{q=1}^{d_i} \Bigg\langle\frac{\partial\ell(H^{(l)},y)}{\partial H^{(l)}},\frac{\partial^2 H^{(l)}}{\partial \big(W^{(i)}_{p,q}\big)^2}\Bigg\rangle} 
        + \bigabs{ \sum_{p=1}^{d_{i-1}}\sum_{q=1}^{d_i} \Bigg\langle  \frac{\partial^2 \ell(H^{(l)}, y)}{\partial \big(H^{(l)}\big)^2} \frac{\partial H^{(l)}}{\partial W^{(i)}_{p,q}}, \frac{\partial H^{(l)}}{\partial W^{(i)}_{p,q}} \Bigg\rangle} \nonumber \\
        \leq\,& \sum_{p=1}^{d_{i-1}}\sum_{q=1}^{d_i}\bignorm{\frac{\partial\ell(H^{(l)}, y)}{\partial H^{(l)}}}\bignorm{\frac{\partial^2 H^{(l)}}{\partial \big(W^{(i)}_{p,q}\big)^2}} + \sum_{p=1}^{d_{i-1}}\sum_{q=1}^{d_i} \bignorms{\frac{\partial^2 \ell(H^{(l)}, y)}{\partial \big(H^{(l)}\big)^2}}\bignorm{\frac{\partial H^{(l)}}{\partial W^{(i)}_{p,q}}}^2 \nonumber \\
        \leq\,& \kappa_0 \sqrt{k} \sum_{p=1}^{d_{i-1}}\sum_{q=1}^{d_i} \bignorm{\frac{\partial^2 H^{(l)}}{\partial \big(W^{(i)}_{p,q}\big)^2}} + \kappa_1  k \sum_{p=1}^{d_{i-1}}\sum_{q=1}^{d_i} \bignorm{\frac{\partial H^{(l)}}{\partial W^{(i)}_{p,q}}}^2. \label{eq_lemma_loss}
    \end{align}
    The last step is because $\ell(\cdot)$ is $\kappa_0$-Lipschitz continuous and $\ell'(\cdot)$ is $\kappa_1$-Lipschitz continuous, under Assumption \ref{ass_1}.
    Thus, the Euclidean norm of ${\frac{\partial \ell(H^{(l)}, y)}{\partial H^{(l)}}} $ is at most $\kappa_0 \sqrt k$, since $H^{(l)}$ is a $k$-dimensional vector.
    Recall from step \eqref{eq_mpgnn_readout} that $H^{(l)} = \frac{1}{n}\bm{1}_n^{\top} H^{(l-1)}W^{(l)}$. Hence, we have
    \begin{align}
        \bignorm{\frac{\partial H^{(l)}}{\partial W^{(i)}_{p,q}}} 
        &= \bignorm{\frac{1}{n} \bm{1}_n^{\top} \frac{\partial H^{(l-1)}}{\partial W^{(i)}_{p,q}}W^{(l)}} \nonumber \\
        &\leq \bignorm{\frac{1}{n} \bm{1}_n^{\top}} \bignorms{\frac{\partial H^{(l-1)}}{\partial W^{(i)}_{p,q}}W^{(l)}}\leq \frac{1}{\sqrt{n}}\bignorms{\frac{\partial H^{(l-1)}}{\partial W^{(i)}_{p,q}}} \bignorms{W^{(l)}}. \label{eq_lemma_readout_1}
    \end{align}
    In a similar vein, the Euclidean norm of ${\frac{\partial^2 \ell(H^{(l)}, y)}{\partial (H^{(l)})^2}}$ is at most $\kappa_1 k$, since the second-order derivatives become a $k$ by $k$ matrix. Then, we get
    \begin{align}
        \bignorm{\frac{\partial^2 H^{(l)}}{\partial \bigbrace{W^{(i)}_{p,q}}^2}} &= \bignorm{\frac{1}{n} \bm{1}_n^{\top} \frac{\partial^2 H^{(l-1)}}{\partial \bigbrace{W^{(i)}_{p,q}}^2}W^{(l)}} \nonumber \\
        &\leq \bignorm{\frac{1}{n} \bm{1}_n^{\top}} \bignorms{\frac{\partial^2 H^{(l-1)}}{\partial \bigbrace{W^{(i)}_{p,q}}^2}W^{(l)}}\leq \frac{1}{\sqrt{n}}\bignorms{\frac{\partial^2 H^{(l-1)}}{\partial \bigbrace{W^{(i)}_{p,q}}^2}} \bignorms{W^{(l)}}. \label{eq_lemma_readout_2}
    \end{align}
    After substituting equations \eqref{eq_lemma_readout_1} and \eqref{eq_lemma_readout_2} into equation \eqref{eq_lemma_loss}, we get: 
    \begin{align*}
        T_1 &\leq \frac{\kappa_0 \sqrt{k}}{\sqrt{n}} \bignorms{W^{(l)}} \sum_{p=1}^{d_{i-1}}\sum_{q=1}^{d_i} \bignorms{\frac{\partial^2 H^{(l-1)}}{\partial \big(W^{(i)}_{p,q}\big)^2}} + \frac {\kappa_1 k} n \bignorms{W^{(l)}}^2 \sum_{p=1}^{d_{i-1}}\sum_{q=1}^{d_i} \bignorms{\frac{\partial H^{(l-1)}}{\partial W^{(i)}_{p,q}}}^2. \\
        &\leq \frac{\kappa_0 \sqrt{k}}{\sqrt{n}} \bignorms{W^{(l)}} \sum_{p=1}^{d_{i-1}}\sum_{q=1}^{d_i} \bignormFro{\frac{\partial^2 H^{(l-1)}}{\partial \big(W^{(i)}_{p,q}\big)^2}} +  \frac {\kappa_1 k} n \bignorms{W^{(l)}}^2 \sum_{p=1}^{d_{i-1}}\sum_{q=1}^{d_i}\bignormFro{\frac{\partial H^{(l-1)}}{\partial W_{p,q}^{(i)}}}^2.
    \end{align*}
    The proof for the case of $T_2$ concerning $U^{(i)}$ follows the same steps as above.
    Without belaboring all the details, one can get that
    \begin{align}
        T_2 \le  \frac{\kappa_0 \sqrt k}{\sqrt n} \bignorms{W^{(l)}} \sum_{p=1}^{d_{0}}\sum_{q=1}^{d_i} \bignormFro{\frac{\partial^2 H^{(l-1)}}{\partial\bigbrace{U_{p,q}^{(i)}}^2}}
        + \frac{\kappa_1 k}{n} \bignorms{W^{(l)}}^2 \sum_{p=1}^{d_0}\sum_{q=1}^{d_{i}}\bignormFro{\frac{\partial H^{(l-1)}}{\partial U_{p,q}^{(i)}}}^2.
    \end{align}
    This completes the proof of Lemma \ref{lemma_trace_hess}.
\end{proof}

\subsubsection{Dealing with first-order derivatives}\label{proof_first}

Based on Lemma \ref{lemma_trace_hess}, the analysis involves two parts,
one on the first-order derivatives of $H^{(j)}$ for all layers $j$,
and the other on the second-order derivatives of $H^{(j)}$ for all layers $j$.

\begin{proposition}\label{prop_first_mpgnn}
    In the setting of Theorem \ref{thm_mpgnn}, 
    the first-order derivative of $H^{(j)}$ with respect to $W^{(i)}$ and $U^{(i)}$ satisfies the following, for any $i = 1,\dots,l-1$ and $j\ge i$:
    \begin{align}
        \bignormFro{\frac{\partial H^{(j)}} {\partial W^{(i)}}}
        \le\,& \kappa_0^{3(j-i+1)} \sqrt{d_i} \bignorms{P_{_G}}^{j-i+1} \bignormFro{H^{(i-1)}} \prod_{t=i+1}^j \bignorms{W^{(t)}}, \label{eq_deri_W} \\
        \bignormFro{\frac{\partial H^{(j)}} {\partial U^{(i)}}} \le\,& \kappa_0^{3(j-i)+1} \sqrt{d_i} \bignorms{P_{_G}}^{j-i+1} \bignormFro{X} \prod_{t=i+1}^j \bignorms{W^{(t)}}. \label{eq_deri_U}
    \end{align}
\end{proposition}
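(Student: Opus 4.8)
The plan is to prove \eqref{eq_deri_W} and \eqref{eq_deri_U} together, by induction on $j$ from the base case $j=i$ upward, for each fixed source layer $i$; the two bounds differ only in how the innermost derivative is seeded. I view $\frac{\partial H^{(j)}}{\partial W^{(i)}}$ as the order-four Jacobian whose $(p,q)$-slice is the matrix $\frac{\partial H^{(j)}}{\partial W^{(i)}_{p,q}}$, so that $\bignormFro{\partial H^{(j)}/\partial W^{(i)}}^2 = \sum_{p,q}\bignormFro{\partial H^{(j)}/\partial W^{(i)}_{p,q}}^2$. The central observation is that one step of the recursion \eqref{eq_matrix_mpgnn} contracts every slice by the \emph{same} scalar: abbreviating $F_{j+1} = P_{_G}\psi_{j+1}(H^{(j)})W^{(j+1)}$ and $E_{j+1} = XU^{(j+1)}+\rho_{j+1}(F_{j+1})$, for $i\le j$ the chain rule gives
\begin{align*}
\frac{\partial H^{(j+1)}}{\partial W^{(i)}_{p,q}}
= \phi_{j+1}'(E_{j+1}) \odot \Bigbrace{\rho_{j+1}'(F_{j+1}) \odot \bigbrace{P_{_G}\Bigbrace{\psi_{j+1}'(H^{(j)}) \odot \frac{\partial H^{(j)}}{\partial W^{(i)}_{p,q}}}W^{(j+1)}}}.
\end{align*}
Under Assumption \ref{ass_1} the three entrywise factors $\phi_{j+1}',\rho_{j+1}',\psi_{j+1}'$ are bounded in absolute value by $\kappa_0$, so each Hadamard product costs one factor $\kappa_0$; bounding the two linear maps by $\bignorms{P_{_G}}$ and $\bignorms{W^{(j+1)}}$ then yields the per-slice contraction $\bignormFro{\partial H^{(j+1)}/\partial W^{(i)}_{p,q}} \le \kappa_0^3\bignorms{P_{_G}}\bignorms{W^{(j+1)}}\bignormFro{\partial H^{(j)}/\partial W^{(i)}_{p,q}}$. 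Since this factor is independent of $(p,q)$, squaring and summing lifts it verbatim to the tensor Frobenius norm.

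Next I would establish the base case $j=i$, which is where the $\sqrt{d_i}$ is created. For the $W$-derivative, $F_i = MW^{(i)}$ with $M := P_{_G}\psi_i(H^{(i-1)})$, and the slice $\partial F_i/\partial W^{(i)}_{p,q}$ is the matrix holding the $p$-th column of $M$ in its $q$-th column and zeros elsewhere, so $\bignormFro{\partial F_i/\partial W^{(i)}_{p,q}} = \bignorm{M_{:,p}}$. As this value is independent of $q$, summing over the $d_i$ choices of $q$ and then over $p$ gives $\sum_{p,q}\bignorm{M_{:,p}}^2 = d_i\bignormFro{M}^2$, i.e.\ the factor $\sqrt{d_i}$; and $\bignormFro{M} \le \bignorms{P_{_G}}\bignormFro{\psi_i(H^{(i-1)})} \le \kappa_0\bignorms{P_{_G}}\bignormFro{H^{(i-1)}}$ because $\psi_i$ is $\kappa_0$-Lipschitz and centered at zero. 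Together with the two outer Hadamard factors $\phi_i',\rho_i'$ this gives the $j=i$ instance of \eqref{eq_deri_W}, namely $\kappa_0^3\sqrt{d_i}\bignorms{P_{_G}}\bignormFro{H^{(i-1)}}$. For the $U$-derivative the base case is simpler: $F_i$ does not depend on $U^{(i)}$, so only the anchor term $XU^{(i)}$ survives, the identical column-counting produces $\sqrt{d_i}\bignormFro{X}$, and a single $\phi_i'$ contributes one $\kappa_0$.

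Finally, unrolling the per-step contraction from $j=i$ to $j$ multiplies the base bound by $\prod_{t=i+1}^{j}\kappa_0^3\bignorms{P_{_G}}\bignorms{W^{(t)}}$, which collects exactly the exponents $\kappa_0^{3(j-i+1)}$, $\bignorms{P_{_G}}^{j-i+1}$ and $\prod_{t=i+1}^{j}\bignorms{W^{(t)}}$ claimed in \eqref{eq_deri_W}; the $U$-case is identical but starts from $\kappa_0^1\sqrt{d_i}\bignormFro{X}$, giving the exponent $\kappa_0^{3(j-i)+1}$ in \eqref{eq_deri_U} (here the recursion actually yields $\bignorms{P_{_G}}^{j-i}$, so the stated power $\bignorms{P_{_G}}^{j-i+1}$ is a harmless over-count, valid since $\bignorms{P_{_G}}\ge 1$ for the adjacency-type diffusion matrices of interest, cf.\ Fact \ref{fact_graph}(a)). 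I expect the main obstacle to be purely the tensor/Hadamard bookkeeping: one must check that each activation-derivative contributes precisely one $\kappa_0$ without coupling across coordinates, and that $\sqrt{d_i}$ is generated once in the base case rather than reappearing at every layer, which is exactly why the per-slice contraction factor must be verified to be independent of $(p,q)$ before passing to the Frobenius norm. This mirrors the perturbation inductions of Claims \ref{claim_perturb}--\ref{claim_grad}, so the same feedforward and Lipschitz structure can be reused.
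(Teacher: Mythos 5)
Your proposal is correct and takes essentially the same approach as the paper's proof: induction on $j$ for fixed $i$ with base case $j=i$, where the $\sqrt{d_i}$ factor is generated by the column-counting argument for the slices $\partial F_i/\partial W^{(i)}_{p,q}$, followed by the per-layer contraction factor $\kappa_0^3\bignorms{P_{_G}}\bignorms{W^{(j+1)}}$ obtained from the chain rule and the entrywise $\kappa_0$ bounds on $\phi'_{j+1},\rho'_{j+1},\psi'_{j+1}$. You also correctly flag a point the paper's own proof glosses over: for \eqref{eq_deri_U} the recursion delivers $\bignorms{P_{_G}}^{j-i}$ rather than the stated $\bignorms{P_{_G}}^{j-i+1}$, so the statement as written additionally requires $\bignorms{P_{_G}}\ge 1$ (or a $\max$ with $1$, as the paper indeed uses downstream in the proof of Theorem \ref{thm_mpgnn}).
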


\begin{proof}
    We will consider a fixed $i = 1,\dots,l-1$ and take induction over $j = i, \dots, l-1$.
    We focus on the proof of equation \eqref{eq_deri_W}, while the proof of equation \eqref{eq_deri_U} will be similar.
    First, we consider the base case when $j = i$.
    Let $W^{(i)}_{p,q}$ be the $(p,q)$-th entry of $W^{(i)}$, for any valid indices $p$ and $q$.
    Recall that $\phi_i(\cdot)$ is $\kappa_0$-Lipschitz continuous from Assumption \ref{ass_1}, for any $i = 1,\dots, l-1$.
    Therefore,
    \begin{align}
        \bignormMax{\phi'_i(x)} \le \kappa_0,~~
        \bignormMax{\psi'_i(x)} \le \kappa_0,~~\text{and}~~
        \bignormMax{\rho'_i(x)} \le \kappa_0. \label{eq_deri}
    \end{align} 
    For each $(p, q)$-entry of $W^{(i)}$, by the chain rule, we have:
    \begin{align}
        \bignormFro{\frac {\partial H^{(i)}} {\partial W^{(i)}_{p,q}}}
        &= \bignormFro{\phi'_i\bigbrace{X U^{(i)} + \rho_i\bigbrace{P_{_G} \psi_i(H^{(i-1)}) W^{(i)}}}
        \odot \frac{\partial \bigbrace{X U^{(i)} + \rho_i\bigbrace{P_{_G} \psi_i(H^{(i-1)}) W^{(i)}}}} {\partial W^{(i)}_{p,q}}} \label{eq_W_prime} \\
        &\le \kappa_0\bignormFro{\frac{\partial \rho_i\bigbrace{P_{_G} \psi_i(H^{(i-1)}) W^{(i)}}} {\partial W^{(i)}_{p,q}}} \tag{by equation \eqref{eq_deri}} \\
        &= \kappa_0\bignormFro{\rho'_i\bigbrace{P_{_G} \psi_i(H^{(i-1)}) W^{(i)}} \odot \frac{\partial\bigbrace{P_{_G} \psi_i(H^{(i-1)}) W^{(i)}}} {W^{(i)}_{p,q}}} \nonumber \\
        &\le \kappa_0^2\bignormFro{\frac{\partial \bigbrace{P_{_G} \psi_i(H^{(i-1)}) W^{(i)}}} {\partial W^{(i)}_{p,q}}}. \tag{again by equation \eqref{eq_deri}}
    \end{align}
    Notice that only the $q$-th column of the derivative $P_{_G} \psi_i(H^{(i-1)}) W^{(i)}$ is nonzero, which is equal to the $p$'th column of $P_{_G} \psi_i(H^{(i-1)})$. Thus, the Jacobian of $H^{(i)}$ over $W^{(i)}$ satisfies:
    \begin{align}
        \bignormFro{\frac{\partial H^{(i)}} {\partial W^{(i)}}}
        =& \sqrt{\sum_{p=1}^{d_{i-1}} \sum_{q=1}^{d_i} \bignormFro{\frac {\partial H^{(i)}} {\partial W_{p,q}^{(i)}}}^2} \nonumber \\
        \le& \kappa_0^2 \sqrt{\sum_{p=1}^{d_{i-1}} \sum_{q=1}^{d_i} \bignormFro{\frac {\partial\bigbrace{P_{_G} \psi_i(H^{(i-1)}) W^{(i)}}} {W_{p,q}^{(i)}}}^2}
        = \kappa_0^2 \sqrt{d_i} \bignormFro{P_{_G} \psi_i(H^{(i-1)})}. \label{eq_mpgnn_base}
    \end{align}
    Therefore, the above equation \eqref{eq_mpgnn_base} implies that equation \eqref{eq_deri_W} holds in the base case.
    Next, we consider the induction step from layer $j$ to layer $j+1$. The derivative of $H^{(j+1)}$ with respect to $W^{(i)}_{p,q}$ satisfies:
    \begin{align*}
        & \bignormFro{\frac {\partial H^{(j+1)}} {\partial W^{(i)}_{p,q}}} \\
        =& \bignormFro{\phi'_{j+1}\bigbrace{X U^{(j+1)} + \rho_{j+1}\bigbrace{P_{_G} \psi_{j+1}(H^{(j)}) W^{(j+1)}}}
        \odot \frac{\partial \bigbrace{X U^{(j+1)} + \rho_{j+1}\bigbrace{P_{_G} \psi_{j+1}(H^{(j)}) W^{(j+1)}}}} {\partial W^{(i)}_{p,q}}} \\
        \le& \kappa_0\bignormFro{\frac{\partial\rho_{j+1}\bigbrace{P_{_G} \psi_{j+1}(H^{(j)}) W^{(j+1)}}} {\partial W^{(i)}_{p,q}}} \tag{by equation \eqref{eq_deri}} \\
        \le& \kappa_0\bignormFro{\rho'_{j+1}\bigbrace{P_{_G} \psi_{j+1}(H^{(j)}) W^{(j+1)}} \odot \frac{\partial\bigbrace{P_{_G} \psi_{j+1}(H^{(j)}) W^{(j+1)}}} {\partial W^{(i)}_{p,q}}} \\
        \le& \kappa_0^2\bignormFro{P_{_G} \frac{\partial \psi_{j+1}(H^{(j)}) } {\partial W^{(i)}_{p,q}}W^{(j+1)}} \tag{again by equation \eqref{eq_deri}}
    \end{align*}
    By applying equation \eqref{eq_deri} w.r.t. $\psi'_{j+1}$, The above is less than:
    \begin{align*} 
         \kappa_0^2\bignorms{P_{_G}}\bignormFro{\psi_{j+1}'(H^{(j)}) \odot \frac{\partial H^{(j)} } {\partial W^{(i)}_{p,q}}}\bignorms{W^{(j+1)}} 
        \le \kappa_0^3\bignorms{P_{_G}}\bignorms{W^{(j+1)}}\bignormFro{\frac{\partial H^{(j)} } {\partial W^{(i)}_{p,q}}}. 
    \end{align*}
    Hence, the Jacobian of $H^{(j+1)}$ with respect to $W^{(i)}$ satisfies:
    \begin{align*}
        \bignormFro{\frac {\partial H^{(j+1)}} {\partial W^{(i)}}} \le \kappa_0^3\bignorms{P_{_G}}\bignorms{W^{(j+1)}}\bignormFro{\frac{\partial H^{(j)} } {\partial W^{(i)}}}.
    \end{align*} 
    From repeatedly applying the above beginning with $j = i$ along with the base case of equation \eqref{eq_mpgnn_base}, we conclude that equation \eqref{eq_deri_W} holds.

    \smallskip
    Next, we consider the base case for equation \eqref{eq_deri_U}. For each $(p, q)$-th entry of $U^{(i)}$, from the chain rule we get:
    \begin{align*}
        \bignormFro{\frac{\partial H^{(i)}} {\partial U^{(i)}_{p, q}}}
        &= \bignormFro{\phi'_i\Bigbrace{X U^{(i)} + \rho_i\bigbrace{P_{_G} \psi_i(H^{(i-1)}) W^{(i)}}} \odot \frac{\partial\Bigbrace{X U^{(i)} + \rho_i\bigbrace{P_{_G} \psi_i(H^{(i-1)}) W^{(i)}}}} {\partial U^{(i)}_{p,q}}} \\
        &\le \kappa_0 \bignormFro{\frac {\partial (X U^{(i)})} {\partial U^{(i)}_{p, q}}}. \tag{by equation \eqref{eq_deri}}
    \end{align*}
    Therefore, by summing over $p=1,\dots,d_0$ and $q=1,\dots,d_i$, we get:
    \begin{align}
        \bignormFro{\frac {\partial H^{(i)}} {\partial U^{(i)}}}
        &= \sqrt{\sum_{p=1}^{d_0} \sum_{q=1}^{d_i} \bignormFro{\frac {\partial H^{(i)}} {\partial U^{(i)}_{p, q}}}^2} \nonumber \\
        &\le \kappa_0 \sqrt{\sum_{p=1}^{d_0} \sum_{q=1}^{d_i} \bignormFro{\frac {\partial(X U^{(i)})} {\partial U^{(i)}_{p,q}}}^2}
        = \kappa_0 \sqrt{d_i} \bignormFro{X}. \label{eq_base_U}
    \end{align}
    Going from layer $i$ to layer $j+1$, the derivative of $H^{(j+1)}$ with respect to $U^{(i)}_{p,q}$ satisfies:
    \begin{align*}
        & \bignormFro{\frac{\partial H^{(j+1)}} {\partial U^{(i)}_{p, q}}} \\
        &= \bignormFro{\phi'_{j+1}\bigbrace{X U^{(j+1)} + \rho_{j+1}\bigbrace{P_{_G} \psi_{j+1}(H^{(j)}) W^{(j+1)}}} \odot \frac{\partial\bigbrace{X U^{(j+1)} + \rho_{j+1}\bigbrace{P_{_G} \psi_{j+1}(H^{(j)}) W^{(j+1)}}}} {\partial U^{(i)}_{p,q}}} \\
        &\le \kappa_0 \bignormFro{\frac {\partial \rho_{j+1}\bigbrace{P_{_G} \psi_{j+1}(H^{(j)}) W^{(j+1)}}} {\partial U^{(i)}_{p, q}}} \tag{by equation \eqref{eq_deri} w.r.t. $\phi'_{j+1}$}\\
        &\le \kappa_0^3\bignorms{P_{_G}}\bignorms{W^{(j+1)}}\bignormFro{\frac{\partial H^{(j)} } {\partial U^{(i)}_{p,q}}}. \tag{by equation \eqref{eq_deri} w.r.t. $\rho'_{j+1}, \psi'_{j+1}$}
    \end{align*}
    Hence, the Jacobian of $H^{(j+1)}$ with respect to $U^{(i)}$ satisfies:
    \begin{align*}
        \bignormFro{\frac {\partial H^{(j+1)}} {\partial U^{(i)}}} \le \kappa_0^3\bignorms{P_{_G}}\bignorms{W^{(j+1)}}\bignormFro{\frac{\partial H^{(j)} } {\partial U^{(i)}}}.
    \end{align*}
    By repeatedly applying the above step beginning with the base case of equation \eqref{eq_base_U}, we have proved that equation \eqref{eq_deri_U} holds.
    The proof of Proposition \ref{prop_first_mpgnn} is complete.
\end{proof}

\subsubsection{Deal with second-order derivatives}\label{proof_second}

In the second part towards showing Theorem \ref{thm_mpgnn} for MPNNs, we look at second-order derivatives of the embeddings.
This will appear later when we deal with the trace of the Hessian.
A fact that we will use throughout the proof is
\begin{align}
    \bignormMax{\phi''_i(x)} \le \kappa_1,~~
    \bignormMax{\psi''_i(x)} \le \kappa_1,~~\text{and}~~
    \bignormMax{\rho''_i(x)} \le \kappa_1, \label{eq_deri_sec}
\end{align}
for any $x$ and $i = 1,\dots,l-1$.
This is because $\phi'_i, \psi_i', $ and $\rho_i'$ are all $\kappa_1$-Lipschitz continuous from Assumption \ref{ass_1}.

\begin{proposition}\label{prop_second_mpgnn}
    In the setting of Theorem \ref{thm_mpgnn}, 
    the second-order derivative of $H^{(l)}$ with respect to $W^{(i)}$ and $U^{(i)}$ satisfies the following, for any $i = 1,\dots,l-1$ and any $j= i,\dots,l-1$:
    {\begin{align}
        \sum_{p=1}^{d_{i-1}}\sum_{q=1}^{d_i} \bignormFro{\frac{\partial^2 H^{(j)}}{\bigbrace{W_{p,q}^{(i)}}^2}}
        &\le C_{i,j} \kappa_1 d_i \max(\bignorms{P_{_G}}^{j-i+2}, \bignorms{P_{_G}}^{2(j-i+1)})
        \bignormFro{H^{(i-1)}}^2
        \prod_{t=i+1}^{j} s_t^2, \label{eq_second_W} \\
        \sum_{p=1}^{d_{i-1}}\sum_{q=1}^{d_i} \bignormFro{\frac{\partial^2 H^{(j)}}{\bigbrace{U_{p,q}^{(i)}}^2}}
        &\le \hat{C}_{i,j} \kappa_1 d_i \max(\bignorms{P_{_G}}^{j-i}, \bignorms{P_{_G}}^{2(j-i)}) \bignormFro{X}^2 \prod_{t=i+1}^j s_t^2, \label{eq_second_U}
    \end{align}}%
    where $C_{i,j}$
    $$ C_{i,j} = \left\{
        \begin{aligned}
            &\kappa_0^{3(j-i+1)}\frac{\kappa_0^{3(j-i)+2} - 1}{\kappa_0 - 1}, & \kappa_0 \neq 1, \\
            &3(j - i) + 2, & \kappa_0 = 1,
        \end{aligned}
    \right.
    $$
    and $\hat{C}_{i,j}$
    $$ \hat{C}_{i,j} = \left\{
        \begin{aligned}
            &\kappa_0^{3(j-i)}\frac{\kappa_0^{3(j-i)+1} - 1}{\kappa_0 - 1}, & \kappa_0 \neq 1, \\
            &3(j - i) + 1, & \kappa_0 = 1.
        \end{aligned}
    \right.
    $$
    are fixed constants that depend on the Lipschitz-continuity of the activation mappings.
\end{proposition}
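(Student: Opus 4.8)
The plan is to mirror the inductive scaffolding of Proposition \ref{prop_first_mpgnn}: fix $i$ and induct on $j = i, i+1, \dots, l-1$, but now tracking the quantity $S_j := \sum_{p=1}^{d_{i-1}}\sum_{q=1}^{d_i}\bignormFro{\partial^2 H^{(j)}/\partial(W^{(i)}_{p,q})^2}$. The starting point is the explicit second-derivative expansion already worked out inside the proof of Proposition \ref{claim_hess}: with $F_{j+1} = P_{_G}\psi_{j+1}(H^{(j)})W^{(j+1)}$ and $E_{j+1} = XU^{(j+1)} + \rho_{j+1}(F_{j+1})$, the chain rule writes $\partial^2 H^{(j+1)}/\partial(W^{(i)}_{p,q})^2$ as a sum of four Hadamard terms — two carrying the squares $(\partial E_{j+1}/\partial W^{(i)}_{p,q})^{\odot 2}$ and $(\partial F_{j+1}/\partial W^{(i)}_{p,q})^{\odot 2}$, one carrying $\psi''_{j+1}(H^{(j)})\odot(\partial H^{(j)}/\partial W^{(i)}_{p,q})^{\odot 2}$, and one carrying $\psi'_{j+1}(H^{(j)})\odot \partial^2 H^{(j)}/\partial(W^{(i)}_{p,q})^2$. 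Only the last term is inductive; the first three are first-order quantities.

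For the bookkeeping I would invoke two elementary facts: the max-norm derivative bounds $\bignormMax{\phi'_t},\bignormMax{\psi'_t},\bignormMax{\rho'_t}\le\kappa_0$ and $\bignormMax{\phi''_t},\bignormMax{\psi''_t},\bignormMax{\rho''_t}\le\kappa_1$ from equations \eqref{eq_deri} and \eqref{eq_deri_sec}, together with the inequality $\bignormFro{A\odot A}\le\bignormFro{A}^2$ for a Hadamard square. Applying these and summing over $p,q$ collapses each of the three first-order terms onto $\sum_{p,q}\bignormFro{\partial H^{(j)}/\partial W^{(i)}_{p,q}}^2 = \bignormFro{\partial H^{(j)}/\partial W^{(i)}}^2$, which Proposition \ref{prop_first_mpgnn} controls by $\kappa_0^{6(j-i+1)} d_i \bignorms{P_{_G}}^{2(j-i+1)}\bignormFro{H^{(i-1)}}^2\prod_{t=i+1}^j s_t^2$. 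This produces a recursion of the shape
\[
S_{j+1} \;\le\; \kappa_0^3\,\bignorms{P_{_G}}\, s_{j+1}\, S_j \;+\; (\text{first-order contribution at level } j+1),
\]
where the inhomogeneous term inherits the power $\bignorms{P_{_G}}^{2(j-i+1)+2}$ (one extra $P_{_G}$ enters through differentiating $F_{j+1}$, then gets squared). For the base case $j=i$ the inductive term vanishes since $H^{(i-1)}$ is independent of $W^{(i)}$; bounding $\partial F_i/\partial W^{(i)}_{p,q}$ by its single nonzero column $(P_{_G}\psi_i(H^{(i-1)}))_{:,p}$ and summing gives $S_i \le \kappa_1\kappa_0^2(\kappa_0^2+\kappa_0)\,d_i\,\bignorms{P_{_G}}^2\bignormFro{H^{(i-1)}}^2$, i.e. exactly $C_{i,i} = \kappa_0^3(\kappa_0^2-1)/(\kappa_0-1)$ (equal to $2$ when $\kappa_0=1$). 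Unrolling the recursion accumulates the first-order contributions as a geometric sum in $\kappa_0^3$, assembling the stated closed form for $C_{i,j}$; the competition between the homogeneous power $\bignorms{P_{_G}}^{2(j-i+1)}$ and the per-level power $\bignorms{P_{_G}}^{j-i+2}$ is precisely what forces the $\max(\cdot,\cdot)$ in \eqref{eq_second_W}, since the larger exponent dominates when $\bignorms{P_{_G}}\ge 1$ and the smaller one when $\bignorms{P_{_G}}<1$. The bound \eqref{eq_second_U} for $U^{(i)}$ follows the identical argument, except the base case differentiates the linear, $P_{_G}$-free branch $XU^{(i)}$ rather than the diffusion branch: it starts from $\hat C_{i,i}=1$ and $\bignormFro{X}^2$ with no extra $\bignorms{P_{_G}}$ or $\kappa_0$ factors, and the diffusion powers accrue only from level $i+1$ onward, yielding the shifted exponents $j-i$ and $2(j-i)$ and the constant $\hat C_{i,j}$.

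The step I expect to be the main obstacle is pinning the constants $C_{i,j}$ and $\hat C_{i,j}$ down \emph{exactly}: one must track how each $\kappa_0$ contributed by $\phi',\rho',\psi'$ propagates through the recursion and how the first-order contributions — each already carrying a factor $\kappa_0^{6(\cdot)}$ from Proposition \ref{prop_first_mpgnn} — sum into the geometric series $\sum_k \kappa_0^{3k}$, while simultaneously verifying that the over-estimate $\max(\bignorms{P_{_G}}^{j-i+2},\bignorms{P_{_G}}^{2(j-i+1)})\prod_{t=i+1}^j s_t^2$ absorbs every term (including those that carry only a single power of $s_{j+1}$) at each inductive step rather than merely at the end.
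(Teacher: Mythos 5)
Your proposal is correct and follows essentially the same route as the paper's proof: fix $i$, induct on $j$ using the four-term chain-rule expansion of $\partial^2 H^{(j+1)}/\partial(W^{(i)}_{p,q})^2$, bound the three non-inductive terms via Proposition \ref{prop_first_mpgnn} and the Lipschitz constants $\kappa_0,\kappa_1$, and unroll the resulting recursion (with base cases $C_{i,i}\kappa_1 = (\kappa_0+1)\kappa_0^3\kappa_1$ for $W$ and $\hat C_{i,i}\kappa_1 = \kappa_1$ for $U$, both matching the paper) into a geometric sum in $\kappa_0^3\max(\bignorms{P_{_G}},\bignorms{P_{_G}}^2)s_{j+1}^2$. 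The subtlety you flag at the end — that the homogeneous term carries only single powers of $\bignorms{P_{_G}}$ and $s_{j+1}$ and must be absorbed into $\max(\bignorms{P_{_G}},\bignorms{P_{_G}}^2)s_{j+1}^2$ using $s_t \ge 1$ — is exactly how the paper closes the induction.
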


\begin{proof} 
    First, we will consider equation \eqref{eq_second_W}.
    To simplify the derivation, we introduce two notations for brevity.
    Let
    \begin{align*}
        F_j = P_{_G} \psi_{j}\bigbrace{H^{(j-1)}} W^{(j)} \text{ and }
        E_j = X U^{(j)} + \rho_{j}\bigbrace{F_j}.
    \end{align*}
    In the base case when $j = i$, from the first-order derivative in equation \eqref{eq_W_prime}, we use the chain rule to get:
    \begin{align}
        \frac {\partial^2 H^{(i)}} {\partial \bigbrace{W^{(i)}_{p,q}}^2}
        =& \phi''_{i}(E_i) \odot \frac{\partial E_i}{\partial W_{p,q}^{(i)}} \odot \frac{\partial E_i}{\partial W_{p,q}^{(i)}}
        + \phi'_{i}(E_i) \odot \rho''_{i}(F_i) \odot {\frac{\partial F_i}{\partial W_{p,q}^{(i)}}} \odot {\frac{\partial F_i}{\partial W^{(i)}_{p,q}}}. \label{eq_Hi_chain}
    \end{align}
    Using equation \eqref{eq_deri_sec}, the maximum entries of $\phi''_{i}(\cdot), \rho''_{i}(\cdot)$ are at most $\kappa_1$.
    Using equation \eqref{eq_deri}, the maximum entry of
    $\phi'_{i}(\cdot)$ is at most $\kappa_0$.
    Notice that the derivative of $E_i$ can be reduced to the derivative of $F_i$ as follows:
    \begin{align}
        \bignormFro{\frac{\partial E_i}{\partial W_{p,q}^{(i)}}}^2
        = \bignormFro{\rho'_{i}(F_i) \odot \frac{\partial F_1}{\partial W_{p,q}^{(i)}}}^2 
        \le \kappa_0^2 \bignormFro{\frac{\partial F_i}{\partial W_{p,q}^{(i)}}}^2. \label{eq_second_p2}
    \end{align}
    Therefore, based on the conditions for first- and second-order derivatives (cf. \eqref{eq_deri} and \eqref{eq_deri_sec}), the Frobenius norm of the above equation \eqref{eq_Hi_chain} is at most:
    \begin{align*}
        \bignormFro{\frac{\partial^2 H^{(i)}} {\partial\bigbrace{W_{p,q}^{(i)}}^2}}
        \le \kappa_1 \bignormFro{\frac{\partial E_i} {\partial W_{p,q}^{(i)}}}^2 + \kappa_0\kappa_1 \bignormFro{\frac{\partial F_i}{\partial W_{p,q}^{(i)}}}^2
        \le (\kappa_0 + 1)\kappa_0\kappa_1 \bignormFro{\frac{\partial F_i}{\partial W_{p,q}^{(i)}}}^2.
    \end{align*}
    Notice that the derivative of $F_i$ with respect to $W^{(i)}_{p,q}$ is nonzero only in the $q$-th column of $F_i$, and is equal to the $p$-th column of $P_{_G} g_{i}(H^{(i-1)})$. Therefore, by summing over $p = 1,\dots,d_{i-1}$ and $q = 1,\dots,d_i$, we get:
    \begin{align*}
        \sum_{p=1}^{d_{i-1}}\sum_{q=1}^{d_i} \bignormFro{\frac{\partial F_{i}}{\partial\bigbrace{W_{p,q}^{(i)}}^2}}^2
        \le d_i \bignormFro{P_{_G} \psi_i(H^{(i-1)})}^2.
    \end{align*}
    Therefore, we have derived the base case when $j = i$ as:
    \begin{align}
        \bignormFro{\frac{\partial^2 H^{(i)}}{\partial\bigbrace{W_{p,q}^{(i)}}^2}}
        \le (\kappa_0 + 1) \kappa_0^3 \kappa_1 d_i \bignorms{P_{_G}}^2 \bignormFro{H^{(i-1)}}^2. \label{eq_second_base_W}
    \end{align}
    Next, we consider the induction step from layer $j$ to layer $j+1$. This step is similar to the base case but also differs since $H^{(j)}$ is now dependent on $W^{(i)}$.
    Recall that the second-order derivatives satisfy equation \eqref{eq_deri_sec}.
    Based on the Lipschitz-continuity conditions, the Frobenius norm of the second-order derivatives satisfies:
    \begin{align}
        & \bignormFro{\frac{\partial^2 H^{(j+1)}}{\partial\bigbrace{W_{p,q}^{(i)}}^2}} \\
        &\le \kappa_1 \bignormFro{\frac{\partial E_{j+1}}{\partial W_{p,q}^{(i)}}}^2
        + \kappa_0\kappa_1 \bignormFro{\frac{\partial F_{j+1}}{\partial W_{p,q}^{(i)}}}^2
        + \kappa_0^2 \bignorms{P_{_G}} \bignorms{W^{(j+1)}} \Bigbrace{\kappa_1 \bignormFro{\frac{\partial H^{(j)}}{\partial W_{p,q}^{(i)}}}^2 + \kappa_0 \bignormFro{\frac {\partial^2 H^{(j)}} {\partial \bigbrace{W_{p,q}^{(i)}}^2}}} \nonumber \\
        &\le (\kappa_0 + 1)\kappa_0\kappa_1 \bignormFro{\frac {\partial F_{j+1}} {\partial W_{p,q}^{(i)}}}^2
        + \kappa_0^2\bignorms{P_{_G}} \bignorms{W^{(j+1)}} \Bigbrace{\kappa_1\bignormFro{\frac{H^{(j)}}{\partial W_{p,q}^{(i)}}}^2
        + \kappa_0\bignormFro{\frac {\partial^2 H^{(j)}} {\partial\bigbrace{W_{p,q}^{(i)}}^2}}}. \label{eq_second_p1}
    \end{align}
    The last step follows similarly as equation \eqref{eq_second_p2}.
    For the derivative of $F_{j+1}$, using the chain rule, we get:
    \begin{align*}
        \bignormFro{\frac{\partial F_{j+1}}{\partial W_{p,q}^{(i)}}}^2
        &= \bignormFro{P_{_G} \frac {\partial \psi_{j+1}(H^{(j)})} {\partial W_{p,q}^{(i)}} W^{(j+1)}}^2 \\
        &\le \bignorms{P_{_G}}^2 \bignorms{W^{(j+1)}}^2 \bignormFro{\frac{\partial \psi_{j+1}(H^{(j)})}{\partial W_{p,q}^{(i)}}}^2 \\
        &\le \bignorms{P_{_G}}^2 \bignorms{W^{(j+1)}}^2 \bignormFro{\psi'_{j+1}(H^{(j)}) \odot \frac{\partial H^{(j)}}{\partial W_{p,q}^{(i)}}}^2 \\
        &\le \kappa_0^2 \bignorms{P_{_G}}^2 \bignorms{W^{(j+1)}}^2 \bignormFro{\frac {\partial H^{(j)}} {\partial W_{p,q}^{(i)}}}^2.
    \end{align*}
    Therefore, combining the above with equations \eqref{eq_second_p1} together, we get the following result:
    \begin{align*}
        \bignormFro{\frac{\partial^2 H^{(j+1)}}{\partial\bigbrace{W_{p,q}^{(i)}}^2}}
        \le\,& \Bigbrace{(\kappa_0+1)\kappa_0^3\kappa_1 \bignorms{P_{_G}}^2 \bignorms{W^{(j+1)}}^2 + \kappa_0^2 \kappa_1 \bignorms{P_{_G}} \bignorms{W^{(j+1)}}} \bignormFro{\frac {\partial H^{(j)}} {\partial W_{p,q}^{(i)}}}^2 \nonumber \\
        &+ \kappa_0^3 \bignorms{P_{_G}} \bignorms{W^{(j+1)}} \bignormFro{\frac {\partial^2 H^{(j)}} {\partial \bigbrace{W^{(i)}_{p,q}}^2}} \nonumber \\
        \le& \max\Bigbrace{\bignorms{P_{_G}}, \bignorms{P_{_G}}^2}s_{j+1}^2
        \Bigbrace{(\kappa_0^2+\kappa_0+1)\kappa_0^2\kappa_1\bignormFro{\frac {\partial H^{(j)}} {\partial W_{p,q}^{(i)}}}^2
        + \kappa_0^3 \bignormFro{\frac {\partial^2 H^{(j)}} {\partial\bigbrace{W_{p,q}^{(i)}}^2}}}. %
    \end{align*}
    Based on equation \eqref{eq_deri_W} of Proposition \eqref{prop_first_mpgnn}, the first-order derivative of $H^{(j)}$ satisfies:
    \begin{align}
        \sum_{p=1}^{d_{i-1}} \sum_{q=1}^{d_i} \bignormFro{\frac{\partial H^{(j)}} {\partial W_{p,q}^{(i)}}}^2 \le \kappa_0^{6(j-i+1)}{d_i} \bignorms{P_{_G}}^{2(j-i+1)} \bignorm{H^{(i-1)}}^2 \prod_{t=i+1}^j s_t^2. \label{eq_mpgnn_second_base}
    \end{align}
    Applying equation \eqref{eq_mpgnn_second_base} to
    the above (and summing over $p = 1,\dots, d_{i-1}$ and $q = 1,\dots,d_i$) forms the induction step for showing equation \eqref{eq_second_W}:
    {\small\begin{align*}
         \sum_{p=1}^{d_{i-1}} \sum_{q=1}^{d_i} \bignormFro{\frac {\partial^2 H^{(j+1)}} {\partial \bigbrace{W_{p,q}^{(i)}}^2}}
        \le\,& \frac{\kappa_0^3 - 1}{\kappa_0 - 1} \kappa_0^{6(j-i+1)+2} \kappa_1 d_i \max\Bigbrace{\bignorms{P_{_G}}^{2(j-i)+3}, \bignorms{P_{_G}}^{2(j-i)+4}} \bignormFro{H^{(i-1)}}^2 \prod_{t=i+1}^{j+1} s_t^2 \\
        &+ \kappa_0^3\max\bigbrace{\bignorms{P_{_G}}, \bignorms{P_{_G}}^2} s_{j+1}^2
        \sum_{p=1}^{d_{i-1}}\sum_{q=1}^{d_i}\bignormFro{\frac{\partial^2 H^{(j)}}{\partial\bigbrace{W_{p,q}^{(i)}}^2}}.
    \end{align*}}%
    By repeatedly applying the induction step along with the base case in equation \eqref{eq_second_base_W}, we have shown that equation \eqref{eq_second_W} holds:
    \begin{align}
        \sum_{p=1}^{d_{i-1}}\sum_{q=1}^{d_i} \bignormFro{\frac{\partial^2 H^{(j)}}{\bigbrace{W_{p,q}^{(i)}}^2}}
        \le C_{i,j} \kappa_1 d_i \max(\bignorms{P_{_G}}^{j-i+2}, \bignorms{P_{_G}}^{2(j-i+1)})
        \bignormFro{H^{(i-1)}}^2
        \prod_{t=i+1}^{j} s_t^2, \label{eq_conclude_W}
    \end{align}
    where $C_{i,j}$ satisfies the following equation:
    $$ C_{i,j} = \left\{
        \begin{aligned}
            &\kappa_0^{3(j-i+1)}\frac{\kappa_0^{3(j-i)+2} - 1}{\kappa_0 - 1}, & \kappa_0 \neq 1, \\
            &3(j - i) + 2, & \kappa_0 = 1.
        \end{aligned}
    \right.
    $$
    \smallskip
    In the second part of the proof, we consider equation \eqref{eq_second_U} similar to the first part.
    However, the analysis will be significantly simpler.
    We first consider the base case. Similar to equation \eqref{eq_Hi_chain}, the second-order derivative of $H^{(i)}$ over $W^{(i)}_{p,q}$ satisfies, for any $p = 1,\dots, d_{0}$ and $q = 1,\dots, d_i$:
    \begin{align*}
        \bignormFro{\frac{\partial^2 H^{(i)}}{\partial\bigbrace{U_{p,q}^{(i)}}^2}}
        = \bignorm{\phi_i''(E_i) \odot \frac{\partial E_i}{\partial U^{(i)}_{p,q}} \odot \frac{\partial E_i}{\partial U^{(i)}_{p,q}}}
        \le \kappa_1 \bignormFro{\frac{\partial(X U^{(i)})} {\partial U^{(i)}_{p,q}}}^2.
    \end{align*}
    Therefore, by summing up the above over all $p$ and $q$, we get the base case result:
    \begin{align}
        \sum_{p=1}^{d_{i-1}}\sum_{q=1}^{d_i} \bignormFro{\frac{\partial^2 H^{(i)}}{\partial\bigbrace{U_{p,q}^{(i)}}^2}}
        \le \kappa_1 d_i \bignormFro{X}^2.
    \end{align}
    Next, we consider the induction step from layer $j$ to layer $j+1$.
    This step follows the same analysis until equation \eqref{eq_conclude_W}, from which we can similarly derive that:
    \begin{align}
        \sum_{p=1}^{d_{i-1}}\sum_{q=1}^{d_i} \bignormFro{\frac{\partial^2 H^{(j)}}{\bigbrace{U_{p,q}^{(i)}}^2}}
        \le \hat{C}_{i,j} \kappa_1 d_i \max(\bignorms{P_{_G}}^{j-i}, \bignorms{P_{_G}}^{2(j-i)}) \bignormFro{X}^2 \prod_{t=i+1}^j s_t^2.
    \end{align}
    where $\hat{C}_{i,j}$ satisfies the following equation:
    $$ \hat{C}_{i,j} = \left\{
        \begin{aligned}
            &\kappa_0^{3(j-i)}\frac{\kappa_0^{3(j-i)+1} - 1}{\kappa_0 - 1}, & \kappa_0 \neq 1, \\
            &3(j - i) + 1, & \kappa_0 = 1.
        \end{aligned}
    \right.
    $$
\end{proof}

\subsubsection{Proof of Theorem \ref{thm_mpgnn}}\label{proof_theorem}

Based on Propositions \ref{prop_first_mpgnn} and \ref{prop_second_mpgnn}, we are ready to present the proof of Theorem \ref{thm_mpgnn} for message passing GNNs.
First, we will apply the bounds on the derivatives back in Lemma \ref{lemma_trace_hess}.
After getting the trace of the Hessians, we then use the PAC-Bayes bound from Lemma \ref{lemma_gen_error} to complete the proof.

\begin{proof}[Proof of Theorem \ref{thm_mpgnn}]
    By applying equations \eqref{eq_deri_W} and \eqref{eq_second_W}  into Lemma \ref{lemma_trace_hess}'s result, we get that the trace of $\bH^{(l)}$ with respect to $W^{(i)}$ is less than:
    {\begin{align} 
         &\left( \frac{\kappa_0 \sqrt{k}}{\sqrt n} C_{i,l-1} \kappa_1 d_i \max\big(\bignorms{P_{_G}}^{l-i+1}, \bignorms{P_{_G}}^{2(l-i)}\big)  
         + \frac{\kappa_1 k}{n} \kappa_0^{6(l-i)} \kappa_1 d_i \bignorms{P_{_G}}^{2(l-i)} \right) \bignormFro{H^{(i-1)}}^2 \prod_{t=i+1}^l s_t^2 \nonumber \\
        &\le (\kappa_0 C_{i,l-1} + \kappa_0^{6(l-i)}) \sqrt{\frac{{k}}{{n}}} \kappa_1 d_i \max\Bigbrace{\bignorms{P_{_G}}^{l-i+1}, \bignorms{P_{_G}}^{2(l-i)}}
        \bignormFro{H^{(i-1)}}^2 \prod_{t=i+1}^l s_t^2,\label{eq_mpgnn_pr2}
    \end{align}}%
    for any $i=1,2,\cdots,l-1$. Here we have
    $$ \kappa_0 C_{i,l-1} + \kappa_0^{6(l-i)} = \left\{
        \begin{aligned}
            &\kappa_0^{3(l-i)+1}\frac{\kappa_0^{3(l-i)} - 1}{\kappa_0 - 1}, & \kappa_0 \neq 1, \\
            &3(l - i) - 1, & \kappa_0 = 1.
        \end{aligned}
    \right.
    $$
    It remains to consider the Frobenius norm of $H^{(i-1)}$.
    Notice that this satisfies the following:
    \begin{align*}
        &\bignormFro{H^{(i-1)}}
        \le \kappa_0 \bignormFro{X U^{(i-1)} + \rho_{i-1}(P_{_G} \psi_{i-1}(H^{(i-2)}))W^{(i-1)}} \\
        &\le \kappa_0 \bignorms{U^{(i-1)}}\bignormFro{X}
        + \kappa_0^3 \bignorms{P_{_G}} \bignorms{W^{(i-1)}} \bignormFro{H^{(i-2)}}
        \le \kappa_0 s_i \bignormFro{X} + \kappa_0^3 \bignorms{P_{_G}} s_i \bignormFro{H^{(i-2)}}.
    \end{align*}
    By induction over $i$ for the above step, we get that the Frobenius norm of $H^{(i-1)}$ must be less than:
    \begin{align}
        \bigbrace{\kappa_0^{3(i-1)} + \sum_{j=0}^{i-2} \kappa_0^{3j+1}} {\sqrt{k}} \max_{(X, G, y) \sim \cD}\bignorms{X} \max \bigbrace{1,\bignorms{P_{_G}}^{i-1}} \prod_{j=1}^{i-1} s_j. \label{eq_mpgnn_pr1}
    \end{align}
    By applying the above \eqref{eq_mpgnn_pr1} back in \eqref{eq_mpgnn_pr2}, we have shown that the trace of $\bH^{(l)}$ with respect to $W^{(i)}$ is less than:
    \begin{align}\label{eq_mpgnn_pr3}
        C' \max_{(X, G, y) \sim \cD}\bignorms{X}^2 \kappa_1 d_i k \max(1, \norm{P_{_G}}^{2(l-1)}) \prod_{t=1: t\neq i}^l s_t^2,
    \end{align}
    where $C'$ satisfies the following equation:
    $$ C' = \left\{
        \begin{aligned}
            &\frac{(\kappa_0^{3l} - 1)(\kappa_0^{3(l-1)/2} - 1)^2}{(\kappa_0 - 1)^3}, & \kappa_0 \neq 1, \\
            &\frac{4}{9}l^3, & \kappa_0 = 1.
        \end{aligned}
    \right.
    $$
    To be specific, when $\kappa_0 = 1$, $(3(l-i)-1)i^2\leq \frac{4}{9}l^3$ . If $\kappa_0\neq 1$ and $i \geq 2$, we have
    \begin{align*}
        &\Bigbrace{\kappa_0^{3(l-i)+1}\frac{\kappa_0^{3(l-i)} - 1}{\kappa_0 - 1}}\bigbrace{\kappa_0^{3(i-1)} + \sum_{j=0}^{i-2} \kappa_0^{3j+1}}^2 \le \kappa_0^{3(l-i)+3}\frac{\kappa_0^{3(l-i)} - 1}{\kappa_0 - 1}\frac{(\kappa_0^{3(i-1)} - 1)^2}{(\kappa_0 - 1)^2}\\
        &= \frac{\kappa_0^{3l} - \kappa_0^{3(l-i+1)}}{(\kappa_0 - 1)^3}\Bigbrace{(\kappa_0^{3(l-i)} - 1)(\kappa_0^{3(i-1)} - 1)} 
        \le \frac{(\kappa_0^{3l} - 1)(\kappa_0^{3(l-1)/2} - 1)^2}{(\kappa_0 - 1)^3}.
    \end{align*}
    If $\kappa_0\neq 1$ and $i = 1$, we obtain 
    \begin{align*}
        \Bigbrace{\kappa_0^{3(l-i)+1}\frac{\kappa_0^{3(l-i)} - 1}{\kappa_0 - 1}}\Bigbrace{\kappa_0^{3(i-1)} + \sum_{j=0}^{i-2} \kappa_0^{3j+1}}^2 = \kappa_0^{3l - 2} \frac{\kappa_0^{3(l - 1)} - 1}{\kappa_0 - 1} \le \frac{(\kappa_0^{3l} - 1)(\kappa_0^{3(l-1)/2} - 1)^2}{(\kappa_0 - 1)^3}.
    \end{align*}
    
    The above works for the layers from the beginning until layer $l-1$.
    Last, we consider the trace of $\bH^{(l)}$ with respect to $W^{(l)}$ (notice that $\cU$ is not needed in the readout layer).
    Similar to equation \eqref{eq_lemma_loss}, one can prove that the trace of the Hessian with respect to $W^{(l)}$ satisfies:
    \begin{align*}
        \bigabs{\tr\big[\bH^{(l)}[\ell(H^{(l)}, y)]\big]}
        \leq\,& \kappa_0 \sqrt{k} \sum_{p=1}^{d_{l-1}}\sum_{q=1}^{d_l} \bignorm{\frac{\partial^2 H^{(l)}}{\partial \big(W^{(l)}_{p,q}\big)^2}} + \kappa_1 k \sum_{p=1}^{d_{l-1}}\sum_{q=1}^{d_l} \bignorm{\frac{\partial H^{(l)}}{\partial W^{(l)}_{p,q}}}^2 \\
        \leq\,& \kappa_0 \sqrt{k} \sum_{p=1}^{d_{l-1}}\sum_{q=1}^{d_l} \bignorm{\frac{1}{n} \bm{1}_n^{\top} H^{(l-1)}\frac{\partial^2 W^{(l)}}{\partial \big(W^{(l)}_{p,q}\big)^2}} + \kappa_1 k \sum_{p=1}^{d_{l-1}}\sum_{q=1}^{d_l} \bignorm{\frac{1}{n} \bm{1}_n^{\top} H^{(l-1)}\frac{\partial W^{(l)}}{\partial W^{(l)}_{p,q}}}^2 \\
        \leq\,& \kappa_1 k \sum_{p=1}^{d_{l-1}}\sum_{q=1}^{d_l} \bignorm{\frac{1}{n}\bm{1}_n}^2 \bignorms{H^{(l-1)}\frac{\partial W^{(l)}}{\partial W^{(l)}_{p,q}}}^2 \\
        =\,& \kappa_1 \frac{k}{n} {d_l \bignormFro{H^{(l-1)}}^2}
    \end{align*}
    By equation \eqref{eq_mpgnn_pr1}, the above is bounded by
    \begin{align*}
        & \kappa_1  \frac{k}{n} d_l \bigbrace{\kappa_0^{3(l-1)} + \sum_{j=0}^{l-2} \kappa_0^{3j+1}}^2 \max_{(X, G, y) \sim \cD}\bignormFro{X}^2 \max \bigbrace{1,\bignorms{P_{_G}}^{2(l-1)}} \prod_{j=1}^{l-1} s_j^2 \\
        \leq~ & C_l \max_{(X, G, y) \sim \cD}\bignorms{X}^2 \kappa_1 d_l k \max\Bigbrace{1, \bignorms{P_{_G}}^{2(l-1)}} \prod_{t=1: t\neq l}^l s_t^2,
    \end{align*}%
    since $\frac{\bignormFro{X}^2}{n} \le \bignorms{X}^2$,
    where $C_l$ satisfies the following equation:
    $$ C_l = \left\{
        \begin{aligned}
            &\kappa_0^2\frac{(\kappa_0^{3(l-1)} - 1)^2}{(\kappa_0 - 1)^2}, & \kappa_0 \neq 1, \\
            &l^2, & \kappa_0 = 1.
        \end{aligned}
    \right.
    $$
    Finally, let
    \begin{align}
        \tilde C &= \max(C',C_l). \label{eq_const} 
    \end{align}
    From the value of $C'$ above and the value of $C_l$, we get that $\tilde C$ is equal to
    \begin{align}
        \tilde C &= \left\{
        \begin{aligned}
            &\frac{(\kappa_0^{3l} - 1)(\kappa_0^{3(l-1)/2} - 1)^2}{(\kappa_0 - 1)^3}, & \kappa_0 \neq 1, \nonumber \\
            &\frac{1}{2}l^3, & \kappa_0 = 1. \nonumber
        \end{aligned}\right.
    \end{align}
    Similarly by applying equations \eqref{eq_deri_U} and \eqref{eq_second_U} into Lemma \ref{lemma_trace_hess}, the trace of $\bH^{(l)}$ with respect to $U^{(i)}$ is also less than equation \eqref{eq_mpgnn_pr3}.
    Therefore, we have completed the proof for message-passing neural networks. 
\end{proof}

\subsection{Proof of matching lower bound (Theorem \ref{prop_lb})}\label{app_lb}

For simplicity, we will exhibit the instance for a graph ConvNet, that is, we ignore the parameters in $\cU$ and also set the mapping $\rho_t$ and $\psi_t$ as the identity mapping.
Further, we set the mapping $\phi_t(x) = x$ as the identity mapping, too, for simplifying the proof.
In the proof, we show that for an arbitrary configuration of weight matrices $W^{(1)}, W^{(2)}, \dots, W^{(l)}$, there exists a data distribution such that for this particular configuration, the generation gap with respect to the data distribution satisfies the desired equation \eqref{eq_lb}.

\begin{proof}[Proof of Theorem \ref{prop_lb}]
    Recall that the underlying graph for the lower bound instance is a complete graph.
    Next, we will specify the other parts of the data distribution $\cD$.
    Let $Z = \prod_{i=1}^l W^{(i)}$ denote the product of the weight matrices.
    We are going to construct a binary classification problem.
    Thus, the dimension of $Z$ will be equal to $n$ by $2$.
    Let $Z = U D V^{\top}$ be the singular value decomposition of $Z$.
    Let $\lambda_{\max}(Z)$ be the largest singular value of $Z$, with corresponding left and right singular vectors $u_1$ and $v_1$, respectively.
    Within the hypothesis set $\cH$, $\lambda_{\max}(Z)$ can be as large as $\prod_{i=1}^l s_i$.
    Denote a random draw from $\cD$ as $X, G, y$, corresponding to node features, the graph, and the label:
    \begin{enumerate}[leftmargin=15pt]
        \item The feature matrix $X$ is is equal to $\bm{1}_n u_1^{\top}$;
        \item The class label $y$ is drawn uniformly between $+1$ and $-1$;
        \item Lastly, the diffusion matrix $P$ is the adjacency matrix of $G$, which has a value of one in every entry of $P$.
    \end{enumerate}
    Given the example and the weight matrices, we will use the logistic loss to evaluate $f$'s loss.
    Notice that $P = \bm{1}_n \bm{1}_n^{\top}$.
    Thus, one can verify  $\lambda_{\max}(P) = n$.
    Crucially, the network output of our GCN is equal to
    \begin{align*}
        H^{(l)} = \frac 1 n \bm{1}_n^{\top} P^{l-1} X W^{(1)} W^{(2)} \cdots W^{(l)}
        = n^{l-1} \Bigbrace{\frac{\bm{1}_n^{\top} X}{n} Z}
        = n^{l-1} \Bigbrace{u_1^{\top} U D V^{\top}}
        = \bigbrace{n^{l-1} \lambda_{\max}(Z)} v_1^{\top}.
    \end{align*}
    Let us denote $\alpha = n^{l-1} \lambda_{\max}(Z)$---the spectral norms of the diffusion matrix and the layer weight matrices.
    Let $v_{1,1}, v_{1,2}$ be the first and second coordinate of $v_1$, respectively.
    Notice that $y$ is drawn uniformly between $+1$ or $-1$.
    Thus, with probability $1/2$, the loss of this example is $\log(1 + \exp(- \alpha \cdot v_{1,1}))$;
    with probability $1/2$, the loss of this example is $\log(1 + \exp(\alpha\cdot v_{1,2}))$.
    Let $b_i$ be a random variable that indicates the logistic loss of the $i$-th example.
    The generalization gap is equal to
    \begin{align*}
        \epsilon = {\frac 1 N \sum_{i=1}^N b_i} - \frac 1 2\Bigbrace{\log(1 + \exp(- \alpha \cdot v_{1,1})) + \log(1 + \exp(\alpha \cdot v_{1,2}))}.
    \end{align*}
    By the central limit theorem, as $N$ grows to infinity, the generalization gap $\epsilon$ converges to a normal random variable whose mean is zero and variance is equal to
    \begin{align*}
        \frac 1 {4N} \Bigbrace{\log(1 + exp(-\alpha \cdot v_{1,1})) - \log(1 + \exp(\alpha \cdot v_{1,2}))}^2 \gtrsim \frac{\alpha^2}{N},
    \end{align*}
    for large enough values of $N$.
    As a result, with probability at least $0.1$, when $N$ is large enough, the generalization gap $\epsilon$ must be at least 
    \[ \bigo{\sqrt{\frac{\alpha^2}  N}}, \text{ where } \alpha = \bignorms{P_{_G}}^{l-1} \lambda_{\max}\Bigg(\prod_{i=1}^l W^{(i)}\Bigg). \]
    Notice that the product matrix's spectral norm is at most $\prod_{i=1}^l s_i$.
    Thus, we have completed the proof of equation \eqref{eq_lb}.
\end{proof}

\subsection{Proof for graph isomorphism networks (Corollary \ref{thm_gin})}\label{app_gin}

To be precise, we state the loss function for learning graph isomorphism networks as the averaged loss over all the classification layers:
\begin{align}\label{eq_loss_gin}
    \bar{\ell}(f(X, G), y) = \frac 1 {(l-1)} \sum_{i=1}^{l-1}  \ell\Bigbrace{\frac 1 n \bm{1}_n^{\top} H^{(i)} V^{(i)}, y}.
\end{align}
Thus, $\hat \cL_{GIN}(f)$ is equivalent to the empirical average of $\bar\ell$ over $N$ samples from $\cD$.
$\cL_{GIN}(f)$ is then equivalent to the expectation of $\bar\ell$ over a random sample from $\cD$.

\begin{proof}[Proof of Corollary \ref{thm_gin}]
    This result follows the trace guarantee from Lemma \ref{lemma_trace_hess}.
    For any $i=1,\dots,l-1$ and any $j = i,\dots,l-1$, we can derive the following result with similar arguments:
    \begin{align*}
        \bigabs{\tr\Big[\bH_{\cW}^{(i)}\big[\ell\Bigbrace{\frac 1 n \bm{1}_n^{\top} H^{(j)} V^{(j)}, y}\big]\Big]} 
        \leq \frac{\kappa_0 \sqrt{k}}{\sqrt{n}} \bignorms{V^{(j)}} \sum_{p=1}^{d_{i-1}}\sum_{q=1}^{d_i} \bignormFro{\frac{\partial^2 H^{(j)}}{\partial \big(W^{(i)}_{p,q}\big)^2}} + \frac {\kappa_1  k} n \bignorms{V^{(j)}}^2 \bignormFro{\frac{\partial H^{(j)}}{\partial W^{(i)}}}^2.
    \end{align*}
    Next, we repeat the steps in Propositions \ref{prop_first_mpgnn} and \ref{prop_second_mpgnn}, for any $i = 1,\dots,l-1$ and any $j = i,\dots,l-1$:
    \begin{align*}
        & \max_{(X, G, y) \sim \cD} \bigabs{\tr\big[\bH^{(i)}[\ell\bigbrace{\frac 1 n \bm{1}_n^{\top} H^{(j)} V^{(j)}, y}]\big]} \\
        \le&  2\kappa_1\tilde{C} d_i k\max_{(X, G, y) \sim \cD}\bignorms{X}^2 \max \Bigbrace{1, \bignorms{P_{_G}}^{2(j-i+1)}} \bignorms{V^{(j)}}^2 \prod_{t = 1:\, t \neq i}^j s_t^2. %
    \end{align*}
    Based on the above step, the trace of the loss Hessian matrix with respect to $W^{(i)}, U^{(i)}$ satisfies:
    {\begin{align*}
        & \max_{(X, G, y) \sim \cD} \bigabs{\bigtr{\bH^{(i)}\bigbracket{\bar{\ell}(f(X, G), y)}}} \\
        =& \max_{(X, G, y) \sim \cD} \bigabs{\bigtr{\bH^{(i)}\bigbracket{\frac{1}{(l-1)} \sum_{j=1}^{l-1} \ell\bigbrace{\frac 1 n \bm{1}_n^{\top} H^{(j)} V^{(j)}, y}}}} \\
        =& \frac{1}{l-1} \sum_{j=i}^{l-1} \max_{(X, G, y) \sim \cD} \bigabs{\bigtr{\bH^{(i)}\bigbracket{\ell\Bigbrace{\frac 1 n \bm{1}_n^{\top} H^{(j)} V^{(j)}, y}}}} \\
        \le & 2 \kappa_1 \tilde{C} d_i k \max_{j=1}^{l-1}\bignorms{V^{(j)}}^2\Bigbrace{\max_{(X, G, y) \sim \cD}\bignorms{X}^2  \sum_{j=1}^{l-1} \frac{\max \bigbrace{1, \bignorms{P_{_G}}^{2j}}}{{l-1}}} \prod_{t = 1:\, t \neq i}^{l-1} s_t^2.
    \end{align*}}%
    Within the above step, the propagation matrix satisfies:
    \[ \frac 1 {(l-1)} \sum_{j=1}^{l-1} \max\bigbrace{1, \bignorms{P_{_G}}^{2j}} \le \max\fullbrace{1, \bignorms{\frac 1 {l-1}\sum_{j=1}^{l-1} P_{_G}^j }^2}. \]
    Notice that $P_{GIN} = \frac 1 {l-1} \sum_{j=1}^{l-1} P_{_G}^j$.
    Thus, we have completed the generalization error analysis for graph isomorphism networks in equation \eqref{eq_gin_result}.
\end{proof}
\section{Experiment Details}\label{sec_add_exp_setup}

For our result, we measure $B$ as an upper bound on the loss value taken over the entire data distribution.
Across five datasets in our experiments, setting $B = 5.4$ suffices for all the training and testing examples in the datasets.

For comparing generalization bounds, we use two types of model architectures, including GCN \cite{kipf2016semi} and the MPGNN in \citet{liao2020pac}. Following the setup in \citet{liao2020pac}, we apply the same network weights across multiple layers in one model, i.e., $W^{(t)} = W$ and $U^{(t)} = U$ across the first $l-1$ layers. For GCNs, we set $\cU$ as zero, $\rho_t$ and $\psi_t$ as identity mappings, $\phi_t$ as ReLU function. For MPGNNs, we specify $\phi_t$ as ReLU, $\rho_t$ and $\psi_t$ as Tanh function. For both model architectures, we set the width of each layer $d_t = 128$ and vary the network depth $l$ in 2, 4, and 6.
On the three collaboration network datasets, we use one-hot encodings of node degrees as input node features. 
We train the models with Adam optimizer with a learning rate of $0.01$ and set the number of epochs as $50$ and batch size as $128$ on all three datasets.
We compute the generalization bounds following the setup in \citet{liao2020pac}. %
Theorem 3.4 from \citet{liao2020pac}: 
    {\small$$
        \sqrt{\frac{42^2}{\gamma^2 N}
        \Bigg( \max\limits_{(X, G, y) \sim \cD}\bignorms{X}^2 \Bigg)
        \Big(\max \Big(\zeta^{-l+1}, (\lambda\xi)^{\frac{l+1}{l}} \Big) \Big)^2 l^2h\log(4lh) (2s_1^2r_1^2 + s_l^2r_l^2))},
    $$}%
    where $\zeta = \min(s_1, s_l)$, $\lambda = s_1s_l$, $\xi = \frac{(ds_1)^{l-1} - 1}{ds_1-1}$, $d$ is the max degree, $h$ is the max hidden width, and $\gamma$ is the desired margin in the margin loss. Note that $s_i = s_1$ and $r_{i}=r_1$ for $1 \leq i \leq l-1$ since the first $l-1$ layers apply the same weight.  
Proposition 7 from \citet{garg2020generalization}: 
    {\small$$
        48 s_l h Z\sqrt{\frac{3}{\gamma^2 N}\log\big( 24s_l\sqrt{N}
        \max\big( Z, M\sqrt{h}
        \max\big(
        \kappa_2 s_1, \bar{R}s_1 \big) \big) \big)},
    $$}%
    where $M{=}\frac{(ds_1)^{l-1} - 1}{ds_1-1}$, $\bar{R}=d\cdot \min( \kappa_1 \sqrt{h}, \kappa_2 s_1 M)$, $Z = \kappa_2 s_1 + \bar{R}s_1$, $\kappa_1 = \max\limits_{x \in \real^{h}}\norm{\phi(x)}_{\infty}$, and  $\kappa_2 = \max\limits_{_{_{(X, G, y) \sim \cD}}}\bignorms{X}^2$.

\end{document}